\newtheorem{theorem}{Theorem}
\newtheorem{lemma}{Lemma}[section]
\newtheorem{assumption}{Assumption}
\newtheorem*{assumption*}{Assumption}
\DeclareMathOperator*{\argmax}{argmax}
\newcommand{\regret}{\mathrm{Regret}}
\begin{document}

\title{Learning Infinite-Horizon Average-Reward Linear Mixture MDPs of Bounded Span}

\author{%
  Woojin Chae \\
  KAIST \\
  \texttt{woojeeny02@kaist.ac.kr} \\
   \And
   Kihyuk Hong \\
  University of Michigan\\
  \texttt{kihyukh@umich.edu} \\
  \And
  Yufan Zhang \\
  University of Michigan \\
  \texttt{yufanzh@umich.edu} \\
 \AND
  Ambuj Tewari \\
  University of Michigan \\
  \texttt{tewaria@umich.edu} \\
  \And
  Dabeen Lee \\
  KAIST\\
  \texttt{dabeenl@kaist.ac.kr} \\
}

\maketitle

\begin{abstract}
This paper proposes a computationally tractable algorithm for learning infinite-horizon average-reward linear mixture Markov decision processes (MDPs) under the Bellman optimality condition. Our algorithm for linear mixture MDPs achieves a nearly minimax optimal regret upper bound of $\widetilde{\mathcal{O}}(d\sqrt{\mathrm{sp}(v^*)T})$ over $T$ time steps where $\mathrm{sp}(v^*)$ is the span of the optimal bias function $v^*$ and $d$ is the dimension of the feature mapping. Our algorithm applies the recently developed technique of running value iteration on a discounted-reward MDP approximation with clipping by the span. We prove that the value iteration procedure, even with the clipping operation, converges. Moreover, we show that the associated variance term due to random transitions can be bounded even under clipping. Combined with the weighted ridge regression-based parameter estimation scheme, this leads to the nearly minimax optimal regret guarantee.
\end{abstract}

\section{Introduction}\label{sec:introduction}

Reinforcement learning (RL) with function approximation has achieved remarkable success in a wide range of areas, including video games~\citep{atari-mnih}, Go~\citep{go-silver}, robotics~\citep{robotics}, and autonomous driving~\citep{driving}. Such empirical progress has stimulated endeavors to expand our theoretical understanding of RL with function approximation. 

As a first step toward establishing theoretical foundations, linear function approximation frameworks have received significant attention. The works on linear function approximation can be categorized based on how linearity is assumed on the structure of the underlying Markov decision process (MDP). There are largely four settings: MDPs with a low Bellman rank~\citep{jiang17}, linear MDPs~\citep{yang-wang-2019,jin-linear-2020}, linear mixture MDPs~\citep{pmlr-v120-jia20a,pmlr-v119-ayoub20a,pmlr-v139-zhou21a}, and MDPs with a low inherent Bellman error~\citep{pmlr-v119-zanette20a}.

Among them, notable progress has been made for linear mixture MDPs where the underlying transition kernel and the reward function are assumed to be parameterized as a linear function of some given feature mappings over state-action pairs or state-action-state triplets. For the finite-horizon setting, \cite{zhou-mixture-finite-optimal} developed a nearly minimax optimal algorithm. For the infinite-horizon discounted-reward case, \cite{pmlr-v139-zhou21a} established a regret lower bound. Shortly after this, \cite{zhou2021nearlyminimaxoptimalreinforcement} announced an algorithm with a regret upper bound matching the lower bound up to logarithmic factors. For the infinite-horizon average-reward regime, \cite{yuewu2022} showed a regret lower bound of $\Omega(d\sqrt{DT})$ from a communicating MDP instance with diameter $D$ where $d$ is the dimension of the feature map and $T$ is the horizon. Moreover, they designed an algorithm that achieves a regret upper bound of $\widetilde{\mathcal{O}}(d\sqrt{DT})$, establishing near minimax optimality for the class of communicating MDPs.

For the infinite-horizon average-reward setting, however, the class of communicating MDPs is perhaps not the most general set of MDPs for which a learning algorithm can guarantee a sublinear regret~\citep{tewari12}. Although the diameter captures the number of steps needed to recover from a bad state to a good state, the actual regret incurred while recovering is better represented by the span $\mathrm{sp}(v^*)$ of the optimal bias function $v^*$~\citep{pmlr-v80-fruit18a}. While the diameter is an upper bound on the span, it can be arbitrarily larger than the span, and in fact, a weakly communicating MDP can have a finite span but an infinite diameter~\citep{tewari12}.

The recent framework of \cite{he2024sampleefficient}, \texttt{LOOP}, can be applied to infinite-horizon average-reward linear mixture MDPs of bounded span. Although \texttt{LOOP} guarantees a sublinear regret upper bound that depends on the span, \texttt{LOOP} is hardly practical as it relies on solving a complex constrained optimization problem. This motivates the following question. 
\begin{center}
\emph{Does there exist a computationally efficient, nearly minimax optimal algorithm for learning infinite-horizon average-reward linear mixture MDPs of bounded span?}
\end{center}
This paper answers the question affirmatively. Let us summarize our contributions in \Cref{results} and as follows.
\begin{table*}[h!]
\caption{{Summary of Our Results on Regret Upper and Lower Bounds for Learning Linear Mixture MDPs}}\label{results}
\begin{center}
\begin{tabular}{c|c|c}
\toprule
{\bf \small Setting} & {\bf \small Regret Upper Bound}   & {\bf\small Regret Lower Bound}\\
\midrule
{\small Communicating (bounded diameter)} & {\small $\widetilde{\mathcal{O}}\left(d\sqrt{DT}\right)$ \hfill \citep{yuewu2022}} & {\small $\Omega\left(d\sqrt{DT}\right)$\hfill \citep{yuewu2022} }\\
\midrule
{\small Bellman optimality (bounded span)} & {\small $\widetilde{\mathcal{O}}\left(d \sqrt{\mathrm{sp}(v^*)T}\right)$\hfill \quad (\Cref{thm:ub}) }& {\small $\Omega\left(d \sqrt{\mathrm{sp}(v^*)T}\right)$\hfill \quad (\Cref{thm:lb})}\\
\bottomrule
\end{tabular}
\end{center}
\end{table*}
\begin{itemize}
   
    \item We propose a computationally efficient algorithm, upper-confidence linear kernel reinforcement learning with clipping (\texttt{UCLK-C}; \Cref{alg:UCLK-C}) that achieves a regret upper bound of $\widetilde{\mathcal{O}}(d\sqrt{\mathrm{sp}(v^*)T})$. 
     \item We deduce a regret lower bound of $\Omega(d\sqrt{\mathrm{sp}(v^*)T})$ by refining the regret lower bound analysis of~\cite{yuewu2022}. This shows that \texttt{UCLK-C} is nearly minimax optimal. 
  \item \texttt{UCLK-C} runs with a novel value iteration scheme by applying the clipping operation within discounted extended value iteration. The clipping operation due to~\cite{hong2024provablyefficientreinforcementlearning} is for controlling the span of intermediate value functions, which is crucial to provide a bounded regret for MDPs of bounded span. The clipping operation is much simpler to implement than constrained optimization-based frameworks to control the span. To run discounted extended value iteration, we approximate a given average-reward MDP by a discounted-reward MDP, as in \texttt{UCLK} due to~\cite{pmlr-v139-zhou21a}. 
  \item We prove that for linear mixture MDPs, the discounted extended value iteration converges even with clipping. Moreover, we show that the associated variance term due to random transitions can be bounded even under the clipping operation. Combined with the variance-aware weighted ridge regression-based parameter estimation scheme due to~\cite{yuewu2022}, we deduce our nearly minimax optimal regret upper bound.
\end{itemize}

The idea of approximating an average-reward MDP by a discounted-reward MDP has been adopted for the tabular case~\citep{pmlr-v119-wei20c,pmlr-v195-zhang23b} and used for learning linear MDPs~\citep{hong2024provablyefficientreinforcementlearning}. Clipping an optimistic value function estimator to control its size is already a common practice when designing an algorithm for finite-horizon and infinite-horizon discounted-reward MDPs. However, the clipping operation in our algorithm sets the threshold in a different way to control the span of value functions, not their sizes, and it was first introduced by \cite{hong2024provablyefficientreinforcementlearning}. For linear MDPs, due to the clipping operation, convergence of value iteration is not guaranteed. In contrast, for linear mixture MDPs, we establish convergence of value iteration even with clipping.

\section{Related Work}

\paragraph{Reinforcement Learning with Linear Function Approximation} 

Recently, there has been remarkable progress in reinforcement learning frameworks with linear function approximation~\citep{jiang17,yang-wang-2019,yang-wang-2020,jin-linear-2020,wang2021optimism,modi-linear-mixture,NEURIPS2018_5f0f5e5f,bilinear,pmlr-v99-sun19a,pmlr-v108-zanette20a,pmlr-v119-zanette20a,pmlr-v119-cai20d,pmlr-v120-jia20a,pmlr-v119-ayoub20a,pmlr-v132-weisz21a,pmlr-v139-zhou21a,zhou-mixture-finite-optimal,pmlr-v139-he21c,NEURIPS2022_ebba182c,Hu2022,He2023,Agarwal23,hong2024provablyefficientreinforcementlearning}. These works develop frameworks for MDP classes with certain linear structures. Among them, the most relevant to this paper are linear and linear mixture MDPs. Linear and linear mixture MDPs assume that the transition probability and the reward function are linear in some feature mappings over state-action pairs or state-action-state triplets. 
Although the two classes are closely related, one cannot be covered by the other~\citep{pmlr-v139-zhou21a}. %
For learning infinite-horizon average-reward linear MDPs, \cite{pmlr-v130-wei21d} developed several algorithms, including \texttt{FOPO}. \texttt{FOPO} achieves the best-known regret upper bound, but it needs to solve a fixed-point equation at each iteration, making the algorithm intractable. Recently,~\cite{hong2024provablyefficientreinforcementlearning} proposed a provably efficient algorithm that achieves the best-known regret upper bound for the setting. For learning infinite-horizon average-reward linear mixture MDPs, \cite{yuewu2022} developed an algorithm that is shown to be minimax optimal for the communicating case. \cite{he2024sampleefficient} proposed an algorithm for RL with general function approximation, \texttt{LOOP}, incorporating linear and linear mixture MDPs as subclasses. 

\paragraph{Infinite-Horizon Average-Reward Reinforcement Learning} The seminal work by \cite{NIPS2008_e4a6222c} pioneered algorithmic frameworks for model-based online learning of MDPs. They proved a regret lower bound of $\Omega(\sqrt{DSAT})$ where $S$ is the number of states and $A$ is the number of actions. Then they provided UCRL2 that runs with constructing some optimistic sets for estimating the transition probability and guarantees a regret bound of $\widetilde{\mathcal{O}}(DS\sqrt{AT})$. \cite{tewari12} considered the class of MDPs of bounded span, for which they proposed an algorithm that achieves a regret upper bound of $\widetilde{\mathcal{O}}(\mathrm{sp}(v^*)S\sqrt{AT})$. Since then, there has been a long line of work toward closing the gap between regret upper and lower bounds~\citep{Filippi10,pmlr-v83-talebi18a,pmlr-v80-fruit18a,fruit2020improvedanalysisucrl2empirical,pmlr-v119-bourel20a,NEURIPS2019_9e984c10,NIPS2017_3621f145,NIPS2017_51ef186e,pmlr-v97-lazic19a,pmlr-v130-wei21d,pmlr-v195-zhang23b,boone2024achievingtractableminimaxoptimal}. In particular,~\cite{pmlr-v80-fruit18a} and~\cite{NEURIPS2019_9e984c10} refined the regret lower bound to $\Omega(\sqrt{\mathrm{sp}(v^*)SAT})$. The first result with a regret upper bound matching the lower bound is due to \cite{NEURIPS2019_9e984c10}, but their algorithm is not tractable. Recently,~\cite{boone2024achievingtractableminimaxoptimal} developed a tractable algorithm that guarantees a regret bound of $\widetilde{\mathcal{O}}(\sqrt{\mathrm{sp}(v^*)SAT})$. 

\section{Preliminaries} \label{sec:preliminaries}

\paragraph{Notations} Given a vector $x\in\mathbb{R}^d$ and a positive semidefinite matrix $A\in\mathbb{R}^{d\times d}$, $\|x\|_2$ is the $\ell_2$-norm, $\|x\|_A = \sqrt{x^{\top}Ax}$, and $\|A\|_2$ is the spectral norm. %
For any positive integers $m,n$ with $m<n$, $[n]$ and $[m:n]$ denote $\{1,\ldots,n\}$ and $\{m,\ldots,n\}$, respectively.

\paragraph{Infinite-Horizon Average-Reward MDP}
We consider an MDP given by $M=(\mathcal{S}, \mathcal{A}, \mathbb{P}, r)$ where $\mathcal{S}$ is the state space, $\mathcal{A}$ is the action space, $\mathbb{P}(\cdot \mid s,a)$ specifies the transition probability function for state $s$ with taking action $a$, and $r(s,a)\in[0,1]$ is the reward from action $a$ at state $s$. A (stochastic) stationary policy is given as a mapping $\pi : \mathcal{S} \rightarrow \Delta(\mathcal{A})$ where $\Delta(\mathcal{A})$ is the set of probability measures on $\mathcal{A}$, and we use notation $\pi(a\mid s)$ for the probability of taking action $a$ at state $s$ under policy $\pi$. When $\pi$ is a deterministic policy, we write that $a=\pi(s)$ with abuse of notation where $a$ is the action with $\pi(a\mid s)=1$. At each time step $t$, an algorithm takes action $a_t$ at given state $s_t$, after which it observes the next state $s_{t+1}$ drawn from distribution $\mathbb{P}(\cdot \mid s_t,a_t)$. Then the cumulative reward over $T$ steps is $\sum_{t=1}^T r(s_t,a_t)$. Then the (long-term) average reward is given by $\liminf_{T\to\infty}\mathbb{E}[\sum_{t=1}^T r(s_t,a_t)]/T$, which can be maximized by a deterministic stationary policy~\cite[See][]{puterman2014markov}. We denote by $J^{\pi}(s)=\liminf_{T\to\infty}\mathbb{E}[\sum_{t=1}^T r(s_t,a_t)\mid s_1=s]/T$ the average reward of a stationary policy $\pi$ starting from initial state $s$.

In this paper, we focus on the class of MDPs satisfying the following form of Bellman optimality condition. We assume that there exist $J^*\in\mathbb{R}$, $v^*:\mathcal{S}\to\mathbb{R}$, and $q^*:\mathcal{S}\times\mathcal{A}\to\mathbb{R}$ such that for all $(s,a)\in\mathcal{S}\times \mathcal{A}$,
\begin{equation}\label{bellman-average}
\begin{aligned}
J^* + q^*(s,a)&= r(s,a) + \mathbb{E}_{s'\sim \mathbb{P}(\cdot\mid s,a)}\left[v^*(s')\right],\quad v^*(s)=\max_{a\in\mathcal{A}}q^*(s,a).
\end{aligned}
\end{equation}
Under the Bellman optimality condition, the optimal average reward $J^*(s):=\max_{\pi}J^{\pi}(s)$ is invariant with the initial state $s$, and $J^*(s)=J^*$ for any $s\in\mathcal{S}$ \citep{tewari12}. Moreover, the class of weakly communicating MDPs satisfies the condition \citep[See][]{puterman2014markov}. There indeed exist other general classes of MDPs with which the condition holds \citep[Section 3.3]{10.5555/516568}. For any function $h:\mathcal{S}\to\mathbb{R}$, we define its span as $\mathrm{sp}(h):=\max_{s\in\mathcal{S}}h(s)-\min_{s\in\mathcal{S}}h(s)$. Then following the literature on infinite-horizon average-reward RL~\citep{NIPS2008_e4a6222c}, we consider the regret function $\regret(T)= T\cdot J^* - \sum_{t=1}^T r(s_t,a_t)$ to assess the performance of an algorithm.

\paragraph{Discounted-Reward MDP} We also consider the discounted cumulative reward of a stationary policy $\pi$ given by $V^{\pi}(s)=\mathbb{E}[\sum_{t=1}^{\infty}\gamma^{t-1} r(s_t,a_t)\mid s_1=s]$ where $s$ is the initial state and $\gamma\in(0,1)$ is a discount factor. Similarly, we consider $Q^{\pi}(s,a)=\mathbb{E}[\sum_{t=1}^{\infty}\gamma^{t-1} r(s_t,a_t)\mid(s_1,a_1)=(s,a)]$. Then we define the optimal value function $V^*$ and the optimal action-value function $Q^*$ as $V^*(s)=\max_{\pi} V^{\pi}(s)$ and $Q^*(s,a)=\max_{\pi} Q^{\pi}(s,a)$ for $(s,a)\in\mathcal{S}\times\mathcal{A}$. It is known that there exists a deterministic stationary policy that gives rise to $V^*$ and $Q^*$~\citep[See][]{puterman2014markov,agarwal21}. Moreover, $V^*$ and $Q^*$ satisfy the following Bellman optimality equation.
\begin{equation}\label{bellman-discounted}
\begin{aligned}
Q^*(s,a) &= r(s,a) + \gamma\mathbb{E}_{s'\sim \mathbb{P}(\cdot\mid s,a)}\left[V^*(s')\right],\quad 
V^*(s)=\max_{a\in\mathcal{A}}Q^*(s,a).
\end{aligned}
\end{equation}
Our approach is to approximate an average-reward MDP by a discounted-reward MDP. In fact, as the discount factor gets close to 1, the discounted cumulative reward converges to the average reward for a stationary policy~\citep[See][]{puterman2014markov}.

\paragraph{Linear Mixture MDPs} In this work, we focus on linear mixture MDPs, defined formally as follows.
\begin{assumption}\label{assumption:mixture}
    {\em \citep[{Linear Mixture MDP},][]{zhou-mixture-finite-optimal,yuewu2022}} There is a known feature map $\phi:\mathcal{S}\times\mathcal{A}\times\mathcal{S}\to\mathbb{R}^d$ such that for any $(s,a,s')\in\mathcal{S}\times\mathcal{A}\times\mathcal{S}$,
    $$\mathbb{P}(s'\mid s,a)=\langle\phi(s,a,s'),{\theta^*}\rangle$$
    where ${\theta^*}\in\mathbb{R}^d$ is an unknown vector.
\end{assumption}
We assume that the reward function is deterministic and known to the decision-maker, but our results easily extend to the setting where the reward function is given by 
$r(s,a)=\langle\varphi(s,a),{\theta^*}\rangle$
for some feature map $\varphi:\mathcal{S}\times\mathcal{A}\rightarrow\mathbb{R}^d$. For any function $F:\mathcal{S}\to\mathbb{R}$, we use the following shorthand notations.
\begin{align*}
[\mathbb{P}F](s,a)&=\mathbb{E}_{s'\sim\mathbb{P}(\cdot\mid s,a)}[F(s')],\\
[\mathbb{V}F](s,a)&=[\mathbb{P}F^2](s,a)-([\mathbb{P}F](s,a))^2
\end{align*}
Defining $\phi_F(s,a)=\int_{s'}\phi(s,a,s')F(s')ds'$, we have
\begin{align*}
\left\langle\phi_F(s,a),\theta^*\right\rangle&= \int\left\langle\phi(s,a,s'),\theta^*\right\rangle F(s')ds'=[\mathbb{P}F](s,a).
\end{align*}
Therefore, we also have 
$$[\mathbb{V}F](s,a)=\left\langle\phi_{F^2}(s,a),\theta^*\right\rangle-\left\langle\phi_F(s,a),\theta^*\right\rangle^2.$$
Following~\cite{yuewu2022}, we assume that the scales of the parameters are bounded as follows.
\begin{assumption}\label{assumption:scale}
$\theta^*$ satisfies $\|{\theta^*}\|_2\leq B_\theta$ for some $B_\theta\in \mathbb{R}$. Moreover, for any $H>0$, $F:\mathcal{S}\to[0,H]$, and $(s,a)\in\mathcal{S}\times\mathcal{A}$, it holds that $\|\phi_F(s,a)\|_2 \leq H$. %
\end{assumption}

\section{The Proposed Algorithm}

In this section, we present our algorithm, \texttt{UCLK-C}, described in \Cref{alg:UCLK-C}.
\begin{algorithm*}[tb] 
\renewcommand\thealgorithm{1}
\caption{Upper-Confidence Linear Kernel Reinforcement Learning with Clipping (\texttt{UCLK-C})}
\label{alg:UCLK-C}
\begin{algorithmic}[1]
\STATE\textbf{Input:} 
upper bound $H$ of $2\cdot \mathrm{sp}(v^*)$, feature map $\phi:\mathcal{S} \times \mathcal{A} \times \mathcal{S} \rightarrow \mathbb{R}^d$, 
    confidence level $\delta \in (0,1)$, discount factor $\gamma\in[0,1)$, number of rounds $N$, and parameters $\lambda, B_\theta$
\STATE\textbf{Initialize:} 
  $t \leftarrow 1$,  $\widehat\Sigma_1,\widetilde\Sigma_1 \leftarrow \lambda {I_d}$, $\widehat b_1, \widetilde b_1,\widehat{{\theta}}_1,\widetilde{{\theta}}_1\leftarrow 0$,
    and observe the initial state $s_1 \in \mathcal{S}$ 
\FOR{episodes $k=1,2,\ldots,$}
\STATE Set $t_k = t$ and $\mathcal{C}_k=\widehat{\mathcal{C}}_{t_k}$ given in~\eqref{eq:confidence-set} \label{alg:confidence-ellipsoid}
\STATE Initialize $Q^{(0)}(\cdot,\cdot) \leftarrow (1-\gamma)^{-1}$ and $V^{(0)}(\cdot) \leftarrow (1-\gamma)^{-1}$
\FOR {rounds $n=1,2,\ldots,N$}\label{alg:value-iteration-start}
\STATE Set $Q^{(n)}(\cdot,\cdot)\leftarrow r(\cdot,\cdot)+\gamma \cdot\max_{\theta\in\mathcal{C}_k}\langle \phi_{V^{(n-1)}}(\cdot,\cdot),\theta\rangle$ \label{alg:value-iteration}
\STATE Set $\widetilde V^{(n)}(\cdot)\leftarrow \max_{a\in\mathcal{A}} Q^{(n)}(\cdot,a)$ \label{alg:intermediate-value}
\STATE Set $V^{(n)}(\cdot)\leftarrow \min\{\widetilde V^{(n)}(\cdot), \min_{s'\in\mathcal{S}}\widetilde V^{(n)}(s')+H\}$ \label{alg:clipping}
\ENDFOR\label{alg:value-iteration-end}
\STATE Set $Q_k(\cdot,\cdot)\leftarrow Q^{(N)}(\cdot,\cdot)$ and $V_k(\cdot)\leftarrow V^{(N)}(\cdot)$\label{alg:value-functions}
\STATE Set $W_k(\cdot) \leftarrow V_k(\cdot) -\min_{s'\in\mathcal{S}}V_k(s')$ \label{alg:recentering}
\STATE Take a deterministic policy $\pi_k$ given by $\pi_{k}(\cdot)\in \argmax_{a\in\mathcal{A}} Q_k(\cdot ,a)$ \label{planning-phase-end}
    \WHILE{$\det(\widehat{\Sigma}_t) \leq 2\det(\widehat{\Sigma}_{t_k})$}\label{execution-start}
        \STATE Take action $a_t \leftarrow \pi_k(s_t)$, receive reward $r(s_t,a_t)$ and next state $s_{t+1}\sim\mathbb{P}(\cdot|s_t, a_t)$
        \STATE Set $\bar{\sigma}_t \leftarrow\sqrt{\max\left\{H^2/d, [\bar{\mathbb{V}}_tW_k](s_t, a_t) + E_t\right\}}$ where $[\bar{\mathbb{V}}_t W_k](s_t, a_t)$ and $E_t$ are given as in (\ref{algo:variance_estimator}) and (\ref{algo:var_error})\label{alg:variance}
        \STATE Set $\widehat{\Sigma}_{t+1} \leftarrow \widehat{\Sigma}_{t} + \bar{\sigma}_t^{-2}\phi_{W_k}(s_t, a_t)\phi_{W_k}(s_t, a_t)^\top$ and $\widehat{b}_{t+1} \leftarrow \widehat{b}_t + \bar{\sigma}_t^{-2} W_k(s_{t+1}) \phi_{W_k}(s_t, a_t)$\label{alg:parameter1}
        \STATE Set $\widetilde{\Sigma}_{t+1} \leftarrow \widetilde{\Sigma}_t + \phi_{W_k^2}(s_t, a_t)\phi_{W_k^2}(s_t, a_t)^\top$ and $\widetilde{b}_{t+1} \leftarrow \widetilde{b}_t + W_k^2(s_{t+1})\phi_{W_k^2}(s_t, a_t)$\label{alg:parameter2}
        \STATE Set $\widehat{\theta}_{t+1} \leftarrow \widehat{\Sigma}_{t+1}^{-1}\widehat{b}_{t+1}$ and  $\widetilde{\theta}_{t+1} \leftarrow \widetilde{\Sigma}_{t+1}^{-1}\widetilde{b}_{t+1}$\label{alg:parameter3}
        \STATE Set $t \leftarrow t+1$
    \ENDWHILE\label{execution-end}
\ENDFOR
\end{algorithmic}
\end{algorithm*}
As common in algorithms for learning infinite-horizon average-reward MDPs such as \texttt{UCRL2}~\citep{NIPS2008_e4a6222c} and \texttt{UCRL2-VTR}~\citep{yuewu2022}, \texttt{UCLK-C} also proceeds with episodes. Following \texttt{UCRL2-VTR}, when to start the next episode is determined based on the Gram matrix (line~\ref{execution-start}). Each episode of \texttt{UCLK-C} consists of two phases, the planning phase (lines \ref{alg:confidence-ellipsoid}--\ref{planning-phase-end}) and the execution phase (lines \ref{execution-start}--\ref{execution-end}). During the planning phase, we run extended value iteration for a discounted MDP with estimated parameters. Then, based on value functions deduced from the planning phase, we take and execute a greedy deterministic (non-stationary) policy for the execution phase. What follows provides a more detailed discussion of the components of \texttt{UCLK-C}.

\paragraph{Discounted Value Iteration}

As in~\cite{hong2024provablyefficientreinforcementlearning}, we apply value iteration on a discounted-reward approximation of the underlying MDP. For each episode $k$, we take a confidence ellipsoid $\mathcal{C}_k$ (line~\ref{alg:confidence-ellipsoid}) over which we run extended value iteration (line~\ref{alg:value-iteration}). We make sure that any $\theta\in \mathcal{C}_k$ induces a probability distribution, i.e. $\mathcal{C}_k\subseteq \mathcal{B}$ where
$$\mathcal{B}=\left\{\theta\in\mathbb{R}^d:\ 
\begin{aligned}
&\left\langle \int \phi(s,a,s') ds',\theta\right\rangle =1,\ \langle \phi(s,a,s'),\theta\rangle\geq 0
\end{aligned}\ \forall (s,a,s')\right\}.$$
As a result, we get that $Q^{(n)}(s,a)\leq (1-\gamma)^{-1}$ for any $(s,a)\in\mathcal{S}\times\mathcal{A}$ and $n$. Note that we apply multiple rounds of value iteration. We will show that even with the clipping operation (line~\ref{alg:clipping}), the value iteration procedure stated in lines~\ref{alg:value-iteration-start}--\ref{alg:value-iteration-end} converges.

\paragraph{Clipping Operation}
In each round of value iteration, \texttt{UCLK-C} applies the clipping operation stated in line~\ref{alg:clipping}. Note that the value function $\widetilde{V}^{(n)}$ from line~\ref{alg:intermediate-value} does not necessarily have a bounded span. After the clipping operation, it is clear that the span of $V^{(n)}$ from line~\ref{alg:clipping} becomes bounded above as $\mathrm{sp}(V^{(n)})\leq H$. As a result, the re-centering step in line~\ref{alg:recentering} guarantees that $W_{k}(s)\in[0,H]$ for any $s\in\mathcal{S}$, which is crucial to parameterize estimation errors as a function of $H$, not $(1-\gamma)^{-1}$ which is set as large as $O(\sqrt{T})$. Note that the Gram matrix update steps (lines~\ref{alg:parameter1}--\ref{alg:parameter2}) are  with respect to $W_k$, not $V_{k}$. We choose any upper bound $H$ on $2\cdot \mathrm{sp}(v^*)$ where $v^*$ is the optimal bias function from the Bellman optimality condition~(\ref{bellman-average}).

\paragraph{Variance-Aware Ridge Regression-Based Parameter Estimation} We closely follow the Bernstein inequality-based estimation scheme of \cite{yuewu2022}.  The idea is to build confidence ellipsoids for $\theta^*$ based on a Bernstein-type concentration inequality for linear bandits. To be more precise, we apply the following lemma for vector-valued martingales.
\begin{lemma}{\em \citep[Theorem 4.1,][]{zhou-mixture-finite-optimal}}\label{lem:bernstein}
Let $\{\mathcal{G}_t\}_{t=1}^{\infty}$ be a filtration, $\{{x}_t, \eta_t\}_{t\geq1}$ a stochastic process such that ${x}_t\in\mathbb{R}^d$ is $\mathcal{G}_t$-measurable while $\eta_t\in\mathbb{R}$ is  $\mathcal{G}_{t+1}$-measurable. For $t \geq 1$, let $y_t = \langle {x}_t,{\mu}^* \rangle + \eta_t$ where $|\eta_t| \leq R$, $\mathbb{E}[\eta_t\mid\mathcal{G}_t]=0$, $\mathbb{E}[\eta_t^2\mid\mathcal{G}_t]\leq\sigma^2$, and $\|{x}_t\|_2 \leq L$ for some fixed $R$, $L$, $\sigma$, $\lambda>0$ and $\mu^* \in \mathbb{R}^d$.
Then, for any $0 < \delta < 1$, it holds with probability at least $1-\delta$ that for every $t\geq 1$,
$$\left\|\mu_t-\mu^*\right\|_{Z_t} \leq \beta_t+\sqrt{\lambda}\|\mu^*\|_2$$
where $\beta_t=8\sigma\sqrt{d\log(1+tL^2/(d\lambda))\log(4t^2/\delta)}+4R\log(4t^2/\delta)$, $\mu_t={Z}_t^{-1}{b}_t$, $Z_t=\lambda{I}_d+\sum_{i=1}^{t}{x}_i{x}_i^\top$, and ${b}_t=\sum_{i=1}^{t}y_i{x}_i$.
\end{lemma}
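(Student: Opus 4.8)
This is a Bernstein-type self-normalized tail bound for vector-valued martingales, a known result here quoted from prior work; the proof I would give follows the standard template of reducing to a scalar Bernstein moment-generating-function (MGF) estimate and then lifting it to the vector case by a \emph{range-restricted} method of mixtures together with a peeling argument over scales. First I would eliminate the ridge regression: writing $Z_t = \lambda I_d + \sum_{i=1}^t x_i x_i^\top$ and substituting $y_i = \langle x_i,\mu^*\rangle + \eta_i$ into $b_t = \sum_{i=1}^t y_i x_i$ gives $b_t = (Z_t - \lambda I_d)\mu^* + \sum_{i=1}^t \eta_i x_i$, hence $\mu_t - \mu^* = Z_t^{-1}\bigl(\sum_{i=1}^t \eta_i x_i - \lambda\mu^*\bigr)$ and therefore $\|\mu_t - \mu^*\|_{Z_t} \le \bigl\|\sum_{i=1}^t \eta_i x_i\bigr\|_{Z_t^{-1}} + \lambda\|\mu^*\|_{Z_t^{-1}}$. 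Since $Z_t \succeq \lambda I_d$, the last term is at most $\sqrt{\lambda}\,\|\mu^*\|_2$, which is exactly the additive term in the statement, so it remains to show that with probability at least $1-\delta$, $S_t := \bigl\|\sum_{i=1}^t \eta_i x_i\bigr\|_{Z_t^{-1}} \le \beta_t$ for all $t \ge 1$ simultaneously.

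\textbf{Step 2 (scalar Bernstein MGF).} For a fixed scalar $s$ with $|s| \le 1/R$, the boundedness $|\eta_i|\le R$ together with $\mathbb{E}[\eta_i\mid\mathcal{G}_i]=0$ and $\mathbb{E}[\eta_i^2\mid\mathcal{G}_i]\le\sigma^2$ yields $\mathbb{E}[\exp(s\eta_i)\mid\mathcal{G}_i]\le\exp\bigl((e-2)s^2\sigma^2\bigr)$ via the elementary bound $e^x \le 1 + x + (e-2)x^2$ on $[-1,1]$. This is the only place the variance proxy $\sigma^2$ enters, in place of the cruder range proxy $R^2$ that a Hoeffding/Azuma treatment would use; the constraint $|s|\le 1/R$ is the price paid for it, and that constraint is what will ultimately produce the additive $4R\log(4t^2/\delta)$ term.

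\textbf{Step 3 (vector lift and peeling).} For $\xi\in\mathbb{R}^d$ consider $M_t(\xi) = \exp\bigl(\langle\xi,\sum_{i\le t}\eta_i x_i\rangle - (e-2)\sigma^2\,\xi^\top(\sum_{i\le t}x_i x_i^\top)\,\xi\bigr)$, which by Step 2 is a nonnegative supermartingale \emph{provided} $\max_{i\le t}|\langle\xi,x_i\rangle|\le 1/R$. This range constraint is exactly what prevents the usual trick of integrating $\xi$ over all of $\mathbb{R}^d$ against a Gaussian prior (the trick that gives the clean Hoeffding-type self-normalized bound of Abbasi-Yadkori, P\'al and Szepesv\'ari). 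Instead I would (i) restrict the mixing distribution to a Euclidean ball of radius $\rho$, chosen using $\|x_i\|_2\le L$ so that $|\langle\xi,x_i\rangle|\le\rho L\le 1/R$ holds throughout; and (ii) run a peeling argument over $\mathcal{O}(\log(1+tL^2/(d\lambda)))$ dyadic scales (of the prior's variance, equivalently of the volumetric quantity $\log\det(\lambda^{-1}Z_t)$, equivalently of $S_t$), splitting $\delta$ across the scales. Within each scale, optimizing $\rho$ and balancing the interior Gaussian-tail contribution against the boundary contribution from the truncation yields the main term $8\sigma\sqrt{d\log(1+tL^2/(d\lambda))\log(4t^2/\delta)}$, while the capped scales (where the optimal $\rho$ would exceed $1/(RL)$) contribute the additive $4R\log(4t^2/\delta)$; the union over $\mathcal{O}(\log)$ scales is what turns the sum of logarithms of the Hoeffding bound into a product of logarithms here. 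Uniformity over all $t\ge 1$ then follows from a standard time-uniform argument, e.g.\ Ville's maximal inequality applied to the (ball-)mixed supermartingale together with a union bound over $t$ with weights $\propto 1/t^2$.

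\textbf{Main obstacle.} The delicate point is Step 3: unlike the Hoeffding case, the Bernstein MGF bound holds only for bounded $s$, so the mixing measure must be truncated, and the truncation radius has to be tuned against the random, data-dependent, time-varying quantity $S_t$ — which is what forces the peeling, and with it the extra logarithmic factor and the dimension-free additive $R$-term. Getting the constants and the two logarithmic factors right while keeping the guarantee uniform in $t$ is essentially all of the work; the remainder is standard self-normalized machinery.
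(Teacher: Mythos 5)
The paper does not prove this lemma; it is imported verbatim as Theorem 4.1 of the cited reference, so there is no in-paper argument to compare against. Your Step 1 (eliminating the ridge term and reducing to bounding $S_t:=\|\sum_{i\le t}\eta_i x_i\|_{Z_t^{-1}}$) and Step 2 (the Bernstein MGF bound valid only for $|s|\le 1/R$) are correct and are how essentially any proof would begin. Step 3, however, is not the route of the cited proof, and as sketched it has a concrete gap: a union bound over $m=\mathcal{O}(\log(1+tL^2/(d\lambda)))$ peeling scales at confidence level $\delta/m$ per scale contributes an \emph{additive} $\log m$ inside the logarithm, i.e.\ a factor of order $\log(mt^2/\delta)$, not the \emph{multiplicative} factor $\log(1+tL^2/(d\lambda))\cdot\log(4t^2/\delta)$ that appears under the square root in $\beta_t$. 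So the stated mechanism (``the union over $\mathcal{O}(\log)$ scales is what turns the sum of logarithms into a product'') does not produce the claimed bound; a truncated-mixture argument would more naturally yield a main term in which $\log\det(\lambda^{-1}Z_t)$ and $\log(1/\delta)$ are added, not multiplied.

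The proof in the cited reference is genuinely different. One expands the increment $\|S_t\|_{Z_t^{-1}}^2-\|S_{t-1}\|_{Z_{t-1}^{-1}}^2$ via the Sherman--Morrison formula, isolates the scalar martingale $\sum_{i}\eta_i\,x_i^\top Z_i^{-1}S_{i-1}$, and applies Freedman's inequality to it; its predictable variance is bounded by $\sigma^2\bigl(\sup_{i\le t}\|S_{i-1}\|^2_{Z_{i-1}^{-1}}\bigr)\sum_{i\le t}\min\{1,\|x_i\|^2_{Z_{i-1}^{-1}}\}$, and the elliptical potential lemma bounds the last sum by $2d\log(1+tL^2/(d\lambda))$ --- this is where that logarithm enters multiplicatively against Freedman's $\log(4t^2/\delta)$, and the range condition in Freedman's inequality is what produces the additive $4R\log(4t^2/\delta)$ term. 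The resulting inequality is self-referential in $\sup_{i\le t}\|S_{i-1}\|_{Z_{i-1}^{-1}}$ and is closed by a stopping-time/bootstrapping argument. If you want to prove the lemma exactly as stated, with this constant structure, you should follow that scalarization-plus-Freedman route rather than a method of mixtures.
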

Note that $\widehat \theta_{t+1}$ in line~\ref{alg:parameter3} corresponds to $\mu_t$ in \Cref{lem:bernstein} by setting $(x_j,y_j)= (\bar\sigma_j^{-1}\phi_{W_i}(s_j,a_j), \bar\sigma_j^{-1}W_i(s_{j+1}))$ for $j\in[t_i:t_{i+1}-1]$ and $i\in[k]$. Then $\widehat \theta_{t_k}$ is the solution of the following weighted ridge regression.
$$\min_{\theta\in\mathbb{R}^d}   \lambda \|\theta\|_2^2+\sum_{i=1}^{k-1}\sum_{j=t_{i}}^{t_{i+1}-1}\frac{\left( W_{i}(s_{j+1})-\langle \phi_{W_i}(s_j,a_j),\theta\rangle\right)^2}{\bar\sigma_j^2}.$$ Here, $\bar \sigma_j^2$  is an estimator of the (conditional) variance of $W_i(s_{j+1})$,  given by $[\mathbb{V}W_i](s_j,a_j)$. 

We will choose the value of $\bar \sigma_t$ (line~\ref{alg:variance}) for $t\in[t_k,t_{k+1}-1]$ so that $|\eta_t|\leq \sqrt{d}$, $\mathbb{E}[\eta_t^2\mid \mathcal{G}_t]\leq 1$, and $|x_t|\leq \sqrt{d}$, where $\eta_t=y_t- \langle x_t,\theta^*\rangle$ and $\mathcal{G}_t=\sigma(s_1,\ldots,s_t)$ is the $\sigma$-algebra generated by $s_1,\ldots, s_t$. Then, based on \Cref{lem:bernstein}, we may construct
\begin{equation}\label{eq:confidence-set}
\widehat{\mathcal{C}}_{t}=\left\{\theta\in\mathcal{B}: \|\theta-\widehat{\theta}_{t}\|_{\widehat{\Sigma}_{t}} \leq \widehat{\beta}_{t} \right\}
\end{equation}
where we set
\begin{align*}
    \widehat{\beta}_{t} &= 8\sqrt{d\log(1+t/\lambda)\log(4t^2/\delta)}+4\sqrt{d}\log(4t^2/\delta)+\sqrt{\lambda}B_\theta.
\end{align*}
We make sure that $\bar\sigma_t$ is strictly positive. Moreover, to guarantee $\mathbb{E}[\eta_t^2\mid \mathcal{G}_t]\leq 1$, we take the right quantity for $\bar\sigma_t$ so that $\bar \sigma_t^2$ is an upper bound on the variance term given by $$[\mathbb{V}W_k](s_t,a_t)=\langle \phi_{W_k^2}(s_t,a_t),\theta^*\rangle - \langle \phi_{W_k}(s_t,a_t),\theta^*\rangle^2.$$
To estimate this, we first take $[\bar{\mathbb{V}}_tW_k](s_t,a_t)$ given by
\begin{equation}\label{algo:variance_estimator}
\begin{aligned}
[\bar{\mathbb{V}}_tW_k](s_t,a_t)
&=\left[\langle \phi_{W_k^2}(s_t,a_t),\widetilde\theta_t\rangle\right]_{[0,H^2]}- \left[\langle \phi_{W_k}(s_t,a_t),\widehat \theta_t\rangle\right]_{[0,H]}^2
\end{aligned}
\end{equation}
where $[x]_{[a,b]}$ denotes the projection of $x$ onto the interval $[a,b]$ and $\widetilde\theta_{t}$ (line~\ref{alg:parameter3}) corresponds to $\mu_{t-1}$ in \Cref{lem:bernstein} by setting $(x_j,y_j)= (\phi_{W_i^2}(s_j,a_j), W_i^2(s_{j+1}))$ for $j\in[t_i:t_{i+1}-1]$ and $i\in[k]$. Note that $\widetilde\theta_{t}$ is the solution of the (unweighted) ridge regression problem with contexts $\phi_{W_i^2}(s_j,a_j)$ and targets $W_i^2(s_{j+1})$ for $j\in[t_i:t_{i+1}-1]$ and $i\in[k]$.

The last ingredient is to take an upper bound $E_t$ on the error term $|[{\mathbb{V}}W_k](s_t,a_t)-[\bar{\mathbb{V}}_tW_k](s_t,a_t)|$. We take 
\begin{equation}\label{algo:var_error}
\begin{aligned}
E_t&=  \min\left\{H^2, 2H\check{\beta}_{t}\|\phi_{W_k}(s_t,a_t)\|_{\widehat{\Sigma}_{t}^{-1}}\right\} + \min\left\{H^2, \widetilde{\beta}_{t} \|\phi_{W_k^2}(s_t,a_t)\|_{\widetilde{\Sigma}_t^{-1}} \right\}
\end{aligned}
\end{equation}
where we set
\begin{align*}
\check{\beta}_t &= 8d\sqrt{\log(1+t/\lambda)\log(4t^2/\delta)} + 4\sqrt{d}\log(4t^2/\delta) + \sqrt{\lambda}B_\theta,\\
    \widetilde{\beta}_t &= 8H^2\sqrt{d \log(1 + tH^2/(d\lambda))\log(4t^2/\delta)} + 4H^2\log(4t^2/\delta) + \sqrt{\lambda}B_\theta.
\end{align*}
Here, we may observe from \Cref{lem:bernstein} that $\|\theta^*-\widehat \theta_t\|_{\widehat\Sigma_t}\leq \check \beta_t$ and $\|\theta^*-\widetilde \theta_t\|_{\widehat\Sigma_t}\leq \widetilde \beta_t$ with high probability, based on which we prove that $E_t$ provides an upper bound on the error term.

Finally, we set the value of $\bar \sigma_t$ as
$$\bar\sigma_t=\sqrt{\max\left\{H^2/d, [\bar{\mathbb{V}}_tW_k](s_t, a_t) + E_t\right\}}.$$
Note that $\bar\sigma_t$ is well-defined because the term inside the square root is strictly positive. Moreover, $\bar\sigma_t^2$ is greater than or equal to $[\bar{\mathbb{V}}_tW_k](s_t, a_t) + E_t$ which is an upper bound on the variance term $[{\mathbb{V}}W_k](s_t, a_t)$. To formalize this, we prove the following lemma.
\begin{lemma}\label{lem:confidence-ellipsoid}
With probability at least $1-3\delta$, it holds that for every $t\in[T]$, 
$$|[{\mathbb{V}}W_k](s_t,a_t)-[\bar{\mathbb{V}}_tW_k](s_t,a_t)|\leq E_t,\quad \theta^*\in\widehat{\mathcal{C}}_t.$$
\end{lemma}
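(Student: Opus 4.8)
The plan is to set up three high-probability events, one for each of the three least-squares estimates in play, union-bound them (this is exactly where the $3\delta$ comes from), and then argue deterministically on their intersection. The events are: \textbf{(i)} a crude, variance-free bound $\|\widehat\theta_t-\theta^*\|_{\widehat\Sigma_t}\le\check\beta_t$ for all $t$; \textbf{(ii)} $\|\widetilde\theta_t-\theta^*\|_{\widetilde\Sigma_t}\le\widetilde\beta_t$ for all $t$; and \textbf{(iii)} the refined bound $\|\widehat\theta_t-\theta^*\|_{\widehat\Sigma_t}\le\widehat\beta_t$ for all $t$. Each comes from \Cref{lem:bernstein} with $\mu^*=\theta^*$. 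For (i) and (iii) I take $(x_j,y_j)=(\bar\sigma_j^{-1}\phi_{W_i}(s_j,a_j),\ \bar\sigma_j^{-1}W_i(s_{j+1}))$ for $j\in[t_i:t_{i+1}-1]$, $i\in[k]$, so $\eta_j=\bar\sigma_j^{-1}(W_i(s_{j+1})-[\mathbb{P}W_i](s_j,a_j))$; for (ii), $(x_j,y_j)=(\phi_{W_i^2}(s_j,a_j),\ W_i^2(s_{j+1}))$ and $\eta_j=W_i^2(s_{j+1})-[\mathbb{P}W_i^2](s_j,a_j)$. In each case $x_j$ is $\mathcal{G}_j$-measurable ($a_j=\pi_i(s_j)$, $W_i$ is fixed within episode $i$, and $\bar\sigma_j$ depends only on quantities available at step $j$) and $\eta_j$ is $\mathcal{G}_{j+1}$-measurable with $\mathbb{E}[\eta_j\mid\mathcal{G}_j]=0$. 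Since $\bar\sigma_j\ge H/\sqrt d$ by construction and $\|\phi_F(s,a)\|_2\le H$ whenever $F:\mathcal{S}\to[0,H]$ (\Cref{assumption:scale}), we get $|\eta_j|,\|x_j\|_2\le\sqrt d$ in (i)/(iii) and $|\eta_j|,\|x_j\|_2\le H^2$ in (ii); combined with $\|\theta^*\|_2\le B_\theta$, feeding $R=L=\sqrt d$ with the trivial $\mathbb{E}[\eta_j^2\mid\mathcal{G}_j]\le R^2=d$ into \Cref{lem:bernstein} gives (i) with radius $\check\beta_t$, feeding $R=L=H^2$ with $\mathbb{E}[\eta_j^2\mid\mathcal{G}_j]\le H^4$ gives (ii) with radius $\widetilde\beta_t$, and feeding $R=L=\sqrt d$ with the \emph{sharp} bound $\mathbb{E}[\eta_j^2\mid\mathcal{G}_j]\le1$ gives (iii) with radius $\widehat\beta_t$ --- that sharp bound being the only nontrivial input, and it is justified below.

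On the intersection of (i) and (ii) I prove the first claim by expanding definitions: $[\mathbb{V}W_k](s_t,a_t)-[\bar{\mathbb{V}}_tW_k](s_t,a_t)$ equals $\big(\langle\phi_{W_k^2}(s_t,a_t),\theta^*\rangle-[\langle\phi_{W_k^2}(s_t,a_t),\widetilde\theta_t\rangle]_{[0,H^2]}\big)+\big([\langle\phi_{W_k}(s_t,a_t),\widehat\theta_t\rangle]_{[0,H]}^2-\langle\phi_{W_k}(s_t,a_t),\theta^*\rangle^2\big)$. Because $W_k\in[0,H]$, both $\langle\phi_{W_k^2}(s_t,a_t),\theta^*\rangle=[\mathbb{P}W_k^2](s_t,a_t)\in[0,H^2]$ and $\langle\phi_{W_k}(s_t,a_t),\theta^*\rangle=[\mathbb{P}W_k](s_t,a_t)\in[0,H]$ coincide with their own clips; so by $1$-Lipschitzness of clipping, Cauchy--Schwarz in the $\widetilde\Sigma_t^{-1}$- and $\widehat\Sigma_t^{-1}$-norms, and events (ii) and (i), the first summand is at most $\min\{H^2,\ \widetilde\beta_t\|\phi_{W_k^2}(s_t,a_t)\|_{\widetilde\Sigma_t^{-1}}\}$ in absolute value, while the second, after factoring $a^2-b^2=(a-b)(a+b)$ with $|a+b|\le 2H$, is at most $\min\{H^2,\ 2H\check\beta_t\|\phi_{W_k}(s_t,a_t)\|_{\widehat\Sigma_t^{-1}}\}$. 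Adding the two bounds gives precisely $E_t$.

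The delicate step is event (iii). The sharp bound $\mathbb{E}[\eta_j^2\mid\mathcal{G}_j]=\bar\sigma_j^{-2}[\mathbb{V}W_i](s_j,a_j)\le1$ is equivalent to $\bar\sigma_j^2\ge[\mathbb{V}W_i](s_j,a_j)$, which follows from the first claim ($\bar\sigma_j^2\ge[\bar{\mathbb{V}}_jW_i](s_j,a_j)+E_j\ge[\mathbb{V}W_i](s_j,a_j)$ by the definition of $\bar\sigma_j$) --- but the first claim holds only on events (i)$\cap$(ii), not surely, so \Cref{lem:bernstein} cannot be invoked on the raw process. I would break this circularity with an auxiliary weight sequence $\sigma_j^\dagger:=\max\{\bar\sigma_j,\ \sqrt{[\mathbb{V}W_i](s_j,a_j)}\}$, which is legitimate in the analysis (it uses only the fixed vector $\theta^*$ and quantities measurable at step $j$) and satisfies $\mathbb{E}[(\eta_j^\dagger)^2\mid\mathcal{G}_j]\le1$ \emph{surely}, while still $|\eta_j^\dagger|,\|x_j^\dagger\|_2\le\sqrt d$ since $\sigma_j^\dagger\ge\bar\sigma_j\ge H/\sqrt d$; \Cref{lem:bernstein} then gives $\|\widehat\theta_t^\dagger-\theta^*\|_{\widehat\Sigma_t^\dagger}\le\widehat\beta_t$ for all $t$ with probability at least $1-\delta$, and on (i)$\cap$(ii) we have $\sigma_j^\dagger=\bar\sigma_j$ for every $j$, hence $\widehat\Sigma_t^\dagger=\widehat\Sigma_t$ and $\widehat\theta_t^\dagger=\widehat\theta_t$, so the bound transfers. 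Finally, $\theta^*\in\mathcal{B}$ is immediate from \Cref{assumption:mixture} ($\langle\phi(s,a,s'),\theta^*\rangle=\mathbb{P}(s'\mid s,a)\ge0$ and $\langle\int\phi(s,a,s')ds',\theta^*\rangle=1$), so on the intersection of the three events $\theta^*\in\widehat{\mathcal{C}}_t$ for all $t$, and a union bound (each event having probability at least $1-\delta$) yields the claimed $1-3\delta$. The circularity just described is the only real obstacle; the decomposition in the second paragraph and the parameter-matching in the first are routine bookkeeping.
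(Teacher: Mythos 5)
Your proposal is correct and follows essentially the same route as the paper's own proof: the identical decomposition of $|[\mathbb{V}W_k](s_t,a_t)-[\bar{\mathbb{V}}_tW_k](s_t,a_t)|$ into the two clipped terms (using that $\langle\phi_{W_k^2},\theta^*\rangle$ and $\langle\phi_{W_k},\theta^*\rangle$ lie in their clipping ranges, $1$-Lipschitzness of projection, and the factoring $a^2-b^2=(a-b)(a+b)$), the same three applications of \Cref{lem:bernstein} with parameters $(R,L,\sigma)=(\sqrt d,\sqrt d,\sqrt d)$, $(H^2,H^2,H^2)$, $(\sqrt d,\sqrt d,1)$ yielding $\check\beta_t$, $\widetilde\beta_t$, $\widehat\beta_t$, and the same union bound over three events giving $1-3\delta$. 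The only divergence is the device used to break the circularity you correctly flag in the third application: the paper multiplies the noise by the $\mathcal{G}_t$-measurable indicator $\mathbf{1}\{\theta^*\in\check{\mathcal{C}}_t\cap\widetilde{\mathcal{C}}_t\}$, which zeroes the noise (hence the conditional variance) off the good event while leaving $x_t$, the Gram matrix, and — on the good event — the estimator unchanged; you instead inflate the weight to $\sigma_t^\dagger=\max\{\bar\sigma_t,\sqrt{[\mathbb{V}W_k](s_t,a_t)}\}$ so the conditional variance is capped by $1$ surely and the surrogate Gram matrix and estimator coincide with the actual ones on the good event. Both are legitimate surrogate-process arguments serving the identical purpose, and your version is, if anything, slightly cleaner in that the surrogate observations agree with the real data pathwise on the good event rather than only the resulting estimator.
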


\section{Regret Analysis of \texttt{UCLK-C}}\label{sec:analysis}

Let us state the following regret bound of \texttt{UCLK-C} for linear mixture MDPs.
\begin{theorem}\label{thm:ub}
    Set $H\geq  2\cdot\mathrm{sp}(v^*)$, $\gamma=1-\sqrt{d}/\sqrt{HT}$, $N\geq \sqrt{{HT}/{d}}\log({\sqrt{T}}/{d\sqrt{H}})$, and $\lambda=1/B_\theta^2$.
    Then \texttt{UCLK-C} guarantees with probability at least $1-5\delta$ that for any linear mixture MDP with any initial state,
    $$\mathrm{Regret}(T)=\widetilde{\mathcal{O}}\left(d\sqrt{HT}+H\sqrt{dT}+ d^{7/4}HT^{1/4} \right)$$
    where the $\widetilde{\mathcal{O}}(\cdot)$ hides  logarithmic factors in $THB_\theta/\delta$.
\end{theorem}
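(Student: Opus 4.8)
The plan is to run the usual optimism-based regret decomposition for episodic average-reward learning, but routed through the discounted surrogate and the clipped value functions, and then bound the resulting terms; throughout, condition on the $(1-3\delta)$-probability event of \Cref{lem:confidence-ellipsoid}, so that $\theta^*\in\mathcal{C}_k$ for every episode $k$ and $\bar\sigma_t^2\ge[\mathbb{V}W_k](s_t,a_t)$ for every $t$. First I would analyze the planning phase, i.e. the clipped discounted extended value iteration of lines~\ref{alg:value-iteration-start}--\ref{alg:value-iteration-end}. The optimistic Bellman update followed by $\max_a$ is a $\gamma$-contraction in $\|\cdot\|_\infty$, while the clipping map $V\mapsto\min\{V,\min_sV(s)+H\}$ is monotone and $1$-Lipschitz in $\|\cdot\|_\infty$, so the composite iteration is a $\gamma$-contraction with a unique fixed point; since $V^{(0)}=(1-\gamma)^{-1}\mathbf{1}$ and a single step moves the iterate by at most $1$ in $\|\cdot\|_\infty$, one gets $\|V^{(N)}-V^{(N-1)}\|_\infty\le\gamma^{N-1}$, which for the stated $N$ makes the round-$N$ iterate essentially indistinguishable from the fixed point at the resolution we need. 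Two structural facts do the heavy lifting: (i) \emph{optimism survives clipping} --- since $\theta^*\in\mathcal{C}_k$ and $\mathrm{sp}(V^*)\le 2\,\mathrm{sp}(v^*)\le H$ (the classical comparison between the discounted-optimal value and the average-reward bias), a monotone induction over the rounds shows $V^{(n)}\ge V^*$ for all $n$, because the clipping threshold never cuts below a function of span at most $H$; hence $V_k\ge V^*$ and $(1-\gamma)V_k(s)\ge J^*-(1-\gamma)\,\mathrm{sp}(v^*)$ for every $s$; and (ii) using $\langle\phi_{\mathbf{1}},\theta\rangle=1$ and $[\mathbb{P}_\theta W]\in[0,H]$ for $\theta\in\mathcal{B}\supseteq\mathcal{C}_k$ and $W:\mathcal{S}\to[0,H]$, each $\widetilde V^{(n)}$ has span at most $1+\gamma H$, so $0\le\widetilde V_k(s)-V_k(s)\le 1$.

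Next, the regret decomposition. Using $Q_k(s_t,a_t)=\widetilde V_k(s_t)$ and the one-step identity $r(s_t,a_t)=Q_k(s_t,a_t)-\gamma\langle\phi_{V^{(N-1)}}(s_t,a_t),\theta_k^t\rangle$ with $\theta_k^t\in\argmax_{\theta\in\mathcal{C}_k}\langle\phi_{V^{(N-1)}}(s_t,a_t),\theta\rangle$, then adding and subtracting $\theta^*$, replacing $[\mathbb{P}V_k](s_t,a_t)$ by the realized $W_k(s_{t+1})$, and recentering through $W_k$, I would obtain
\begin{align*}
J^*-r(s_t,a_t)&=\big(J^*-(1-\gamma)V_k(s_t)\big)+\gamma\big(W_k(s_{t+1})-W_k(s_t)\big)\\
&\quad+\gamma\big\langle\phi_{W_k}(s_t,a_t),\theta_k^t-\theta^*\big\rangle+\gamma\big([\mathbb{P}W_k](s_t,a_t)-W_k(s_{t+1})\big)+\xi_t,
\end{align*}
where $\xi_t\le 0$ absorbs the clipping slack plus an $O(\gamma^{N-1})$ reconciliation of the index-shifted iterates $V^{(N-1)}$ and $V_k=V^{(N)}$. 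Summing over $t\le T$: the first sum is at most $T(1-\gamma)\,\mathrm{sp}(v^*)=\widetilde{\mathcal{O}}(\sqrt{dHT})$ by fact (i) and the choice of $\gamma$; the second telescopes within each episode to at most $\gamma\,\mathrm{sp}(V_k)\le H$, and since the determinant-doubling rule of line~\ref{execution-start} forces only $K=\widetilde{\mathcal{O}}(d)$ episodes (each weighted rank-one update has trace at most $d$), it contributes $\widetilde{\mathcal{O}}(dH)$; the $\sum_t\xi_t$ piece is controlled by the choice of $N$ and is of lower or equal order to the leading term; and the martingale term $\gamma\sum_t([\mathbb{P}W_k]-W_k(s_{t+1}))$ is handled by Freedman's inequality as $\widetilde{\mathcal{O}}\big(\sqrt{\sum_t[\mathbb{V}W_k](s_t,a_t)}+H\big)$ (or crudely by Azuma as $\widetilde{\mathcal{O}}(H\sqrt{T})$, already dominated).

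The estimation-error term is where the variance-aware ridge regression is essential. Bounding $\langle\phi_{W_k}(s_t,a_t),\theta_k^t-\theta^*\rangle\le 2\widehat\beta_{t_k}\|\phi_{W_k}(s_t,a_t)\|_{\widehat\Sigma_{t_k}^{-1}}$ (both parameters lie in $\mathcal{C}_k$), converting $\widehat\Sigma_{t_k}^{-1}$ to $\widehat\Sigma_t^{-1}$ up to a constant via the determinant-doubling rule, and applying Cauchy--Schwarz to the weighted features gives
\[
\sum_t\bar\sigma_t\big\|\bar\sigma_t^{-1}\phi_{W_k}(s_t,a_t)\big\|_{\widehat\Sigma_t^{-1}}\le\Big(\sum_t\bar\sigma_t^2\Big)^{1/2}\Big(\sum_t\big\|\bar\sigma_t^{-1}\phi_{W_k}(s_t,a_t)\big\|_{\widehat\Sigma_t^{-1}}^2\Big)^{1/2},
\]
where the second factor is $\widetilde{\mathcal{O}}(\sqrt{d})$ by the elliptical-potential lemma (the weighted features have norm at most $\sqrt{d}$ since $\bar\sigma_t^2\ge H^2/d$), and $\widehat\beta_{t_k}=\widetilde{\mathcal{O}}(\sqrt d)$ with $\lambda=1/B_\theta^2$. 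Using \Cref{lem:confidence-ellipsoid} to write $\bar\sigma_t^2\le H^2/d+[\mathbb{V}W_k](s_t,a_t)+2E_t$, summing the $E_t$ terms by another Cauchy--Schwarz plus elliptical-potential round, and --- the crux --- establishing $\sum_t[\mathbb{V}W_k](s_t,a_t)=\widetilde{\mathcal{O}}(HT)$ \emph{despite clipping} by writing $[\mathbb{V}W_k]=[\mathbb{P}W_k^2]-([\mathbb{P}W_k])^2$, lower-bounding $([\mathbb{P}W_k](s_t,a_t))^2$ by the squared clipped (estimated) Bellman backup of $W_k(s_t)$ up to estimation error, and telescoping the resulting $W_k^2$ and $W_k$ terms (using that the clipped fixed point behaves like a discounted value function with rewards in $[-1,1]$), one gets $\sum_t\bar\sigma_t^2=\widetilde{\mathcal{O}}(H^2T/d+HT+d^{3/2}H^2\sqrt{T})$. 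Hence the estimation-error sum is $\widetilde{\mathcal{O}}\big(d\sqrt{\sum_t\bar\sigma_t^2}\big)=\widetilde{\mathcal{O}}(d\sqrt{HT}+H\sqrt{dT}+d^{7/4}HT^{1/4})$ --- exactly the three terms of the theorem --- and every other contribution is of lower or equal order. Union-bounding the $3\delta$ of \Cref{lem:confidence-ellipsoid} with one $\delta$ from the regret martingale and one $\delta$ from the martingale used inside the variance bound yields the claim with probability $1-5\delta$.

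\emph{Main obstacle.} I expect the hard part to be the total-variance estimate $\sum_t[\mathbb{V}W_k](s_t,a_t)=\widetilde{\mathcal{O}}(HT)$ under clipping: clipping breaks the exact Bellman recursion satisfied by $W_k$, so the usual ``law of total variance'' telescoping has to be redone with the clipped, estimated backup and its error terms controlled tightly enough that only logarithmic/lower-order factors are lost --- losing an extra factor of $H$ here would inflate the leading term from $d\sqrt{HT}$ to $d\sqrt{H}\cdot\sqrt{HT}$. A secondary difficulty is verifying that the monotone induction $V^{(n)}\ge V^*$ really survives the clipping step, which is precisely where the hypothesis $H\ge 2\,\mathrm{sp}(v^*)$ is used, together with the bookkeeping required to absorb the finite-round value-iteration error uniformly over the $T$ steps.
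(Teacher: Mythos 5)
Your proposal is correct and follows essentially the same route as the paper: the same clipped discounted extended value iteration analysis (clipping as a non-expansion, optimism preserved because $\mathrm{sp}(V^*)\le 2\,\mathrm{sp}(v^*)\le H$, the $(1-\gamma)\mathrm{sp}(v^*)$ approximation bound), the same four-part regret decomposition up to cosmetic reindexing, and the same treatment of the dominant estimation term via weighted Cauchy--Schwarz, the elliptical-potential lemma, and the law-of-total-variance telescoping of $\sum_t[\mathbb{V}W_k](s_t,a_t)$ adapted to the clipped backup, yielding $\sum_t\bar\sigma_t^2=\widetilde{\mathcal{O}}(H^2T/d+HT+d^{3/2}H^2\sqrt{T})$ and hence the three stated terms. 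You also correctly identify the total-variance bound under clipping as the crux, which is exactly where the paper invests its effort (its Lemma 5.8).
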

By taking $H=2\cdot\mathrm{sp}(v^*)$, as a corollary of \Cref{thm:ub}, we deduce that
$$\mathrm{Regret}(T)=\widetilde{\mathcal{O}}\left(d\sqrt{\mathrm{sp}(v^*)T}\right)$$
where $\widetilde{\mathcal{O}}(\cdot)$ hides  logarithmic factors in $T\mathrm{sp}(v^*)B_\theta/\delta$. The rest of this section gives a proof overview of \Cref{thm:ub}.

Let us start by establishing convergence of the discounted extended value iteration procedure with clipping. For episode $k$, we consider the value functions $\widetilde V^{(n)}$ and $V^{(n)}$ for $n\in[N]$ (lines~\ref{alg:value-iteration}--\ref{alg:clipping} of \Cref{alg:UCLK-C}). Recall that $V^{(n)}$ is obtained from $\widetilde V^{(n)}$ after applying the clipping operation. It turns out that the clipping operation is a contraction map. 
\begin{lemma}\label{lem:contraction}
For any $n\in[N]$, it holds that
$$\max_{s\in\mathcal{S}}(V^{(n-1)}(s)-V^{(n)}(s))\leq \max_{s\in\mathcal{S}}(\widetilde V^{(n-1)}(s)-\widetilde V^{(n)}(s)).$$
\end{lemma}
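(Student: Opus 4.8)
The plan is to show that the clipping map $f\mapsto\mathrm{Clip}(f)$, defined by $\mathrm{Clip}(f)(s)=\min\{f(s),\,\min_{s'\in\mathcal{S}}f(s')+H\}$, turns any pointwise inequality of the form ``$f\le g+c$ for a constant $c$'' into the same inequality ``$\mathrm{Clip}(f)\le\mathrm{Clip}(g)+c$'' pointwise. Applying this with $f=\widetilde V^{(n-1)}$, $g=\widetilde V^{(n)}$, and the constant $\Delta:=\max_{s\in\mathcal{S}}(\widetilde V^{(n-1)}(s)-\widetilde V^{(n)}(s))$, and recalling that $V^{(n)}=\mathrm{Clip}(\widetilde V^{(n)})$ and $V^{(n-1)}=\mathrm{Clip}(\widetilde V^{(n-1)})$ by line~\ref{alg:clipping}, yields $V^{(n-1)}(s)-V^{(n)}(s)\le\Delta$ for every $s\in\mathcal{S}$, and taking the maximum over $s$ gives the claim.

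To carry this out, I would first record that by the definition of $\Delta$ we have $\widetilde V^{(n-1)}(s)\le\widetilde V^{(n)}(s)+\Delta$ for all $s\in\mathcal{S}$. Taking the minimum over $s$ on both sides gives $\min_{s'\in\mathcal{S}}\widetilde V^{(n-1)}(s')\le\min_{s'\in\mathcal{S}}\widetilde V^{(n)}(s')+\Delta$, and hence also $\min_{s'\in\mathcal{S}}\widetilde V^{(n-1)}(s')+H\le\min_{s'\in\mathcal{S}}\widetilde V^{(n)}(s')+H+\Delta$. Now fix $s\in\mathcal{S}$. Both arguments of the minimum defining $V^{(n-1)}(s)$ are at most the corresponding argument of the minimum defining $V^{(n)}(s)$ plus $\Delta$: namely $\widetilde V^{(n-1)}(s)\le\widetilde V^{(n)}(s)+\Delta$ and $\min_{s'}\widetilde V^{(n-1)}(s')+H\le\min_{s'}\widetilde V^{(n)}(s')+H+\Delta$. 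Since $\min\{a_1,a_2\}\le\min\{b_1,b_2\}+\Delta$ whenever $a_1\le b_1+\Delta$ and $a_2\le b_2+\Delta$ (because $\min\{a_1,a_2\}$ is bounded by each $a_i\le b_i+\Delta$), we conclude $V^{(n-1)}(s)\le V^{(n)}(s)+\Delta$, as desired.

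I do not anticipate a genuine obstacle: the statement is precisely the fact that the clipping map is monotone and nonexpansive under additive constant shifts, and both properties follow immediately from its two-term minimum structure. The only point requiring a moment's care is that the ``floor'' term $\min_{s'\in\mathcal{S}}\widetilde V^{(\cdot)}(s')$ inside the clip changes when passing from $\widetilde V^{(n-1)}$ to $\widetilde V^{(n)}$, so one must verify that this floor also moves by at most $\Delta$ in the favorable direction — which is exactly the step of pushing $\min_{s'\in\mathcal{S}}$ through the pointwise inequality $\widetilde V^{(n-1)}\le\widetilde V^{(n)}+\Delta$ noted above.
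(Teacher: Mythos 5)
Your proof is correct and is essentially the paper's argument in a slightly more abstract packaging: the paper's two-case analysis on which argument of the clipping minimum attains $V^{(n)}(s)$ is exactly the proof of your elementary fact that $\min\{a_1,a_2\}\le\min\{b_1,b_2\}+\Delta$ when $a_i\le b_i+\Delta$, and both proofs rely on the same observation that the floor $\min_{s'}\widetilde V^{(\cdot)}(s')+H$ shifts by at most $\Delta$. No gap; nothing further needed.
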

Recall that $Q_k$ is the action-value function for the $k$th episode and that $V_k$ denotes the value function for the $k$th episode, obtained from clipping $\widetilde V_k$ with $\widetilde V_k(s) = \max_{a\in\mathcal{A}} Q_k(s,a)$ for $s\in\mathcal{S}$ (line~\ref{alg:value-functions} of \Cref{alg:UCLK-C}). Based on~\Cref{lem:contraction}, we prove the following lemma.
\begin{lemma}\label{lem:convergence-devi}
Suppose that $\theta^*\in \widehat{\mathcal{C}}_{t}$ for $t\in[T]$ where $\mathcal{C}_{t}$ is defined as in~\eqref{eq:confidence-set}. Then for each episode $k$ and $t_k\leq t<t_{k+1}-1$, it holds that
\begin{align*}
Q_k(s_t,a_t)
&\leq
r(s_t,a_t) + \gamma \max_{\theta\in\mathcal{C}_{k}}\langle \phi_{V_k}(s_t,a_t),\theta\rangle+\gamma^N.
\end{align*}
\end{lemma}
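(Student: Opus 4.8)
The plan is to show that the value iteration iterates contract at rate $\gamma$ in the relevant one-sided sense, so that replacing $V^{(N-1)}$ by $V^{(N)}=V_k$ inside the definition of $Q^{(N)}=Q_k$ costs only an additive $\gamma^N$. First I would introduce the optimistic Bellman operator $\mathcal{T}_k$ given by $(\mathcal{T}_k F)(s)=\max_{a\in\mathcal{A}}\big[r(s,a)+\gamma\max_{\theta\in\mathcal{C}_k}\langle\phi_F(s,a),\theta\rangle\big]$, which is well defined because the hypothesis $\theta^*\in\widehat{\mathcal{C}}_{t_k}$ makes $\mathcal{C}_k=\widehat{\mathcal{C}}_{t_k}$ a nonempty compact set (an intersection of the closed set $\mathcal{B}$ with a ball). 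With this notation, lines~\ref{alg:value-iteration}--\ref{alg:intermediate-value} say precisely $\widetilde V^{(n)}=\mathcal{T}_k V^{(n-1)}$ for $n\geq 1$, and line~\ref{alg:clipping} says $V^{(n)}$ is the clip of $\widetilde V^{(n)}$.

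The key subroutine is a one-sided $\gamma$-contraction estimate: for any $F,F':\mathcal{S}\to\mathbb{R}$, $\max_{s}\big((\mathcal{T}_k F)(s)-(\mathcal{T}_k F')(s)\big)\le\gamma\max_{s'}\big(F(s')-F'(s')\big)$. I would prove this by fixing a state $s$, letting $\bar a,\bar\theta$ be maximizers for $(\mathcal{T}_k F)(s)$, using them as feasible (hence suboptimal) choices inside $(\mathcal{T}_k F')(s)$, and then invoking $\mathcal{C}_k\subseteq\mathcal{B}$: since $\langle\phi(s,\bar a,\cdot),\bar\theta\rangle$ is a genuine probability distribution, $\langle\phi_{F-F'}(s,\bar a),\bar\theta\rangle=[\mathbb{P}_{\bar\theta}(F-F')](s,\bar a)\le\max_{s'}(F(s')-F'(s'))$. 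Combining this with \Cref{lem:contraction}, and writing $a_n:=\max_{s}(V^{(n-1)}(s)-V^{(n)}(s))$, I get for $n\geq 2$ that $a_n\le\max_{s}(\widetilde V^{(n-1)}(s)-\widetilde V^{(n)}(s))=\max_{s}(\mathcal{T}_k V^{(n-2)}(s)-\mathcal{T}_k V^{(n-1)}(s))\le\gamma\,a_{n-1}$, so by induction $a_N\le\gamma^{N-1}a_1$.

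Next I would bound the base case $a_1\le 1$ by direct computation: since $V^{(0)}\equiv(1-\gamma)^{-1}$ and every $\theta\in\mathcal{C}_k\subseteq\mathcal{B}$ integrates the feature map to $1$, we have $\langle\phi_{V^{(0)}}(s,a),\theta\rangle=(1-\gamma)^{-1}$, hence $\widetilde V^{(1)}(s)=\max_{a}r(s,a)+\gamma(1-\gamma)^{-1}\ge\gamma(1-\gamma)^{-1}$ and also $\min_{s'}\widetilde V^{(1)}(s')+H\ge\gamma(1-\gamma)^{-1}$, using $r\ge 0$ and $H\ge 0$; therefore $V^{(1)}(s)\ge\gamma(1-\gamma)^{-1}=V^{(0)}(s)-1$, giving $a_1\le 1$ and thus $a_N\le\gamma^{N-1}$. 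Finally, for $t_k\le t<t_{k+1}-1$ I would write $Q_k(s_t,a_t)=Q^{(N)}(s_t,a_t)=r(s_t,a_t)+\gamma\max_{\theta\in\mathcal{C}_k}\langle\phi_{V^{(N-1)}}(s_t,a_t),\theta\rangle$, take $\bar\theta$ to be the maximizer, and bound $\max_{\theta\in\mathcal{C}_k}\langle\phi_{V^{(N-1)}}(s_t,a_t),\theta\rangle-\max_{\theta\in\mathcal{C}_k}\langle\phi_{V^{(N)}}(s_t,a_t),\theta\rangle\le[\mathbb{P}_{\bar\theta}(V^{(N-1)}-V^{(N)})](s_t,a_t)\le a_N\le\gamma^{N-1}$ by the probability-distribution property once more; multiplying by $\gamma$ and recalling $V_k=V^{(N)}$ gives the claim.

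The step I expect to be most delicate is the interaction between clipping and contraction: \Cref{lem:contraction} supplies only a one-sided inequality, so the whole induction must be carried out with one-sided gaps $\max_s(F(s)-F'(s))$ rather than sup-norms, and I must check that the base-case bound $a_1\le1$ is not destroyed by the $\min\{\cdot,\cdot+H\}$ operation in line~\ref{alg:clipping}; everything else is routine bookkeeping with the identity $\langle\phi_F(s,a),\theta\rangle=[\mathbb{P}_\theta F](s,a)$ for $\theta\in\mathcal{B}$.
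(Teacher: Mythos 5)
Your proof is correct and follows essentially the same route as the paper's: a one-sided $\gamma$-contraction of the optimistic Bellman update combined with \Cref{lem:contraction}, iterated $N-1$ times from a base gap of at most $1$, followed by one final application of the fact that each $\theta\in\mathcal{C}_k\subseteq\mathcal{B}$ induces a probability distribution. The only cosmetic difference is that you run the recursion on the clipped value functions $V^{(n)}$, which forces you to check the base case $\max_{s}\bigl(V^{(0)}(s)-V^{(1)}(s)\bigr)\leq 1$ against the clipping operation (which you do correctly), whereas the paper runs the same recursion on the $Q^{(n)}$ so that its base case $Q^{(0)}(s,a)-Q^{(1)}(s,a)=1-r(s,a)\leq 1$ is immediate.
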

With \Cref{lem:convergence-devi}, we may provide a decomposition of the regret function as follows. Let $K_T$ denote the total number of distinct episodes over the horizon of $T$ time steps. For simplicity, we assume that the last time step of the last episode and that time step $T+1$ is the beginning of the $(K_T+1)$th episode, i.e., $t_{K_T+1} = T+1$. Then it follows from \Cref{lem:convergence-devi} that 
\begin{align*}
\begin{aligned}
\regret(T)
&= T\cdot J^*-\sum_{t=1}^T r(s_t,a_t) \\
&\leq T\gamma^N+\underbrace{\sum_{k=1}^{K_T}\sum_{t=t_k}^{t_{k+1}-1}\left(J^* -(1-\gamma)V_k(s_{t+1})\right)}_{I_1}+\underbrace{\sum_{k=1}^{K_T}\sum_{t=t_k}^{t_{k+1}-1}\left(V_k(s_{t+1})-Q_k(s_t,a_t)\right)}_{I_2}\\
& \quad+ \underbrace{\gamma\sum_{k=1}^{K_T}\sum_{t=t_k}^{t_{k+1}-1}\left(\langle \phi_{V_k}(s_t,a_t),\theta^*\rangle - V_k(s_{t+1})\right)}_{I_3}+\underbrace{\gamma\sum_{k=1}^{K_T}\sum_{t=t_k}^{t_{k+1}-1}\max_{\theta\in\mathcal{C}_k}\langle \phi_{V_k}(s_t,a_t),\theta-\theta^*\rangle}_{I_4}.
\end{aligned}
\end{align*}

\paragraph{Regret Term $I_1$} Recall that $V^*$ and $Q^*$ are the optimal value function and the optimal action-value function for the discounted-reward setting with discount factor $\gamma$. The following lemma proves that $V_k$ and $Q_k$ are optimistic estimators of $V^*$ and $Q^*$, respectively.
\begin{lemma}\label{lem:optimism}
Suppose that $\theta^*\in \widehat{\mathcal{C}}_{t}$ for $t\in[T]$ where $\widehat{\mathcal{C}}_{t}$ is defined as in~\eqref{eq:confidence-set}. Then for each episode $k$, $1/(1-\gamma)\geq V_k(s)\geq V^*(s)$ and $1/(1-\gamma)\geq Q_k(s,a)\geq Q^*(s,a)$.
\end{lemma}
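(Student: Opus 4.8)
The plan is to prove both assertions by induction on the value–iteration round $n$, tracking the iterates $Q^{(n)},\widetilde V^{(n)},V^{(n)}$ of episode $k$ and then setting $n=N$. Throughout, write $V^*,Q^*$ for the $\gamma$-discounted optimal (action-)value functions of \eqref{bellman-discounted}, and recall that the hypothesis $\theta^*\in\widehat{\mathcal C}_t$ for all $t$ gives in particular $\theta^*\in\mathcal C_k=\widehat{\mathcal C}_{t_k}$.

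\textbf{Step 1 (upper bound).} Since $\mathcal C_k\subseteq\mathcal B$, every $\theta\in\mathcal C_k$ induces a genuine transition kernel, so $\langle\phi_F(s,a),\theta\rangle$ equals the expectation of $F$ under that kernel; hence it lies in $[0,\sup_s F(s)]$ whenever $F\ge 0$. Starting from $Q^{(0)}=V^{(0)}=(1-\gamma)^{-1}$ and using $r\in[0,1]$, an easy induction gives $0\le Q^{(n)},\widetilde V^{(n)},V^{(n)}\le(1-\gamma)^{-1}$: line~\ref{alg:value-iteration} yields $Q^{(n)}\le 1+\gamma(1-\gamma)^{-1}=(1-\gamma)^{-1}$ and $Q^{(n)}\ge 0$; the maximum over actions (line~\ref{alg:intermediate-value}) preserves both bounds; and the clipping in line~\ref{alg:clipping} is a minimum of two nonnegative quantities, each $\le(1-\gamma)^{-1}$. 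Taking $n=N$ gives $0\le Q_k,V_k\le(1-\gamma)^{-1}$.

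\textbf{Step 2 (span of $V^*$ — the key ingredient).} For the optimism direction, the only danger is that clipping $\widetilde V^{(n)}$ from above in line~\ref{alg:clipping} could destroy a lower bound $V^{(n)}\ge V^*$; this is avoided as soon as $H\ge\mathrm{sp}(V^*)$, since then $\min_{s'}\widetilde V^{(n)}(s')+H\ge\min_{s'}V^*(s')+\mathrm{sp}(V^*)\ge V^*(s)$ whenever $\widetilde V^{(n)}\ge V^*$ pointwise. I would establish $\mathrm{sp}(V^*)\le 2\,\mathrm{sp}(v^*)\le H$ by a sub-/super-solution argument against \eqref{bellman-average}. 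Setting $u=v^*-(\min_s v^*(s))\mathbf 1\in[0,\mathrm{sp}(v^*)]$, which still satisfies $J^*+u(s)=\max_{a}\{r(s,a)+[\mathbb P u](s,a)\}$, one checks that $F:=\tfrac{J^*}{1-\gamma}\mathbf 1+u$ obeys $\max_{a}\{r(\cdot,a)+\gamma[\mathbb P F](\cdot,a)\}\le F$ (because $\gamma[\mathbb P u]\le[\mathbb P u]$) while $G:=F-\mathrm{sp}(v^*)\mathbf 1$ obeys $\max_{a}\{r(\cdot,a)+\gamma[\mathbb P G](\cdot,a)\}\ge G$ (because $[\mathbb P u]\le\mathrm{sp}(v^*)$, so $\gamma[\mathbb P u]\ge[\mathbb P u]-(1-\gamma)\mathrm{sp}(v^*)$). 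Monotonicity and the $\gamma$-contraction of the discounted Bellman optimality operator then sandwich $G\le V^*\le F$, whence $\mathrm{sp}(V^*)\le\mathrm{sp}(u)+\mathrm{sp}(v^*)=2\,\mathrm{sp}(v^*)$. I expect this span comparison to be the main obstacle: it is the one place where the average-reward structure (not merely generic discounted-MDP facts) enters, and it is precisely what forces the factor $2$ in the standing requirement $H\ge 2\,\mathrm{sp}(v^*)$.

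\textbf{Step 3 (optimism by induction).} Base case $n=0$: $V^{(0)}=Q^{(0)}=(1-\gamma)^{-1}\ge V^*,Q^*$ since $r\in[0,1]$. Inductive step: assuming $V^{(n-1)}\ge V^*$, use $\theta^*\in\mathcal C_k$, the identity $\langle\phi_F(s,a),\theta^*\rangle=[\mathbb P F](s,a)$, monotonicity of $\mathbb P$, and \eqref{bellman-discounted} to get
$$Q^{(n)}(s,a)\ge r(s,a)+\gamma\langle\phi_{V^{(n-1)}}(s,a),\theta^*\rangle=r(s,a)+\gamma[\mathbb P V^{(n-1)}](s,a)\ge r(s,a)+\gamma[\mathbb P V^*](s,a)=Q^*(s,a).$$
Then $\widetilde V^{(n)}=\max_a Q^{(n)}(\cdot,a)\ge\max_a Q^*(\cdot,a)=V^*$, and by Step 2 the clipping step preserves this, i.e. $V^{(n)}(s)=\min\{\widetilde V^{(n)}(s),\ \min_{s'}\widetilde V^{(n)}(s')+H\}\ge V^*(s)$. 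Taking $n=N$ yields $V_k\ge V^*$ and $Q_k\ge Q^*$, which together with Step 1 completes the proof.
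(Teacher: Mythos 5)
Your proposal is correct and follows essentially the same route as the paper: induction on the value-iteration round, the upper bound $1/(1-\gamma)$ from the fact that every $\theta\in\mathcal{C}_k$ induces a genuine transition kernel, optimism of $Q^{(n)}$ via $\theta^*\in\mathcal{C}_k$ and the discounted Bellman equation, and preservation of $\widetilde V^{(n)}\geq V^*$ under clipping because $H\geq \mathrm{sp}(V^*)$. The only difference is that your Step 2 supplies a self-contained (and correct) sub-/super-solution proof of $\mathrm{sp}(V^*)\leq 2\,\mathrm{sp}(v^*)$, whereas the paper simply invokes this as \Cref{lemma:span-bd}, citing Lemma 2 of \cite{pmlr-v119-wei20c}.
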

\Cref{lem:optimism} implies that $J^*-(1-\gamma)V_k(s_{t+1})\leq J^* - (1-\gamma)V^*(s_{t+1})$. This can be further bounded above based on the following lemma.
\begin{lemma}{\em \citep[Lemma 2,][]{pmlr-v119-wei20c}}\label{lemma:span-bd}
Let $J^*$ and $v^*$ be the optimal average reward and the optimal bias function given in~\eqref{bellman-average}, and let $V^*$ be the optimal discounted value function given in~\eqref{bellman-discounted} with discount factor $\gamma\in[0,1)$. Then it holds that 
\begin{align*}
\max_{s\in\mathcal{S}}\left|J^*-(1-\gamma)V^*(s)\right|&\leq (1-\gamma)\mathrm{sp}(v^*),\quad
\mathrm{sp}(V^*) \leq 2\cdot\mathrm{sp}(v^*).
\end{align*}
\end{lemma}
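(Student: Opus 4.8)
\textbf{Proof proposal for \Cref{lemma:span-bd}.}

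The plan is to work directly with the Bellman optimality equations \eqref{bellman-average} and \eqref{bellman-discounted} and to build an explicit relationship between the discounted value function $V^*$ and the pair $(J^*, v^*)$. First I would observe that the average-reward Bellman optimality condition says that $v^*$, viewed as a candidate bias, satisfies $J^* + v^*(s) = \max_a \{ r(s,a) + [\mathbb{P}v^*](s,a)\}$ for every $s$. The key idea is to plug the shifted function $\frac{1}{1-\gamma}J^* + v^*$ (or rather, a suitably normalized version of it) into the discounted Bellman operator and measure how far it is from being a fixed point; since the discounted Bellman optimality operator $\mathcal{T}_\gamma$ is a $\gamma$-contraction in the sup norm with unique fixed point $V^*$, any function $u$ with $\|\mathcal{T}_\gamma u - u\|_\infty \le \epsilon$ satisfies $\|u - V^*\|_\infty \le \epsilon/(1-\gamma)$.

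Concretely, I would set $u(s) = v^*(s) + c$ for a constant $c$ to be chosen, and compute $[\mathcal{T}_\gamma u](s) = \max_a\{r(s,a) + \gamma[\mathbb{P}u](s,a)\} = \max_a\{r(s,a) + \gamma[\mathbb{P}v^*](s,a)\} + \gamma c$. Using \eqref{bellman-average} to substitute $\max_a\{r(s,a)+[\mathbb{P}v^*](s,a)\} = J^* + v^*(s)$, the term $\max_a\{r(s,a)+\gamma[\mathbb{P}v^*](s,a)\}$ differs from $J^*+v^*(s)$ by at most $(1-\gamma)\max_a |[\mathbb{P}v^*](s,a)|$, which is controlled by $\mathrm{sp}(v^*)$ after recentering $v^*$ to have, say, $\min_s v^*(s) = 0$ (this is legitimate since the Bellman optimality condition is invariant under adding constants to $v^*$). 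Choosing $c$ to absorb the $J^*$ term — specifically $c = J^*/(1-\gamma)$ up to lower-order corrections — makes $\mathcal{T}_\gamma u - u$ small, of order $(1-\gamma)\mathrm{sp}(v^*)$ in sup norm, after which the contraction estimate yields $\|u - V^*\|_\infty = O(\mathrm{sp}(v^*))$. From $V^*(s) \approx v^*(s) + J^*/(1-\gamma)$ up to an additive error that is uniform in $s$ and of order $\mathrm{sp}(v^*)$, both claimed inequalities follow: the first by rearranging $|(1-\gamma)V^*(s) - J^*| \le (1-\gamma)|V^*(s) - v^*(s) - J^*/(1-\gamma)| + (1-\gamma)|v^*(s)|$ and bounding each piece, and the second, $\mathrm{sp}(V^*) \le 2\,\mathrm{sp}(v^*)$, by noting $\mathrm{sp}(V^*) \le \mathrm{sp}(v^*) + 2\|V^* - v^* - J^*/(1-\gamma)\|_\infty$ and tracking constants carefully.

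The main obstacle is getting the constants exactly right rather than merely up to an unspecified multiplicative factor: the statement asserts the clean bounds $(1-\gamma)\mathrm{sp}(v^*)$ and $2\,\mathrm{sp}(v^*)$, so the contraction argument has to be executed with care about which recentering of $v^*$ is used and how the $\max_a$ interacts with the bound $|[\mathbb{P}v^*](s,a) - (\text{something centered})| \le \mathrm{sp}(v^*)$. An alternative, and perhaps cleaner, route for the span bound is to avoid the contraction argument entirely and instead use the probabilistic representation $V^*(s) = \mathbb{E}^{\pi^*_\gamma}[\sum_{t\ge 1}\gamma^{t-1} r(s_t,a_t) \mid s_1 = s]$ together with a coupling/telescoping argument comparing two start states, which is how \cite{pmlr-v119-wei20c} likely proceed; but since the lemma is quoted verbatim from their work, I would simply cite it and, if a self-contained proof is wanted, present the contraction argument above as the cleanest path, being meticulous about the recentering $\min_s v^*(s) = 0$ so that $0 \le v^*(s) \le \mathrm{sp}(v^*)$ throughout.
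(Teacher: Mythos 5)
The paper does not actually prove \Cref{lemma:span-bd}; it imports it verbatim as Lemma~2 of \cite{pmlr-v119-wei20c}, so your fallback of simply citing the result is exactly what the authors do.

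Regarding the self-contained argument you sketch: the strategy (evaluate the Bellman residual of $u = v^* + J^*/(1-\gamma)$ under the discounted optimality operator $\mathcal{T}_\gamma$ and transfer it to a bound on $u - V^*$) is the right one, but the symmetric contraction estimate $\|u - V^*\|_\infty \le \|\mathcal{T}_\gamma u - u\|_\infty/(1-\gamma)$ cannot deliver the stated constants, and this is precisely the obstacle you flag but do not resolve. With $v^*$ recentered so that $0 \le v^* \le \mathrm{sp}(v^*)$, the residual satisfies the \emph{one-sided} bounds $-(1-\gamma)\,\mathrm{sp}(v^*) \le \mathcal{T}_\gamma u - u \le 0$; the symmetric estimate then only yields $\|u - V^*\|_\infty \le \mathrm{sp}(v^*)$, and feeding that into your final two displays gives $|J^* - (1-\gamma)V^*(s)| \le 2(1-\gamma)\,\mathrm{sp}(v^*)$ and $\mathrm{sp}(V^*) \le 3\,\mathrm{sp}(v^*)$, both off from the lemma by a constant factor. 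The missing ingredient is monotonicity of $\mathcal{T}_\gamma$ rather than its contractivity: from $\mathcal{T}_\gamma u \le u$ one gets $\mathcal{T}_\gamma^n u \le u$ for all $n$ and hence $V^* \le u$ in the limit, and from $\mathcal{T}_\gamma\bigl(u - \mathrm{sp}(v^*)\bigr) = \mathcal{T}_\gamma u - \gamma\,\mathrm{sp}(v^*) \ge u - \mathrm{sp}(v^*)$ one gets $V^* \ge u - \mathrm{sp}(v^*)$. The resulting sandwich $J^*/(1-\gamma) + v^*(s) - \mathrm{sp}(v^*) \le V^*(s) \le J^*/(1-\gamma) + v^*(s)$, combined with $0 \le v^* \le \mathrm{sp}(v^*)$, gives exactly $|J^* - (1-\gamma)V^*(s)| \le (1-\gamma)\,\mathrm{sp}(v^*)$ and $\mathrm{sp}(V^*) \le \mathrm{sp}(v^*) + \mathrm{sp}(v^*) = 2\,\mathrm{sp}(v^*)$. (The unrolling/coupling argument you mention as an alternative produces the same two-sided sandwich directly.) So: correct idea, but the final step must use the one-sided fixed-point comparison, not the sup-norm contraction bound.
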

\Cref{lemma:span-bd} offers a tool to bridge an infinite-horizon average-reward MDP and its discounted-reward MDP approximation. In particular, we deduce that $I_1\leq T(1-\gamma)\mathrm{sp}(v^*)\leq d\sqrt{\mathrm{sp}(v^*)T}.$
\paragraph{Regret Term $I_2$}
Note that $V_k(s_{t+1})\leq \widetilde V_k(s_{t+1})=Q_k(s_{t+1},a_{t+1})$ for $t\in[t_k:t_{k+1}-2]$, which leads to a telescoping structure. To be precise, we have
\begin{align*}
   I_2&\leq\sum_{k=1}^{K_T}\sum_{t=t_k}^{t_{k+1}-2}\left(Q_{k}(s_{t+1},a_{t+1}) - Q_k(s_t,a_t)\right)+ \sum_{k=1}^{K_T}\left(\frac{1}{1-\gamma} - Q_{k}(s_{t_{k+1}-1},a_{t_{k+1}-1}) \right)\\
    &=-\sum_{k=1}^{K_T}Q_{k}(s_{t_k},a_{t_k}) +\frac{K_T}{1-\gamma}.
\end{align*}
The following lemma gives an upper bound on the number of episodes $K_T$.
\begin{lemma}\label{lem:num-episodes}
If $\lambda=1/B_\theta^2$, then  $K(T)\leq 1+ d\log(1+TH^2B_\theta^2/d)$.
\end{lemma}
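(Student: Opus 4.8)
The plan is the usual doubling-determinant argument applied to the Gram matrix $\widehat\Sigma_t$ maintained in line~\ref{alg:parameter1}. First I would extract from the episode-switching rule that the determinant more than doubles across consecutive episodes: the while-loop in line~\ref{execution-start} keeps episode $k$ running as long as $\det(\widehat\Sigma_t)\le 2\det(\widehat\Sigma_{t_k})$ and is exited (with episode $k+1$ starting at time $t_{k+1}:=t$) precisely when this guard fails, so $\det(\widehat\Sigma_{t_{k+1}})>2\det(\widehat\Sigma_{t_k})$ for every $k\in[K_T-1]$ (nothing is claimed for the last, possibly truncated, episode). Chaining these inequalities and using $\widehat\Sigma_{t_1}=\widehat\Sigma_1=\lambda I_d$ gives $\det(\widehat\Sigma_{t_{K_T}})>2^{\,K_T-1}\lambda^d$.

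Next I would derive an a priori upper bound on $\det(\widehat\Sigma_{T+1})$. Since each update adds the positive semidefinite matrix $\bar\sigma_t^{-2}\phi_{W_k}(s_t,a_t)\phi_{W_k}(s_t,a_t)^\top$, the sequence $\widehat\Sigma_t$ is nondecreasing in the Loewner order, so $\det(\widehat\Sigma_{t_{K_T}})\le\det(\widehat\Sigma_{T+1})$. To bound the latter I would control the normalized feature $x_t:=\bar\sigma_t^{-1}\phi_{W_k}(s_t,a_t)$ using two facts: $\bar\sigma_t^2\ge H^2/d$ by the definition of $\bar\sigma_t$ in line~\ref{alg:variance}; and the clipping and recentering steps (lines~\ref{alg:clipping} and~\ref{alg:recentering}) force $W_k(s)\in[0,H]$ for all $s$, so Assumption~\ref{assumption:scale} gives $\|\phi_{W_k}(s_t,a_t)\|_2\le H$. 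Hence $\|x_t\|_2^2\le d$, so $\mathrm{tr}(\widehat\Sigma_{T+1})=d\lambda+\sum_{t=1}^T\|x_t\|_2^2\le d(\lambda+T)$, and the determinant--trace (AM--GM) inequality gives $\det(\widehat\Sigma_{T+1})\le(\lambda+T)^d$. With $\lambda=1/B_\theta^2$ this reads $\det(\widehat\Sigma_{T+1})\le\lambda^d(1+TB_\theta^2)^d\le\lambda^d(1+TH^2B_\theta^2/d)^d$, where the last inequality uses the mild condition $H\ge\sqrt d$ (which may always be arranged by enlarging $H$; otherwise one just keeps the slightly sharper $(1+TB_\theta^2)^d$).

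Finally, combining the two bounds gives $2^{\,K_T-1}\lambda^d<\lambda^d(1+TH^2B_\theta^2/d)^d$, and taking logarithms yields $K_T\le 1+d\log(1+TH^2B_\theta^2/d)$, which is the claim.

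I do not anticipate a real obstacle. The two points deserving care are: reading off the \emph{strict} doubling of $\det(\widehat\Sigma_t)$ from the guard in line~\ref{execution-start} while not over-claiming for the truncated final episode; and verifying that $W_k$ genuinely lies in $[0,H]$, so that Assumption~\ref{assumption:scale} can be invoked with the span bound $H$ rather than with $(1-\gamma)^{-1}=\Theta(\sqrt T)$---this is exactly what the clipping operation buys, and it is what keeps $K_T$ at $\widetilde{\mathcal O}(d)$ instead of a quantity growing with $T$. Everything else (Loewner monotonicity, the trace bound, AM--GM, and taking logs) is routine.
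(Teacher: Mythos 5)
Your proof is correct and follows essentially the same route as the paper: the episode-switching guard forces $\det(\widehat\Sigma_{t_{k+1}})>2\det(\widehat\Sigma_{t_k})$, so $\det(\widehat\Sigma_{T+1})\geq 2^{K_T-1}\lambda^d$, and a determinant--trace bound on $\det(\widehat\Sigma_{T+1})$ then caps $K_T$. The only substantive difference is in how the summands of the Gram matrix are bounded: you work with the weighted features $\bar\sigma_t^{-1}\phi_{W_k}(s_t,a_t)$ (which is what line~\ref{alg:parameter1} actually accumulates), using $\bar\sigma_t^2\geq H^2/d$ to get $\|x_t\|_2\leq\sqrt d$ and hence $\det(\widehat\Sigma_{T+1})\leq(\lambda+T)^d$, whereas the paper's proof writes $\widehat\Sigma_{T+1}$ as an unweighted sum of $\phi_{W_k}\phi_{W_k}^\top$ with $\|\phi_{W_k}\|_2\leq H$ and obtains $(\lambda+TH^2/d)^d$. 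Your version is the more faithful one given the algorithm's update rule, but as you note it yields $K_T\leq 1+d\log_2(1+TB_\theta^2)$, which matches the stated $1+d\log(1+TH^2B_\theta^2/d)$ only under the mild extra condition $H\geq\sqrt d$; without it your bound is simply the (slightly sharper, when $H<\sqrt d$) $d\log_2(1+TB_\theta^2)$, which is equally serviceable everywhere the lemma is invoked. This is a cosmetic mismatch inherited from an inconsistency in the paper's own displayed Gram matrix, not a gap in your argument.
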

Then it follows form~\Cref{lem:num-episodes} that $I_2=\widetilde{\mathcal{O}}(d\sqrt{HT})$
where $\widetilde{\mathcal{O}}(\cdot)$ hides a logarithmic factor in $THB_\theta$.

\paragraph{Regret Term $I_3$}
We first observe that $$\langle \phi_{V_k}(s_t,a_t),\theta^*\rangle- \min_{s'\in\mathcal{S}}V_k(s')=\langle \phi_{W_k}(s_t,a_t),\theta^*\rangle$$
since $\langle \phi_{V_k}(s_t,a_t),\theta^*\rangle=[\mathbb{P}V_k](s_t,a_t)$. This implies that
$$I_3=\gamma\sum_{k=1}^{K_T}\sum_{t=t_k}^{t_{k+1}-1}\underbrace{\left(\langle \phi_{W_k}(s_t,a_t),\theta^*\rangle - W_k(s_{t+1})\right)}_{\eta_t}.$$
Then $\{\eta_t\}_{t=1}^T$ is a martingale difference sequence. Moreover, $|\eta_t|\leq H$ as $W_k(s)\in[0,H]$ for any $s\in\mathcal{S}$. Therefore, applying the Azuma-Hoeffding inequality, we deduce the following upper bound on $I_3$.
\begin{lemma}\label{lem:regret-term3}
It holds with probability at least $1-\delta$ that $I_3\leq H\sqrt{2T\log(1/\delta)}.$
\end{lemma}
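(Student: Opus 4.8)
The plan is to recognize $I_3$, after reindexing, as a discount-scaled sum of a martingale difference sequence and then invoke the Azuma--Hoeffding inequality. First I would rewrite
\[
I_3=\gamma\sum_{t=1}^{T}\eta_t,\qquad \eta_t=\langle \phi_{W_k}(s_t,a_t),\theta^*\rangle - W_k(s_{t+1}),
\]
using that the episodes partition $[1:T]$ (recall the convention $t_{K_T+1}=T+1$), so that for each $t$ there is a unique episode index $k=k(t)$ with $t\in[t_k:t_{k+1}-1]$, and $\eta_t$ is exactly the summand displayed just before the statement.

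Next I would verify that $\{\eta_t\}_{t=1}^T$ is a martingale difference sequence with respect to the filtration $\mathcal{G}_t=\sigma(s_1,\ldots,s_t)$. The episode boundaries $t_1<t_2<\cdots$ are produced by the determinant-doubling test on the Gram matrices (line~\ref{execution-start}), which are deterministic functions of the observed history; hence the episode index at time $t$, the value function $W_k$ (built during the planning phase of episode $k$ from data observed no later than $t_k\le t$), and the deterministic policy $\pi_k$ are all $\mathcal{G}_t$-measurable, and since $a_t=\pi_k(s_t)$, so is $\phi_{W_k}(s_t,a_t)$. Using $\langle \phi_{W_k}(s_t,a_t),\theta^*\rangle=[\mathbb{P}W_k](s_t,a_t)=\mathbb{E}[W_k(s_{t+1})\mid\mathcal{G}_t]$, we get $\mathbb{E}[\eta_t\mid\mathcal{G}_t]=0$. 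For boundedness, the clipping and recentering steps (lines~\ref{alg:clipping}--\ref{alg:recentering}) force $W_k(s)\in[0,H]$ for all $s\in\mathcal{S}$, so both $W_k(s_{t+1})$ and its conditional expectation $\langle \phi_{W_k}(s_t,a_t),\theta^*\rangle$ lie in $[0,H]$, whence $|\eta_t|\le H$.

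Then I would apply the Azuma--Hoeffding inequality to $\{\eta_t\}_{t=1}^T$: with probability at least $1-\delta$, $\sum_{t=1}^{T}\eta_t\le H\sqrt{2T\log(1/\delta)}$. On this event, since $\gamma\in[0,1)$ and the bound $H\sqrt{2T\log(1/\delta)}$ is nonnegative, $I_3=\gamma\sum_{t=1}^{T}\eta_t\le\max\{0,\sum_{t=1}^{T}\eta_t\}\le H\sqrt{2T\log(1/\delta)}$, which is the claimed bound. There is no deep obstacle here; the only point needing care is the measurability bookkeeping ensuring that the random episode index and the value function $W_k$ active at time $t$ depend only on information available by time $t_k\le t$, so that $\{\eta_t\}$ is genuinely a martingale difference sequence — the rest is a textbook concentration argument.
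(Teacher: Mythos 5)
Your proposal is correct and follows essentially the same route as the paper: identify the summand of $I_3$ (rewritten via $W_k$) as a martingale difference sequence bounded by $H$ thanks to the clipping/recentering, and apply Azuma--Hoeffding. Your treatment is if anything slightly more careful than the paper's, in spelling out the measurability of the episode index and of $\phi_{W_k}(s_t,a_t)$ and in handling the harmless factor $\gamma\in[0,1)$ explicitly.
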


\paragraph{Regret Term $I_4$} Let us sketch the idea of how the term $I_4$ can be bounded while a more rigorous proof is given in the appendix. Any $\theta\in\mathcal{C}_k$ induces a probability distribution, which implies that $\langle \phi_{V_k}(s_t,a_t),\theta-\theta^*\rangle=\langle \phi_{W_k}(s_t,a_t),\theta-\theta^*\rangle\in[-H,H]$. Moreover, assuming that $\theta^*\in \mathcal{C}_k$ based on \Cref{lem:confidence-ellipsoid},
\begin{align*}
    \langle \phi_{W_k}(s_t,a_t),\theta-\theta^*\rangle
    &\leq \left\|\phi_{W_k}(s_t,a_t)\right\|_{\widehat \Sigma_{t}^{-1}}\left(\|\theta-\widehat \theta_{t_k}\|_{\widehat\Sigma_{t}}+\|\widehat \theta_{t_k}-\theta^*\|_{\widehat\Sigma_{t}}\right)\\
    &\leq 2\left\|\phi_{W_k}(s_t,a_t)\right\|_{\widehat \Sigma_{t}^{-1}}\left(\|\theta-\widehat \theta_{t_k}\|_{\widehat\Sigma_{t_k}}+\|\widehat \theta_{t_k}-\theta^*\|_{\widehat\Sigma_{t_k}}\right)\\
    &\leq4\widehat \beta_T\bar\sigma_t\left\|\bar\sigma_t^{-1}\phi_{W_k}(s_t,a_t)\right\|_{\widehat \Sigma_{t}^{-1}}
\end{align*}
where the first inequality is from the Cauchy-Schwarz inequality, the second one holds because time step $t$ is in episode $k$ and thus $\det(\widehat \Sigma_t)\leq 2\det(\widehat \Sigma_{t_{k}})$, and the third one follows from $\theta^*,\theta\in\mathcal{C}_k$ and $\widehat\beta_{t_k}\leq \widehat \beta_T$. Then we may argue that $I_4$ is less than or equal to
\begin{align*}
 \sum_{k=1}^{K_T}\sum_{t=t_k}^{t_{k+1}-1}\min\left\{H, 4\widehat \beta_T\bar\sigma_t\left\|\bar\sigma_t^{-1}\phi_{W_k}(s_t,a_t)\right\|_{\widehat \Sigma_{t}^{-1}}\right\}
&\leq 4\widehat\beta_T\sum_{k=1}^{K_T}\sum_{t=t_k}^{t_{k+1}-1}\bar\sigma_t\min\left\{1, \left\|\bar\sigma_t^{-1}\phi_{W_k}(s_t,a_t)\right\|_{\widehat \Sigma_{t}^{-1}}\right\}\\
&\leq 4\widehat \beta_T\underbrace{\sqrt{\sum_{t=1}^T\bar\sigma_t^2}}_{J_1}\underbrace{\sqrt{\sum_{k=1}^{K_T}\sum_{t=t_k}^{t_{k+1}-1}\min\left\{1, \left\|\bar\sigma_t^{-1}\phi_{W_k}(s_t,a_t)\right\|_{\widehat \Sigma_{t}^{-1}}^2\right\}}}_{J_2}
\end{align*}
where the first inequality holds because $H\leq 4\widehat\beta_T\bar\sigma_t$ and the second one is by the Cauchy-Schwarz inequality. Here, we deduce from \citep[Lemma 11,][]{abbasi2011} that $J_2\leq \sqrt{2d\log(1+T/\lambda)}$. To get an upper bound on $J_1$, we show the following two lemmas.
\begin{lemma}\label{lem:I4-variance}
Suppose that the event of \Cref{lem:confidence-ellipsoid} holds and that
$$\sum_{k=1}^{K_T}\sum_{t=t_k}^{t_{k+1}-1}\left([\mathbb{P}W_k^2](s_t,a_t)-W_k^2(s_{t+1})\right)\leq H^2\sqrt{2T\log(1/\delta)}.$$
Then it holds that
$$\sum_{k=1}^{K_T}\sum_{t=t_k}^{t_{k+1}-1}[\mathbb{V}W_k](s_t,a_t)=\widetilde{\mathcal{O}}\left(HT+H^2d\sqrt{T}\right)$$
where $\widetilde{\mathcal{O}}(\cdot)$ hides logarithmic factors in $T/(\delta\lambda)$.
\end{lemma}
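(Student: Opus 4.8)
The plan is to combine a law-of-total-variance argument with a self-bounding step that decouples $S:=\sum_{k=1}^{K_T}\sum_{t=t_k}^{t_{k+1}-1}[\mathbb{V}W_k](s_t,a_t)$ from the variance proxies $\bar\sigma_t^2$. First I would write $[\mathbb{V}W_k](s_t,a_t)=[\mathbb{P}W_k^2](s_t,a_t)-([\mathbb{P}W_k](s_t,a_t))^2$, use the hypothesis to replace $\sum_{k,t}[\mathbb{P}W_k^2](s_t,a_t)$ by $\sum_{k,t}W_k^2(s_{t+1})$ up to an additive $H^2\sqrt{2T\log(1/\delta)}$, and then apply the within-episode identity $\sum_{t=t_k}^{t_{k+1}-1}W_k^2(s_{t+1})=\sum_{t=t_k}^{t_{k+1}-1}W_k^2(s_t)-W_k^2(s_{t_k})+W_k^2(s_{t_{k+1}})$, bounding the $K_T$ boundary terms via $W_k\in[0,H]$ and using $K_T=\widetilde{\mathcal{O}}(d)$ from \Cref{lem:num-episodes}. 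This reduces the task to controlling $\sum_{k,t}\big(W_k^2(s_t)-([\mathbb{P}W_k](s_t,a_t))^2\big)$.

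Next I would derive an approximate one-step Bellman inequality along the executed trajectory. Since $a_t=\pi_k(s_t)$ maximizes $Q_k(s_t,\cdot)$ during the execution phase, $W_k(s_t)\le\widetilde V_k(s_t)-\min_{s'}V_k(s')=Q_k(s_t,a_t)-\min_{s'}V_k(s')$; feeding this into \Cref{lem:convergence-devi}, using $\langle\phi_{V_k}(s_t,a_t),\theta\rangle=\langle\phi_{W_k}(s_t,a_t),\theta\rangle+\min_{s'}V_k(s')$ for $\theta\in\mathcal{C}_k\subseteq\mathcal{B}$, and discarding $-(1-\gamma)\min_{s'}V_k(s')\le0$, I obtain $W_k(s_t)\le r(s_t,a_t)+\gamma[\mathbb{P}W_k](s_t,a_t)+\epsilon_t$, where $\epsilon_t:=\gamma\max_{\theta\in\mathcal{C}_k}\langle\phi_{W_k}(s_t,a_t),\theta-\theta^*\rangle+\gamma^N$ satisfies $0\le\epsilon_t\le H+1$ under the event $\theta^*\in\mathcal{C}_k$ of \Cref{lem:confidence-ellipsoid}. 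Squaring, and using $\gamma\le1$ and $[\mathbb{P}W_k](s_t,a_t)\in[0,H]$, gives the pointwise estimate $W_k^2(s_t)-([\mathbb{P}W_k](s_t,a_t))^2\le 2Hr(s_t,a_t)+2r(s_t,a_t)^2+2H\epsilon_t+2\epsilon_t^2$, which after summation contributes $\widetilde{\mathcal{O}}(HT)+\widetilde{\mathcal{O}}(H)\sum_{k,t}\epsilon_t$. Now $\sum_{k,t}\epsilon_t$ is exactly $I_4+T\gamma^N$ where $I_4$ is the regret term of the decomposition, since $\langle\phi_{V_k}(s_t,a_t),\theta-\theta^*\rangle=\langle\phi_{W_k}(s_t,a_t),\theta-\theta^*\rangle$ for $\theta,\theta^*\in\mathcal{B}$; reusing the $I_4$ analysis (Cauchy--Schwarz plus \citep[Lemma 11,][]{abbasi2011}) gives $\sum_{k,t}\epsilon_t\le 4\widehat\beta_T\sqrt{\sum_t\bar\sigma_t^2}\cdot\sqrt{2d\log(1+T/\lambda)}+T\gamma^N$.

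The crux, and the step I expect to require the most care, is that this bound involves $\sqrt{\sum_t\bar\sigma_t^2}$, which must be fed back into $S$. Using $\bar\sigma_t^2=\max\{H^2/d,\,[\bar{\mathbb{V}}_t W_k](s_t,a_t)+E_t\}$, the nonnegativity of $E_t$, and \Cref{lem:confidence-ellipsoid} (which yields $[\bar{\mathbb{V}}_t W_k](s_t,a_t)\le[\mathbb{V}W_k](s_t,a_t)+E_t$), I get $\sum_t\bar\sigma_t^2\le TH^2/d+S+2\sum_tE_t$; moreover, expanding the two $\min$ terms in \eqref{algo:var_error} and again applying Cauchy--Schwarz and the elliptic-potential lemma — to $\phi_{W_k}$ under the $\widehat\Sigma_t$-weighting and to $\phi_{W_k^2}$ under $\widetilde\Sigma_t$, using $\|\phi_{W_k^2}\|_2\le H^2$ — shows $\sum_tE_t\le\widetilde{\mathcal{O}}(H\check\beta_T\sqrt d)\sqrt{\sum_t\bar\sigma_t^2}+\widetilde{\mathcal{O}}(H^2d\sqrt T)$. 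Substituting these relations back and forth produces an inequality of the form $S\le A+C\sqrt{B+S}$, with $A=\widetilde{\mathcal{O}}(HT+H^2d\sqrt T)$ (collecting the $HT$, $K_TH^2$, $H^2\sqrt T$, and $\gamma^N$ contributions), $B=\widetilde{\mathcal{O}}(H^2T/d+H^2d\sqrt T)$, and $C=\widetilde{\mathcal{O}}(H\widehat\beta_T\sqrt d)$.

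Finally I would solve this quadratic-type inequality via $C\sqrt{B+S}\le C\sqrt B+C\sqrt S\le C\sqrt B+\tfrac{C^2}{2}+\tfrac{S}{2}$, giving $S\le 2A+2C\sqrt B+C^2$, and then substitute $\widehat\beta_T=\widetilde{\mathcal{O}}(\sqrt d)$, $\check\beta_T=\widetilde{\mathcal{O}}(d)$, $\widetilde\beta_T=\widetilde{\mathcal{O}}(H^2\sqrt d)$ to verify that $2A$, $C^2=\widetilde{\mathcal{O}}(H^2d^2)$, and $C\sqrt B$ are all $\widetilde{\mathcal{O}}(HT+H^2d\sqrt T)$ in the regime of interest (where $T$ dominates $d^2$), which delivers the claimed bound. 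The two structural ingredients that make the self-bounding close cleanly are the floor $\bar\sigma_t^2\ge H^2/d$ — precisely what turns the cross term $TH^2/d$ into the benign $H\sqrt{T/d}$ after taking a square root — and the fact that $W_k$ is valued in $[0,H]$ rather than in $[0,(1-\gamma)^{-1}]$, which is exactly what the clipping and re-centering steps buy and what keeps the Bernstein-type error terms proportional to $H$ instead of $(1-\gamma)^{-1}$.
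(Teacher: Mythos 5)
Your decomposition of $\sum_{k,t}[\mathbb{V}W_k](s_t,a_t)$ into the martingale term, the telescoping/boundary term of size $H^2K_T$, and $\sum_{k,t}\bigl(W_k^2(s_t)-([\mathbb{P}W_k](s_t,a_t))^2\bigr)$ is exactly the paper's, as is the use of the approximate Bellman inequality $W_k(s_t)\le r(s_t,a_t)+\gamma[\mathbb{P}W_k](s_t,a_t)+\gamma\max_{\theta\in\mathcal{C}_k}\langle\phi_{W_k}(s_t,a_t),\theta-\theta^*\rangle+\gamma^N$ extracted from \Cref{lem:convergence-devi}. Where you genuinely diverge is in controlling $\sum_{k,t}\max_{\theta\in\mathcal{C}_k}\langle\phi_{W_k}(s_t,a_t),\theta-\theta^*\rangle$: you retain the factor $\sqrt{\sum_t\bar\sigma_t^2}$ from Cauchy--Schwarz and close a self-bounding inequality $S\le A+C\sqrt{B+S}$, whereas the paper short-circuits any recursion with the deterministic bound $\bar\sigma_t\le 2H$ (immediate from $[\bar{\mathbb{V}}_tW_k]\le H^2$ and $E_t\le 2H^2$), after which the elliptic-potential lemma and a plain $\sqrt{T}$ give $\widetilde{\mathcal{O}}(H^2\widehat\beta_T\sqrt{dT})=\widetilde{\mathcal{O}}(H^2d\sqrt{T})$ directly. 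Your route is sound (the system of inequalities involves only finite quantities and solves as you describe, and it is the same machinery the paper does deploy later for $I_4$ itself), but for this lemma it is heavier than necessary and, as you concede, leaves behind additive remainders such as $C^2=\widetilde{\mathcal{O}}(H^2d^2)$ and the residue of the inner $\sum_tE_t$ recursion, which are absorbed into $\widetilde{\mathcal{O}}(HT+H^2d\sqrt{T})$ only when $T\gtrsim\mathrm{poly}(d)$; the paper's statement carries no such caveat, so you should either add the regime assumption or adopt the $\bar\sigma_t\le 2H$ shortcut. (These leftovers are at most $T^{1/4}$ in $T$ and would not alter the final theorem.) A last cosmetic point: squaring the Bellman inequality to get $2Hr+2r^2+2H\epsilon_t+2\epsilon_t^2$ works, but the paper's factorization $(W_k(s_t)-[\mathbb{P}W_k](s_t,a_t))(W_k(s_t)+[\mathbb{P}W_k](s_t,a_t))$ with the second factor bounded by $2H$ reaches the same place a bit more directly.
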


\Cref{lem:I4-variance} gives an upper bound on the variance term due to random transitions. We remark that the bound holds true even under our extended value iteration scheme with discounting and clipping.

\begin{lemma}\label{lem:I4-errors}
Suppose that the event of \Cref{lem:confidence-ellipsoid} holds. Then it holds that
$$\sum_{t=1}^T E_t=\widetilde{\mathcal{O}}\left(d^{3/2}H^2\sqrt{T}\right)$$
where $\widetilde{\mathcal{O}}(\cdot)$ hides logarithmic factors in $TH/\lambda$.
\end{lemma}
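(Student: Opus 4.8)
\emph{Proof proposal.}
The plan is to split $E_t$ into its two summands and bound each by a single application of the elliptical potential lemma, after absorbing the $\min\{H^2,\cdot\}$ caps; no martingale or variance--concentration argument is needed here (in particular nothing from \Cref{lem:I4-variance}, and the hypothesis that the event of \Cref{lem:confidence-ellipsoid} holds is not actually used), only \citep[Lemma 11,][]{abbasi2011}, the boundedness in \Cref{assumption:scale}, and the algorithmic lower bound $\bar\sigma_t^2\ge H^2/d$ built into line~\ref{alg:variance}. Write $E_t=E_t^{(1)}+E_t^{(2)}$ with
\[
E_t^{(1)}=\min\left\{H^2,\;2H\check{\beta}_t\|\phi_{W_k}(s_t,a_t)\|_{\widehat\Sigma_t^{-1}}\right\},\qquad
E_t^{(2)}=\min\left\{H^2,\;\widetilde{\beta}_t\|\phi_{W_k^2}(s_t,a_t)\|_{\widetilde\Sigma_t^{-1}}\right\},
\]
where $k$ denotes the episode containing the time step $t$.

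For $E_t^{(1)}$ I would first rescale by the variance weights. Since $\widehat\Sigma_{t+1}=\widehat\Sigma_t+\bar\sigma_t^{-2}\phi_{W_k}(s_t,a_t)\phi_{W_k}(s_t,a_t)^\top$, putting $x_t:=\bar\sigma_t^{-1}\phi_{W_k}(s_t,a_t)$ gives $\widehat\Sigma_{t+1}=\widehat\Sigma_t+x_tx_t^\top$ and $\|\phi_{W_k}(s_t,a_t)\|_{\widehat\Sigma_t^{-1}}=\bar\sigma_t\|x_t\|_{\widehat\Sigma_t^{-1}}$. Because $\bar\sigma_t^2\ge H^2/d$ and $\check{\beta}_t\ge 4\sqrt d\log(4t^2/\delta)\ge\sqrt d$, we have $2H\check{\beta}_t\bar\sigma_t\ge H^2$, hence $\min\{H^2,\,2H\check{\beta}_t\bar\sigma_t u\}\le 2H\check{\beta}_t\bar\sigma_t\min\{1,u\}$ for all $u\ge 0$; with $\check{\beta}_t\le\check{\beta}_T$ this gives $E_t^{(1)}\le 2H\check{\beta}_T\,\bar\sigma_t\min\{1,\|x_t\|_{\widehat\Sigma_t^{-1}}\}$. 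Summing over $t$ and applying Cauchy--Schwarz,
\[
\sum_{t=1}^T E_t^{(1)}\le 2H\check{\beta}_T\sqrt{\sum_{t=1}^T\bar\sigma_t^2}\cdot\sqrt{\sum_{t=1}^T\min\left\{1,\|x_t\|_{\widehat\Sigma_t^{-1}}^2\right\}}.
\]
For the first factor, the crude pointwise bounds $[\bar{\mathbb V}_tW_k](s_t,a_t)\le H^2$ (a projection onto $[0,H^2]$ minus a nonnegative term) and $E_t\le 2H^2$ give $\bar\sigma_t^2\le 3H^2$, so $\sum_t\bar\sigma_t^2\le 3H^2T$. For the second factor, $\|x_t\|_2\le\bar\sigma_t^{-1}\|\phi_{W_k}(s_t,a_t)\|_2\le(\sqrt d/H)\cdot H=\sqrt d$ by \Cref{assumption:scale} (applied with $F=W_k\in[0,H]$), so \citep[Lemma 11,][]{abbasi2011} yields $\sum_t\min\{1,\|x_t\|_{\widehat\Sigma_t^{-1}}^2\}\le 2d\log(1+T/\lambda)$. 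Since $\check{\beta}_T=\widetilde{\mathcal O}(d)$, multiplying the three factors gives $\sum_t E_t^{(1)}=\widetilde{\mathcal O}(H\cdot d\cdot H\sqrt T\cdot\sqrt d)=\widetilde{\mathcal O}(d^{3/2}H^2\sqrt T)$.

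The term $E_t^{(2)}$ is the same argument but simpler because $\widetilde\Sigma_t$ is an unweighted Gram matrix. Since $\widetilde{\beta}_t\ge 4H^2\log(4t^2/\delta)\ge H^2$, we get $E_t^{(2)}\le\widetilde{\beta}_T\min\{1,\|\phi_{W_k^2}(s_t,a_t)\|_{\widetilde\Sigma_t^{-1}}\}$, and then Cauchy--Schwarz together with $\|\phi_{W_k^2}(s_t,a_t)\|_2\le H^2$ from \Cref{assumption:scale} (with $F=W_k^2\in[0,H^2]$) and \citep[Lemma 11,][]{abbasi2011} give $\sum_t E_t^{(2)}\le\widetilde{\beta}_T\sqrt T\cdot\sqrt{2d\log(1+TH^4/(d\lambda))}=\widetilde{\mathcal O}(H^2\sqrt d\cdot\sqrt T\cdot\sqrt d)=\widetilde{\mathcal O}(dH^2\sqrt T)$, which is dominated by the bound on $\sum_t E_t^{(1)}$. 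Adding the two estimates proves the claim. The main things to get right are the two cap--absorption inequalities $2H\check{\beta}_t\bar\sigma_t\ge H^2$ and $\widetilde{\beta}_t\ge H^2$ — both relying on the constant $\bar\sigma_t^2\ge H^2/d$ hard-wired into the algorithm and on the additive terms inside $\check{\beta}_t$ and $\widetilde{\beta}_t$ — and then tracking the exponents of $d$ and $H$ so that the first part contributes exactly the advertised $d^{3/2}H^2\sqrt T$; the rest is a routine elliptical-potential computation.
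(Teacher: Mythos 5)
Your proposal is correct and follows essentially the same route as the paper's proof: the same two-term split of $E_t$, the same cap-absorption inequalities driven by $\bar\sigma_t^2\geq H^2/d$ (for the first term) and $\widetilde\beta_t\geq H^2$ (for the second), then Cauchy--Schwarz plus the elliptical potential lemma with $\|\bar\sigma_t^{-1}\phi_{W_k}\|_2\leq\sqrt{d}$ and $\|\phi_{W_k^2}\|_2\leq H^2$, and finally $\check\beta_T=\widetilde{\mathcal{O}}(d)$, $\widetilde\beta_T=\widetilde{\mathcal{O}}(H^2\sqrt{d})$. The only (cosmetic) differences are that you bound $\sqrt{\sum_t\bar\sigma_t^2}\leq\sqrt{3}H\sqrt{T}$ inside Cauchy--Schwarz where the paper first uses the pointwise bound $\bar\sigma_t\leq 2H$, and your observation that the high-probability event is not actually needed is accurate --- the paper's argument is likewise deterministic.
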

Combining the above results, it holds that
$$I_4=\widetilde{\mathcal{O}}\left(d\sqrt{HT}+H\sqrt{dT}+ d^{7/4}HT^{1/4} \right)$$
where $\widetilde{\mathcal{O}}(\cdot)$ hides logarithmic factors in $THB_\theta/\delta$.

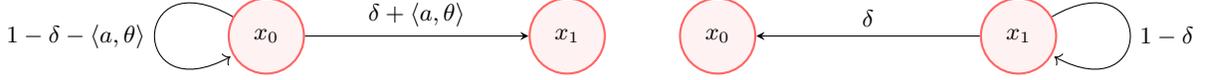
\begin{figure*}[!ht]
\begin{center}
\begin{tikzpicture}[
    roundnode/.style={circle, draw=red!60, fill=red!5, thick, minimum size = 10mm},
    squarednode/.style={rectangle, draw=red!60, fill=red!5, thick},
    ]
    \node[roundnode] (x_0) at (0,1) {\small$x_{0}$};
    \node[roundnode] (x_1) at (4,1) {\small$x_{1}$};

    \node[roundnode] (x'_0) at (6,1) {\small$x_{0}$};
    \node[roundnode] (x'_1) at (10,1) {\small$x_{1}$};
    \path [-stealth]
        (x_0) edge [] node[above] {\small $\delta+\langle a,\theta\rangle$} (x_1)
        (x'_1) edge [ ] node[above] {\small $\delta$} (x'_0)
        (x_0) edge [out=150,in=210, loop] node[left] {\small $1-\delta-\langle a,\theta\rangle$} ()
        (x'_1) edge [out=30,in=330, loop] node[right] {\small $1-\delta$} ();
\end{tikzpicture}%
\end{center}
\caption{Illustration of the Hard-to-Learn Infinite-Horizon MDP Instance}\label{fig:infinite}
\end{figure*}

\section{Regret Lower Bound}\label{sec:lb}

We deduce our regret lower bound based on the same hard-to-learn MDP instance due to~\cite{yuewu2022}, illustrated in \Cref{fig:infinite}. We derive the result based on the observation that for the MDP instance, the span of the underlying optimal bias function coincides with the diameter up to a constant multiplicative factor. 

There are two states $x_0$ and $x_1$ as in \Cref{fig:infinite}. The action space is given by $\mathcal{A}=\{-1,1\}^{d-1}$, and the reward function is given by $r(x_0,a)=0$ and $r(x_1,a)=1$ for any $a\in \mathcal{A}$.  We set the transition core $\bar\theta$ as
$$\bar \theta = \left(\frac{\theta}{\alpha}, \frac{1}{\beta}\right)\quad\text{where}\quad \theta\in\left\{-\frac{ \Delta}{d-1},\frac{\Delta}{d-1}\right\}^{d-1},$$
with $$\Delta=\frac{(d-1)}{45\sqrt{(2T\log 2)/(5\delta)}},$$
$\alpha=\sqrt{\Delta/((d-1)(1+\Delta))}$, and $\beta = \sqrt{1/(1+\Delta)}$. The feature vector is given by 
$\phi(x_0,a,x_0)=(-\alpha a, \beta (1-\delta))$, $\varphi(x_0,a,x_1)=(\alpha a, \beta\delta)$, $\varphi(x_1,a,x_0)=(0,\beta\delta)$, and $\varphi(x_1,a,x_1)=(0, \beta (1-\delta))$.

As a higher stationary probability at state $x_1$ means a larger average reward, choosing the action $a$ that satisfies $\langle a,\theta\rangle = \Delta$ is optimal. Hence, under the optimal policy, the stationary distribution is given by 
$$\left(\mu^*(x_0),\mu^*(x_1)\right)=\left(\frac{\delta}{2\delta+\Delta}, \frac{\delta+\Delta}{2\delta+\Delta}\right),$$
and therefore, the optimal average reward is given by
$$J^* =\frac{\delta+\Delta}{2\delta+\Delta}.$$

Moreover, we may observe that
\begin{align*}\left(v^*(x_0),v^*(x_1)\right)&=\left(0,\frac{1}{2\delta+\Delta}\right),\\
\left(q^*(x_0,a),q^*(x_1,a)\right)&=\left(\frac{\langle a,\theta\rangle - \Delta}{2\delta+\Delta},\frac{1}{2\delta+\Delta}\right)
\end{align*}
satisfy the Bellman optimality condition~\eqref{bellman-average}. In particular, we have
$$\mathrm{sp}(v^*) = \frac{1}{2\delta+ \Delta}.$$
When $T\geq 16(d-1)^2\delta^{-1}/2025$, we have $\Delta\leq \delta/3$.
Therefore, under this construction, we have
$$\frac{1}{3\delta}\leq \mathrm{sp}(v^*)\leq \frac{1}{2\delta}.$$

For this MDP instance, based on \citep[Theorem 5.5,][]{yuewu2022},we may deduce the following regret lower bound.
\begin{theorem}\label{thm:lb}
Suppose that $d\geq 2$ and $T\geq 16(d-1)^2\delta^{-1}/2025$. Then $\|\bar \theta\|_2 \leq 1+\delta/3$ for any $\theta\in\{-\Delta/(d-1),\Delta/(d-1)\}^{d-1}$ and $\|\varphi_F(x_0,a)\|_2,\|\varphi_F(x_1,a)\|_2\leq L$ for any $F:\mathcal{S}\to[0,L]$. Moreover, the span, $\mathrm{sp}(v^*)$, of the optimal bias function for the MDP instance satisfies $$\frac{1}{3\delta}\leq \mathrm{sp}(v^*)\leq \frac{1}{2\delta}.$$
Furthermore, for any algorithm, there exists $\theta\in\{-\Delta/(d-1),\Delta/(d-1)\}^{d-1}$ under which
$$\mathbb{E}\left[\regret(T)\right]\geq\frac{1}{2025}\sqrt{\frac{T}{\delta}}=\Omega\left(d\sqrt{\mathrm{sp}(v^*)T}\right).$$
\end{theorem}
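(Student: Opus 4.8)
The plan is to establish the three assertions of \Cref{thm:lb} separately: the norm bounds on the transition core $\bar\theta$ and on the feature maps $\varphi_F$, the closed form and two-sided estimate for $\mathrm{sp}(v^*)$, and then the regret lower bound itself, obtained by feeding this instance into the lower-bound reduction of~\cite{yuewu2022}. The only genuinely new ingredient compared to~\cite{yuewu2022} is the span computation, which lets us replace the diameter $D=1/\delta$ of the instance by $\mathrm{sp}(v^*)$.

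For the norm bounds I would argue by direct computation. Since $\theta$ has $d-1$ coordinates each equal to $\pm\Delta/(d-1)$, we have $\|\theta\|_2^2=\Delta^2/(d-1)$, and with $\alpha^2=\Delta/((d-1)(1+\Delta))$ and $\beta^2=1/(1+\Delta)$ this gives $\|\bar\theta\|_2^2=\|\theta\|_2^2/\alpha^2+1/\beta^2=\Delta(1+\Delta)+(1+\Delta)=(1+\Delta)^2$, hence $\|\bar\theta\|_2=1+\Delta\le 1+\delta/3$ once we know $\Delta\le\delta/3$. For the feature maps I would expand $\varphi_F(x_i,a)=\sum_{s'}\varphi(x_i,a,s')F(s')$ using the explicit features, obtaining $\varphi_F(x_0,a)=(\alpha a\,(F(x_1)-F(x_0)),\,\beta((1-\delta)F(x_0)+\delta F(x_1)))$ and $\varphi_F(x_1,a)=(0,\,\beta(\delta F(x_0)+(1-\delta)F(x_1)))$. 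Using $\|a\|_2^2=d-1$ so that $\alpha^2\|a\|_2^2=\Delta/(1+\Delta)$, together with the fact that a convex combination of $F(x_0),F(x_1)\in[0,L]$ again lies in $[0,L]$ and $(F(x_1)-F(x_0))^2\le L^2$, yields $\|\varphi_F(x_0,a)\|_2^2\le \tfrac{\Delta}{1+\Delta}L^2+\tfrac{1}{1+\Delta}L^2=L^2$ and $\|\varphi_F(x_1,a)\|_2^2\le\tfrac{1}{1+\Delta}L^2\le L^2$.

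For the span, I would first verify that the stated triple $(J^*,v^*,q^*)$ solves the Bellman optimality system~\eqref{bellman-average}. For $s=x_1$ the one-step expectation of $v^*$ is $(1-\delta)v^*(x_1)$ independent of $a$, and a short calculation gives $q^*(x_1,a)=1+(1-\delta)v^*(x_1)-J^*=1/(2\delta+\Delta)=v^*(x_1)$; for $s=x_0$ the expectation is $(\delta+\langle a,\theta\rangle)v^*(x_1)$, so $q^*(x_0,a)=(\delta+\langle a,\theta\rangle)/(2\delta+\Delta)-J^*=(\langle a,\theta\rangle-\Delta)/(2\delta+\Delta)$, and since $\max_a\langle a,\theta\rangle=\Delta$ (achieved by aligning the sign of each $a_i$ with $\theta_i$) we get $v^*(x_0)=\max_a q^*(x_0,a)=0$. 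Hence $\mathrm{sp}(v^*)=v^*(x_1)-v^*(x_0)=1/(2\delta+\Delta)$, and inserting $0\le\Delta\le\delta/3$ — which follows from the assumed lower bound $T\ge 16(d-1)^2\delta^{-1}/2025$ — yields $1/(3\delta)\le \mathrm{sp}(v^*)\le 1/(2\delta)$.

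Finally, having checked that the instance is a valid linear mixture MDP with $\|\bar\theta\|_2\le 1+\delta/3$ and the feature bound with constant $L$, I would invoke~\cite[Theorem 5.5]{yuewu2022} for the family of instances indexed by $\theta\in\{-\Delta/(d-1),\Delta/(d-1)\}^{d-1}$. That argument is an Assouad/Le Cam-type reduction over the sign hypercube: the learner must identify the sign of each of the $d-1$ coordinates of $\theta$, the per-step suboptimality of playing a wrong action on coordinate $i$ is of order $\Delta/((d-1)(2\delta+\Delta))$, and the choice $\Delta=(d-1)/(45\sqrt{(2T\log 2)/(5\delta)})$ keeps the KL divergence accumulated over $T$ steps for distinguishing $\theta_i=\pm\Delta/(d-1)$ a constant, so that against a uniformly random sign vector a constant fraction of the coordinates are played incorrectly for a constant fraction of the horizon; summing over coordinates produces $\mathbb{E}[\regret(T)]\ge \tfrac{1}{2025}\sqrt{T/\delta}$ for some $\theta$. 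The remaining step is to reinterpret this in terms of the span: by the estimate just established, $1/\delta\ge 2\,\mathrm{sp}(v^*)$, so $\tfrac{1}{2025}\sqrt{T/\delta}=\Omega(d\sqrt{\mathrm{sp}(v^*)T})$, matching \Cref{thm:ub}. I do not expect a genuine obstacle; the only things requiring care are that the instance above meets all hypotheses of the reduction in~\cite{yuewu2022} with the scale constants we want, and that the span inequalities are used in the correct direction — it is the \emph{upper} bound $\mathrm{sp}(v^*)\le 1/(2\delta)$ that converts the $1/\delta$ dependence into the claimed $\mathrm{sp}(v^*)$ dependence, while the lower bound $\mathrm{sp}(v^*)\ge 1/(3\delta)$ certifies the instance lies in the intended span regime.
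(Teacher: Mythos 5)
Your proposal is correct and follows essentially the same route as the paper: verify the instance's norm and feature bounds by direct computation, check that the stated $(J^*,v^*,q^*)$ solve the Bellman optimality system to get $\mathrm{sp}(v^*)=1/(2\delta+\Delta)\in[1/(3\delta),1/(2\delta)]$, and then invoke Theorem 5.5 of \cite{yuewu2022} to convert the $\sqrt{T/\delta}$ lower bound into $\Omega(d\sqrt{\mathrm{sp}(v^*)T})$. The paper provides no further detail beyond the span computation and the citation, so your explicit verification of the Bellman equations and the feature-norm bounds is, if anything, more complete than the original.
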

As pointed out in~\citep[Remark C.1,][]{yuewu2022}, the regret lower bound in~\Cref{thm:lb} also translates to a regret lower bound for learning infinite-horizon average-reward linear MDPs of bounded span.

\section{Conclusion} 

This paper develops a provably efficient algorithm, \texttt{UCLK-C} for learning infinite-horizon average-reward linear mixture MDPs under the Bellman optimality condition. \texttt{UCLK-C} is the first algorithm that guarantees a nearly minimax optimal regret upper bound under the Bellman optimality condition. We establish this result based on our finding that discounted extended value iteration converges even with the additional clipping operation. We expect that this will be useful for infinite-horizon average-reward reinforcement under the Bellman optimality condition. We present some numerical results to test the computational performance of \texttt{UCLK-C} in the appendix.

Although we provide a nearly minimax optimal algorithm for linear mixture MDPs, there still exists a gap between the best-known regret upper and lower bounds for linear MDPs. To close the gap, one may attempt to extend the framework of this paper and other variance-aware parameter estimation schemes \citep[e.g.,][]{He2023} to the linear MDP setting. We propose this as an open problem.

\section{Acknowledgements}

We acknowledge the support of NSF via grant IIS-2007055 and the National Research Foundation of Korea (NRF) grant (No. RS-2024-00350703).
\bibliographystyle{apalike}
\bibliography{mybibfile}
\newpage
\appendix

\section{Confidence Ellipsoids}

In this section, we prove \Cref{lem:confidence-ellipsoid}. We start by analyzing the error term
$$\left|[\bar{\mathbb{V}}_t W_k](s_t,a_t) - [\mathbb{V} W_k](s_t,a_t)\right|.$$
Note that
\begin{align*}
    &\left|[\bar{\mathbb{V}}_t W_k](s_t,a_t) - [\mathbb{V} W_k](s_t,a_t)\right| \\
    & = \left|\left[\langle \phi_{W_k^2}(s_t,a_t),\widetilde\theta_t\rangle\right]_{[0,H^2]} - \langle \phi_{W_k^2}(s_t,a_t), \theta^* \rangle  + \langle \phi_{W_k}(s_t,a_t), \theta^* \rangle^2 - \left[\langle \phi_{W_k}(s_t,a_t),\widehat \theta_t\rangle\right]_{[0,H]}^2\right| \\
    & \leq \underbrace{\left|\left[\langle \phi_{W_k^2}(s_t,a_t),\widetilde\theta_t\rangle\right]_{[0,H^2]} - \langle \phi_{W_k^2}(s_t,a_t), \theta^* \rangle \right|}_{(a)}  + \underbrace{\left| \langle \phi_{W_k}(s_t,a_t), \theta^* \rangle^2 -  \left[\langle \phi_{W_k}(s_t,a_t),\widehat \theta_t\rangle\right]_{[0,H]}^2 \right|}_{(b)}.
\end{align*}
Let us consider term $(a)$ first. Since $\langle \phi_{W_k^2}(s_t,a_t), \theta^* \rangle\in[0,H^2]$, it follows that
$$(a)\leq \min\left\{H^2, \left|\langle \phi_{W_k^2}(s_t,a_t),\widetilde\theta_t\rangle-\langle \phi_{W_k^2}(s_t,a_t), \theta^* \rangle \right|\right\}.$$
Moreover, we may apply the Cauchy-Schwarz inequality to the right-hand side, and we obtain
$$(a)\leq \min\left\{H^2, \left\| \phi_{W_k^2}(s_t,a_t)\right\|_{\widetilde\Sigma_t^{-1}}\left\|\widetilde\theta_t-\theta^* \right\|_{\widetilde \Sigma_t}\right\}.$$
Next we consider term $(b)$. Again, since $\langle \phi_{W_k}(s_t,a_t), \theta^* \rangle\in[0,H]$, we have that $(b)\leq H^2$. Furthermore, 
\begin{align*}
(b)&= \left|\langle \phi_{W_k}(s_t,a_t), \theta^* \rangle +  \left[\langle \phi_{W_k}(s_t,a_t),\widehat \theta_t\rangle\right]_{[0,H]}\right|\cdot \left|\langle \phi_{W_k}(s_t,a_t), \theta^* \rangle -  \left[\langle \phi_{W_k}(s_t,a_t),\widehat \theta_t\rangle\right]_{[0,H]}\right|\\
&\leq 2H \left|\langle \phi_{W_k}(s_t,a_t), \theta^* \rangle -  \left[\langle \phi_{W_k}(s_t,a_t),\widehat \theta_t\rangle\right]_{[0,H]}\right|\\
&\leq 2H \left|\langle \phi_{W_k}(s_t,a_t), \theta^* \rangle -  \langle \phi_{W_k}(s_t,a_t),\widehat \theta_t\rangle\right|\\
&\leq 2H \left\|\phi_{W_k}(s_t,a_t)\right\|_{\widehat\Sigma_t^{-1}}\left\|\widehat\theta_t-\theta^*\right\|_{\widehat \Sigma_t}
\end{align*}
where the first and second inequalities hold because $\langle \phi_{W_k}(s_t,a_t), \theta^* \rangle\in[0,H]$ while the third inequality is due to the Cauchy-Schwarz inequality. Then we deduce that
$$(b)\leq \min\left\{H^2, 2H\left\|\phi_{W_k}(s_t,a_t)\right\|_{\widehat\Sigma_t^{-1}}\left\|\widehat\theta_t-\theta^*\right\|_{\widehat \Sigma_t}\right\}.$$
Therefore, we deduce that
\begin{equation}\label{eq:variance-error-term-bound-1}
\begin{aligned}
\left|[\bar{\mathbb{V}}_t W_k](s_t,a_t) - [\mathbb{V} W_k](s_t,a_t)\right|&\leq \min\left\{H^2, \left\| \phi_{W_k^2}(s_t,a_t)\right\|_{\widetilde\Sigma_t^{-1}}\left\|\widetilde\theta_t-\theta^* \right\|_{\widetilde \Sigma_t}\right\}\\
&\quad+ \min\left\{H^2, 2H\left\|\phi_{W_k}(s_t,a_t)\right\|_{\widehat\Sigma_t^{-1}}\left\|\widehat\theta_t-\theta^*\right\|_{\widehat \Sigma_t}\right\}.
\end{aligned}
\end{equation}

Let us consider the following two confidence ellipsoids for $\theta^*$:
\begin{align*}
     \check{\mathcal{C}}_t = \left\{\theta\in\mathcal{B}: \left\|\theta-\widehat{\theta}_t\right\|_{\widehat{\Sigma}_t} \leq \check{\beta}_t \right\},\quad \widetilde{\mathcal{C}}_t = \left\{ \theta\in\mathcal{B}: \left\| \theta-\widetilde{\theta}_t \right\|_{\widetilde{\Sigma}_t} \leq \widetilde{\beta}_t \right\}
\end{align*}
where \begin{align*}
\check{\beta}_t &= 8d\sqrt{\log(1+t/\lambda)\log(4t^2/\delta)} + 4\sqrt{d}\log(4t^2/\delta) + \sqrt{\lambda}B_\theta,\\
    \widetilde{\beta}_t &= 8H^2\sqrt{d \log(1 + tH^2/(d\lambda))\log(4t^2/\delta)}  + 4H^2\log(4t^2/\delta) + \sqrt{\lambda}B_\theta.
\end{align*}
First, we apply \Cref{lem:bernstein} to the linear bandit instance defined with \begin{equation}\label{bandit1}
(x_t,y_t)= (\bar\sigma_t^{-1}\phi_{W_k}(s_t,a_t), \bar\sigma_t^{-1}W_k(s_{t+1})),\quad \eta_t = y_t - \langle x_t, \theta^*\rangle
\end{equation}for $t\in[t_k:t_{k+1}-1]$ and $k\in[K_T]$. Then we deduce that
$$\widehat \Sigma_{t+1} = Z_t:=\lambda I_d + \sum_{i=1}^t x_i x_i^\top,\quad \widehat b_{t+1}=b_t:= \sum_{i=1}^t y_ix_i ,\quad \widehat \theta_{t+1} = \mu_t:= Z_t^{-1}b_t.$$
As $\mathcal{G}_t=\sigma(s_1,\ldots,s_t)$ is the $\sigma$-algebra generated by $s_1,\ldots, s_t$, we have
$\mathbb{E}\left[\eta_t\mid \mathcal{G}_t\right] = 0.$
Moreover, we deduce that
$$|\eta_t|\leq \frac{\sqrt{d}}{H}\left|W_k(s_{t+1})-\langle \phi_{W_k}(s_t,a_t),\theta^*\rangle\right|\leq \sqrt{d}$$
because $\bar\sigma_t^2 \geq H^2/d$ and it follows from $W_k(s)\in[0,H]$ for any $s\in\mathcal{S}$ that
$$-H\leq -\langle \phi_{W_k}(s_t,a_t),\theta^*\rangle\leq W_k(s_{t+1})-\langle \phi_{W_k}(s_t,a_t),\theta^*\rangle\leq W_k(s_{t+1})\leq H.$$
This also implies that $\mathbb{E}\left[\eta^2\mid \mathcal{G}_t\right]\leq d$ and that $$\|x_t\|_2\leq \frac{\sqrt{d}}{H}\|\phi_{W_k}(s_t,a_t)\|_2\leq \sqrt{d}.$$
Then it follows from \Cref{lem:bernstein} that with probability at least $1-\delta$, $\theta^*\in \check{\mathcal{C}}_t$ for all $t\in [T]$.

Next, we apply again \Cref{lem:bernstein} to the linear bandit instance defined with \begin{equation}\label{bandit2}
(x_t,y_t)=(\phi_{W_k^2}(s_t,a_t), W_k^2(s_{t+1})),\quad \eta_t = y_t - \langle x_t, \theta^*\rangle
\end{equation}
for $t\in[t_k:t_{k+1}-1]$ and $k\in[K_T]$. Then we deduce that
$$\widetilde \Sigma_{t+1} = Z_t:=\lambda I_d + \sum_{i=1}^t x_i x_i^\top,\quad \widetilde b_{t+1}=b_t:= \sum_{i=1}^t y_ix_i ,\quad \widetilde \theta_{t+1} = \mu_t:= Z_t^{-1}b_t.$$
As $\mathcal{G}_t=\sigma(s_1,\ldots,s_t)$ is the $\sigma$-algebra generated by $s_1,\ldots, s_t$, we have
$\mathbb{E}\left[\eta_t\mid \mathcal{G}_t\right] = 0.$ Since $W_k^2(s)\in[0,H^2]$ for any $s\in\mathcal{S}$, we have $|\eta_t|\leq H^2$ and thus $\mathbb{E}\left[\eta_t^2\mid \mathcal{G}_t\right]\leq H^2$. Moreover, $\|x_t\|_2=\|\phi_{W_k^2}(s_t,a_t)\|_2\leq H^2$. Then, by \Cref{lem:bernstein}, it hold with probability at least $1-\delta$ that $\theta^*\in \widehat{\mathcal{C}}_t$ for all $t\in [T]$.

Lastly, we apply \Cref{lem:bernstein} to the linear bandit instance defined with 
\begin{align}\label{bandit3}
\begin{aligned}
    x_t&= \bar\sigma_t^{-1} \phi_{W_k}(s_t,a_t),\\
    \eta_t &= \bar\sigma_t^{-1}\mathbf{1}\left\{\theta^*\in\check{\mathcal{C}}_t\cap \widetilde{\mathcal{C}}_t\right\}\left(W_k(s_{t+1})- \langle\phi_{W_k}(s_t,a_t),\theta^*\rangle\right),\\
    y_t &= \eta_t + \langle x_t,\theta^*\rangle
    \end{aligned}
\end{align}
where $\mathbf{1}\left\{\mathcal{E}\right\}$ denote the indicator function for a given event $\mathcal{E}$. Note that we still have $\mathbb{E}\left[\eta_t\mid \mathcal{G}_t\right]=0$ where $\mathcal{G}_t=\sigma(s_1,\ldots,s_t)$ is the $\sigma$-algebra generated by $s_1,\ldots, s_t$ because $\mathbf{1}\left\{\theta^*\in\check{\mathcal{C}}_t\cap \widetilde{\mathcal{C}}_t\right\}$ is $\mathcal{G}_t$-measurable. Moreover, as $\mathbf{1}\left\{\cdot\right\}\leq 1$, we have $|\eta_t|\leq\sqrt{d}$ and $\|x_t\|_2\leq \sqrt{d}$ as before. Let us consider $\mathbb{E}\left[\eta_t^2\mid\mathcal{G}_t\right]$. We obtain that 
\begin{align*}
\mathbb{E}\left[\eta_t^2\mid\mathcal{G}_t\right]&= \bar\sigma_t^{-2}\mathbf{1}\left\{\theta^*\in\check{\mathcal{C}}_t\cap \widetilde{\mathcal{C}}_t\right\}[\mathbb{V}W_k](s_t,a_t)\\
&\leq \bar\sigma_t^{-2}\mathbf{1}\left\{\theta^*\in\check{\mathcal{C}}_t\cap \widetilde{\mathcal{C}}_t\right\}\left([\bar{\mathbb{V}}_t W_k](s_t,a_t)+\min\left\{H^2, \left\| \phi_{W_k^2}(s_t,a_t)\right\|_{\widetilde\Sigma_t^{-1}}\left\|\widetilde\theta_t-\theta^* \right\|_{\widetilde \Sigma_t}\right\}\right)\\
&\quad+ \bar\sigma_t^{-2}\mathbf{1}\left\{\theta^*\in\check{\mathcal{C}}_t\cap \widetilde{\mathcal{C}}_t\right\}\min\left\{H^2, 2H\left\|\phi_{W_k}(s_t,a_t)\right\|_{\widehat\Sigma_t^{-1}}\left\|\widehat\theta_t-\theta^*\right\|_{\widehat \Sigma_t}\right\}\\
&\leq \bar\sigma_t^{-2}\left([\bar{\mathbb{V}}_t W_k](s_t,a_t)+\min\left\{H^2, \widetilde\beta_t\left\| \phi_{W_k^2}(s_t,a_t)\right\|_{\widetilde\Sigma_t^{-1}}\right\}+\min\left\{H^2, 2H\check\beta_t\left\|\phi_{W_k}(s_t,a_t)\right\|_{\widehat\Sigma_t^{-1}}\right\}\right)
\end{align*}
where the equality holds because $\mathbf{1}\left\{\theta^*\in\check{\mathcal{C}}_t\cap \widetilde{\mathcal{C}}_t\right\}$ is $\mathcal{G}_t$-measurable, the first inequality follows from~\eqref{eq:variance-error-term-bound-1}, and the second inequality holds due to our construction of $\check{\mathcal{C}}_t$ and $\widetilde{\mathcal{C}}_t$. Hence, we deduce that
$$\mathbb{E}\left[\eta_t^2\mid\mathcal{G}_t\right]\leq \bar\sigma_t^{-2} \left([\bar{\mathbb{V}}_t W_k](s_t,a_t)+E_t\right)\leq 1$$
where the first inequality is due to our choice of $E_t$ and the second inequality holds because $\bar\sigma_t^2\geq {[\bar{\mathbb{V}}_t W_k](s_t,a_t)+E_t}$. 

Taking the union bound, with probability at least $1-3\delta$, the statement of \Cref{lem:bernstein} holds for each of the three linear bandit instances given by~\eqref{bandit1},~\eqref{bandit2}, and~\eqref{bandit3}. We denote by $\mathcal{E}_0$ this event. Note that under event $\mathcal{E}_0$, we have $\theta^*\in\check{\mathcal{C}}_t\cap \widetilde{\mathcal{C}}_t$. In this case, we have that
$$|[{\mathbb{V}}W_k](s_t,a_t)-[\bar{\mathbb{V}}_tW_k](s_t,a_t)|\leq E_t.$$
Moreover, the outcome of \Cref{lem:bernstein} for the bandit instance~\eqref{bandit3} translates to the event that
$$
\theta^*\in \widehat{\mathcal{C}}_{t}=\left\{\theta\in\mathcal{B}: \|\theta-\widehat{\theta}_{t}\|_{\widehat{\Sigma}_{t}} \leq \widehat{\beta}_{t} \right\}$$
where 
\begin{align*}
    \widehat{\beta}_{t} = 8\sqrt{d\log(1+t/\lambda)\log(4t^2/\delta)}+4\sqrt{d}\log(4t^2/\delta)+\sqrt{\lambda}B_\theta,
\end{align*}
as required.

\section{Convergence of Value Iteration with Clipping}

In this section, we provide the proofs of \Cref{lem:contraction,lem:convergence-devi}.

\subsection{Proof of \Cref{lem:contraction}: Clipping as a Contraction Map}

Let $n\in[N]$, and let $s\in\mathcal{S}$. Then it is sufficient to argue that
$$V^{(n-1)}(s)- V^{(n)}(s)\leq \max_{s'\in\mathcal{S}}\left(\widetilde V^{(n-1)}(s')- \widetilde V^{(n)}(s')\right).$$
Recall that $V^{(i)}(s)=\min\{\widetilde V^{(i)}(s), \min_{s'\in\mathcal{S}}\widetilde V^{(i)}(s')+H\}$ for $i\in\{n-1,n\}$. Then there are two cases to consider depending on the value of $V^{(n)}(s)$.
\paragraph{Case I: $V^{(n)}(s)=\widetilde V^{(n)}(s)$.} Note that $V^{(n-1)}(s)\leq \widetilde V^{(n-1)}(s)$. This in turn implies that
$$V^{(n-1)}(s)- V^{(n)}(s)\leq\widetilde V^{(n-1)}(s)- \widetilde V^{(n)}(s)\leq \max_{s'\in\mathcal{S}}\left(\widetilde V^{(n-1)}(s')- \widetilde V^{(n)}(s')\right).$$

\paragraph{Case II: $V^{(n)}(s)=\min_{s'\in\mathcal{S}}\widetilde V^{(n)}(s')+H$.} Note that $V^{(n-1)}(s) \leq \min_{s'\in\mathcal{S}}\widetilde V^{(n-1)}(s')+H$. Then it follows that
\begin{align*}
V^{(n-1)}(s)- V^{(n)}(s)&\leq \min_{s'\in\mathcal{S}}\widetilde V^{(n-1)}(s')+H-\min_{s'\in\mathcal{S}}\widetilde V^{(n)}(s')-H= \min_{s'\in\mathcal{S}}\widetilde V^{(n-1)}(s')-\min_{s'\in\mathcal{S}}\widetilde V^{(n)}(s').
\end{align*}
Note that the right-most side satisfies
\begin{align*}
   &\min_{s'\in\mathcal{S}}\widetilde V^{(n-1)}(s')-\min_{s'\in\mathcal{S}}\widetilde V^{(n)}(s')\\   
   &=  -\max_{s'\in\mathcal{S}}\left(-\widetilde V^{(n-1)}(s')\right)+\max_{s'\in\mathcal{S}}\left(-\widetilde V^{(n)}(s')\right)\\
   &\leq  -\max_{s'\in\mathcal{S}}\left(-\widetilde V^{(n-1)}(s')\right)+\max_{s'\in\mathcal{S}}\left(-\widetilde V^{(n-1)}(s')\right)+\max_{s'\in\mathcal{S}}\left(\widetilde V^{(n-1)}(s')-\widetilde V^{(n)}(s')\right)\\
   &=\max_{s'\in\mathcal{S}}\left(\widetilde V^{(n-1)}(s')-\widetilde V^{(n)}(s')\right)
\end{align*}
where the inequality holds because $\max_p\{f(p)+g(p)\}\leq \max_p\{f(p)\} + \max_p\{g(p)\}$, as required.

\subsection{Proof of \Cref{lem:convergence-devi}: Convergence of Discounted Extended Value Iteration with Clipping}

We will first show the following lemma.
\begin{lemma}\label{lem:convergence-devi-1}
Let $N$ be the number of rounds for discounted extended value iteration with clipping. Then for any episode $k$, it holds that
$Q^{(N-1)}(s,a)-Q^{(N)}(s,a)\leq \gamma^{N-1}$ for any $(s,a)\in \mathcal{S}\times\mathcal{A}$.
\end{lemma}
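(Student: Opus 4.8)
The plan is to prove Lemma~\ref{lem:convergence-devi-1} by induction on the round index $n$, establishing the stronger statement that $Q^{(n-1)}(s,a) - Q^{(n)}(s,a) \leq \gamma^{n-1}$ for all $(s,a)$ and all $n \in [N]$; the case $n = N$ is the claim. The base case $n = 1$ follows from the initialization $Q^{(0)} \equiv (1-\gamma)^{-1}$ together with the fact that $Q^{(1)}(s,a) = r(s,a) + \gamma \max_{\theta \in \mathcal{C}_k}\langle \phi_{V^{(0)}}(s,a),\theta\rangle \geq 0$ and $Q^{(1)}(s,a) \le 1 + \gamma(1-\gamma)^{-1} = (1-\gamma)^{-1}$ (using $V^{(0)} \equiv (1-\gamma)^{-1}$, $r \le 1$, and $\theta \in \mathcal{B}$ inducing a probability distribution); hence $Q^{(0)} - Q^{(1)} \le (1-\gamma)^{-1} \cdot (1-\gamma) \cdot$ ... — more carefully, $Q^{(0)}(s,a) - Q^{(1)}(s,a) \le (1-\gamma)^{-1} - 0$, which is not obviously $\le \gamma^0 = 1$. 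So instead the base case should be argued as $Q^{(0)} - Q^{(1)} = (1-\gamma)^{-1} - r - \gamma[\cdots] \le (1-\gamma)^{-1} - \gamma \cdot 0 \le (1-\gamma)^{-1}$, and one uses that actually $\max_{\theta}\langle\phi_{V^{(0)}},\theta\rangle = (1-\gamma)^{-1}$ since $V^{(0)}$ is constant and every $\theta \in \mathcal{C}_k \subseteq \mathcal{B}$ satisfies $\langle \int\phi\, ds',\theta\rangle = 1$; thus $Q^{(1)}(s,a) = r(s,a) + \gamma(1-\gamma)^{-1} \ge \gamma(1-\gamma)^{-1}$, giving $Q^{(0)} - Q^{(1)} \le (1-\gamma)^{-1} - \gamma(1-\gamma)^{-1} = 1 = \gamma^0$.

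For the inductive step, assume $Q^{(n-1)}(s,a) - Q^{(n)}(s,a) \le \gamma^{n-1}$ for all $(s,a)$. Taking the max over $a$ gives $\widetilde V^{(n-1)}(s) - \widetilde V^{(n)}(s) \le \gamma^{n-1}$ for all $s$ (since $\max_a f(a) - \max_a g(a) \le \max_a (f(a)-g(a))$). By Lemma~\ref{lem:contraction}, this yields $\max_s (V^{(n-1)}(s) - V^{(n)}(s)) \le \max_s(\widetilde V^{(n-1)}(s) - \widetilde V^{(n)}(s)) \le \gamma^{n-1}$. Now I would write, for any $(s,a)$,
\[
Q^{(n)}(s,a) - Q^{(n+1)}(s,a) = \gamma\left(\max_{\theta \in \mathcal{C}_k}\langle \phi_{V^{(n-1)}}(s,a),\theta\rangle - \max_{\theta \in \mathcal{C}_k}\langle \phi_{V^{(n)}}(s,a),\theta\rangle\right) \le \gamma \max_{\theta \in \mathcal{C}_k}\langle \phi_{V^{(n-1)}}(s,a) - \phi_{V^{(n)}}(s,a),\theta\rangle,
\]
where the reward terms cancel and I again use that $\max_\theta f(\theta) - \max_\theta g(\theta) \le \max_\theta(f(\theta)-g(\theta))$. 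Since $\phi_{V^{(n-1)}} - \phi_{V^{(n)}} = \phi_{V^{(n-1)} - V^{(n)}}$ by linearity of $F \mapsto \phi_F$, and $V^{(n-1)} - V^{(n)} \le \gamma^{n-1}$ pointwise with every $\theta \in \mathcal{C}_k \subseteq \mathcal{B}$ inducing a probability distribution over $s'$, we get $\langle \phi_{V^{(n-1)} - V^{(n)}}(s,a),\theta\rangle = \mathbb{E}_{s' \sim \mathbb{P}_\theta(\cdot\mid s,a)}[V^{(n-1)}(s') - V^{(n)}(s')] \le \gamma^{n-1}$. Hence $Q^{(n)}(s,a) - Q^{(n+1)}(s,a) \le \gamma \cdot \gamma^{n-1} = \gamma^n$, completing the induction.

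The main obstacle I anticipate is handling the base case cleanly and making sure the transition from the pointwise bound on $V^{(n-1)} - V^{(n)}$ to a bound on $\langle \phi_{V^{(n-1)} - V^{(n)}}, \theta\rangle$ is rigorous — in particular, one needs $V^{(n-1)} - V^{(n)}$ to be expressible against the signed "measure" $\phi(s,a,\cdot)^\top\theta$, which requires that each $\theta \in \mathcal{C}_k$ lies in $\mathcal{B}$ so that $\phi(s,a,\cdot)^\top\theta \ge 0$ integrates to $1$; this is exactly the property the algorithm enforces via $\mathcal{C}_k \subseteq \mathcal{B}$, so it should go through. A secondary subtlety is that Lemma~\ref{lem:contraction} is stated only for the quantity $\max_s(V^{(n-1)}(s) - V^{(n)}(s))$, not pointwise, but that is all we need since the subsequent use of $\langle \phi_F, \theta\rangle \le \max_s F(s)$ only requires the uniform (max) bound. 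Everything else is routine manipulation of maxima and the linearity of $\phi_F$ in $F$.
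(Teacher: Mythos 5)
Your proof is correct and follows essentially the same route as the paper's: the paper establishes the one-step recursion $\max_{(s,a)}(Q^{(n-1)}-Q^{(n)})\leq\gamma\max_{(s,a)}(Q^{(n-2)}-Q^{(n-1)})$ via the $\max$-difference inequality, the probability-distribution property of $\mathcal{C}_k\subseteq\mathcal{B}$, and \Cref{lem:contraction}, then iterates down to $\max_{(s,a)}(Q^{(0)}-Q^{(1)})=1-r(s,a)\leq 1$, which is exactly your induction written in unrolled form. Your base case lands on the same computation after some initial hesitation, and the inductive step uses the identical three ingredients, so there is no substantive difference.
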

\begin{proof}
We consider the discounted extended value iteration procedure with clipping for a fixed episode $k$. 
Note that for $n\geq 2$, we have
\begin{align*}
Q^{(n)}(s,a)  &=r(s,a)+\gamma \max_{\theta\in\mathcal{C}_k}\langle \phi_{V^{(n-1)}}(s,a),\theta\rangle,\\
Q^{(n-1)}(s,a)  &=r(s,a)+\gamma \max_{\theta\in\mathcal{C}_k}\langle \phi_{V^{(n-2)}}(s,a),\theta\rangle.
\end{align*}
This implies that for any $(s,a)\in\mathcal{S}\times\mathcal{A}$, 
\begin{align}\label{eq:lem:convergence-1}
\begin{aligned}
Q^{(n-1)}(s,a)-Q^{(n)}(s,a)
&=\gamma \left(\max_{\theta\in\mathcal{C}_k}\langle \phi_{V^{(n-2)}}(s,a),\theta\rangle-\max_{\theta\in\mathcal{C}_k}\langle \phi_{V^{(n-1)}}(s,a),\theta\rangle\right)\\
&\leq \gamma \max_{\theta\in\mathcal{C}_k}\left(\langle \phi_{V^{(n-2)}}(s,a),\theta\rangle-\langle \phi_{V^{(n-1)}}(s,a),\theta\rangle\right)\\
&= \gamma \max_{\theta\in\mathcal{C}_k}\langle \phi_{V^{(n-2)}-V^{(n-1)}}(s,a),\theta\rangle
\end{aligned}
\end{align}
where the inequality holds because $\max_p\{f(p)+g(p)\}\leq \max_p\{f(p)\} + \max_p\{g(p)\}$. Recall that for any $\theta\in\mathcal{C}_k$ induces a probability distribution with $\phi(s,a,s')$ given by
$\mathbb{P}_\theta(s'\mid s,a) = \langle \phi(s,a,s'),\theta\rangle.$ Then it follows that
\begin{align}\label{eq:lem:convergence-2}
\begin{aligned}
\gamma \langle \phi_{V^{(n-2)}-V^{(n-1)}}(s,a),\theta\rangle&= \gamma\mathbb{E}_{s'\sim \mathbb{P}_\theta(\cdot \mid s,a)}\left[V^{(n-2)}(s')-V^{(n-1)}(s')\right]\\
&\leq \gamma \max_{s'\in\mathcal{S}}\left(V^{(n-2)}(s')-V^{(n-1)}(s')\right)\\
&\leq \gamma \max_{s'\in\mathcal{S}}\left(\widetilde V^{(n-2)}(s')-\widetilde V^{(n-1)}(s')\right)
\end{aligned}
\end{align}
where the second inequality is due to~\Cref{lem:contraction}.
Combining~\eqref{eq:lem:convergence-1} and~\eqref{eq:lem:convergence-2}, we have
\begin{equation}\label{eq:lem:convergence-3}
\max_{(s,a)\in\mathcal{S}\times\mathcal{A}}\left(Q^{(n-1)}(s,a)-Q^{(n)}(s,a)\right)\leq\gamma  \max_{s\in \mathcal{S}}\left(\widetilde V^{(n-2)}(s)-\widetilde V^{(n-1)}(s)\right).
\end{equation}
Here, the right-hand side of~\eqref{eq:lem:convergence-3} can be further bounded as follows.
\begin{align}\label{eq:lem:convergence-4}
\begin{aligned}
\gamma \max_{s\in\mathcal{S}}\left(\widetilde V^{(n-2)}(s)-\widetilde V^{(n-1)}(s)\right)
&=  \gamma\max_{s\in\mathcal{S}}\left(\max_{a\in\mathcal{A}}Q^{(n-2)}(s,a)-\max_{a\in\mathcal{A}}Q^{(n-1)}(s,a)\right)\\
&\leq  \gamma\max_{(s,a)\in\mathcal{S}\times\mathcal{A}}\left(Q^{(n-2)}(s,a)-Q^{(n-1)}(s,a)\right)
\end{aligned}
\end{align}
where the inequality is due to $\max_{a'}\{f(a')+g(a')\}\leq \max_{a'}\{f(a')\} + \max_{a'}\{g(a')\}$ as before. 
Therefore, it follows that for any $n\geq 2$,
$$\max_{(s,a)\in\mathcal{S}\times\mathcal{A}}\left(Q^{(n-1)}(s,a)-Q^{(n)}(s,a)\right)\leq \gamma \max_{(s,a)\in\mathcal{S}\times\mathcal{A}}\left(Q^{(n-2)}(s,a)-Q^{(n-1)}(s,a)\right).$$
In particular, this implies that
\begin{align*}
\begin{aligned}
\max_{(s,a)\in\mathcal{S}\times\mathcal{A}}\left(Q^{(N-1)}(s,a)-Q^{(N)}(s,a)\right)&\leq \gamma^{N-1}\max_{(s,a)\in\mathcal{S}\times\mathcal{A}}\left(Q^{(0)}(s,a)-Q^{(1)}(s,a)\right)\\
&= \gamma^{N-1}\max_{(s,a)\in\mathcal{S}\times\mathcal{A}}\left(\frac{1}{1-\gamma}- r(s,a) -\frac{\gamma}{1-\gamma}\right)\\
&\leq \gamma^{N-1}
\end{aligned}
\end{align*}
where the last inequality holds because $0\leq r(s,a)\leq 1$.
\end{proof}

Based on~\Cref{lem:convergence-devi-1}, we complete the proof of Lemma~\ref{lem:convergence-devi}. Note that
\begin{align*}
Q^{(N)}(s_t,a_t)
&= r(s_t,a_t) + \gamma \max_{\theta\in\mathcal{C}_{k}}\langle\phi_{V^{(N-1)}}(s_t,a_t),\theta\rangle\\
&\leq r(s_t,a_t) + \gamma \max_{\theta\in\mathcal{C}_{k}}\langle\phi_{V^{(N)}}(s_t,a_t),\theta\rangle+\gamma \max_{\theta\in\mathcal{C}_{k}}\langle\phi_{V^{(N-1)}-V^{(N)}}(s_t,a_t),\theta\rangle\\
&\leq r(s_t,a_t) + \gamma \max_{\theta\in\mathcal{C}_{k}}\langle\phi_{V^{(N)}}(s_t,a_t),\theta\rangle  + \gamma \max_{s\in\mathcal{S}}\left(\widetilde V^{(N-1)}(s)-\widetilde V^{(N)}(s)\right) \\
&\leq r(s_t,a_t) + \gamma \max_{\theta\in\mathcal{C}_{k}}\langle\phi_{V^{(N)}}(s_t,a_t),\theta\rangle  + \gamma \max_{(s,a)\in\mathcal{S}\times\mathcal{A}}\left( Q^{(N-1)}(s,a)- Q^{(N)}(s,a)\right) \\
&\leq r(s_t,a_t) + \gamma \max_{\theta\in\mathcal{C}_{k}}\langle\phi_{V^{(N)}}(s_t,a_t),\theta\rangle  + \gamma^N
\end{align*}
where the first inequality applies the same argument as in~\eqref{eq:lem:convergence-2}, the second inequality follows the same argument as in~\eqref{eq:lem:convergence-4}, and the third inequality is due to \Cref{lem:convergence-devi-1}. Since $Q^{(N)}$ equals $Q_k$ and $V^{(N)}$ equals $V_k$, we have
$$Q_k(s_t,a_t)\leq r(s_t,a_t)+ \gamma \max_{\theta\in\mathcal{C}_{k}}\langle\phi_{V_k}(s_t,a_t),\theta\rangle + \gamma^N,$$
as required.

\section{Regret Analysis and Proofs}

In this section, we prove \Cref{lem:optimism} and \Cref{lem:num-episodes,lem:regret-term3,lem:I4-variance,lem:I4-errors}. Based on these results, in \Cref{sec:complete-ub}, we complete the proof of \Cref{thm:ub}.

\subsection{Proof of \Cref{lem:optimism}: Optimistic Estimators for Value Functions}

For a fixed episode $k$, we prove by induction on $n$ that for any $(s,a)\in\mathcal{S}\times\mathcal{A}$,
$$\frac{1}{1-\gamma}\geq V^{(n)}(s)\geq V^*(s),\quad \frac{1}{1-\gamma}\geq Q^{(n)}(s,a)\geq Q^*(s,a).$$
by induction on $n$. For $n=0$, it is trivial that 
\[V^{(0)}=\frac{1}{1-\gamma}\geq V^*(s), \quad {Q}^{(0)}(s,a)=\frac{1}{1-\gamma} \geq Q^*(s,a)\]
for every $(s,a)\in\mathcal{S}\times\mathcal{A}$. Next, we assume that for some $n\geq 0$, the inequalities $$\frac{1}{1-\gamma}\geq V^{(n)}(s)\geq V^*(s)\quad \text{and}\quad\frac{1}{1-\gamma}\geq {Q}^{(n)}(s,a)\geq Q^*(s,a)$$ hold for all $(s,a)\in\mathcal{S}\times\mathcal{A}$. First of all, since any $\theta\in\mathcal{C}_k$ induces a probability distribution, we get
$$Q^{(n+1)}(s,a)= r(s,a) + \gamma \max_{\theta\in\mathcal{C}_k}\langle \phi_{V^{(n)}}(s,a),\theta\rangle\leq 1+ \frac{\gamma}{1-\gamma} = \frac{1}{1-\gamma}$$
because $r(s,a)\leq 1$ and $V^{(n)}(s')\leq (1-\gamma)^{-1}$ for any $s'\in\mathcal{S}$. Then $$V^{(n+1)}\leq \widetilde V^{(n+1)}(s)=\max_{a\in\mathcal{A}}{Q}^{(n+1)}(s,a)\leq \frac{1}{1-\gamma}.$$

Next, we show that ${Q}^{(n+1)}(s,a)\geq Q^*(s,a)$ for any $(s,a)\in\mathcal{S}\times\mathcal{A}$.
Note that
\begin{align*}
   {Q}^{(n+1)}(s,a)&=r(s,a) + \gamma \max_{\theta\in\mathcal{C}_k}\langle \phi_{V^{(n)}}(s,a),\theta\rangle\\
    &\geq r(s,a) + \gamma \langle \phi_{V^{(n)}}(s,a),\theta^*\rangle\\
    &=r(s,a) + \gamma [\mathbb{P}V^{(n)}](s,a)\\
    &\geq r(s,a) + \gamma [\mathbb{P}V^*](s,a)\\
    &=Q^*(s,a)
\end{align*}
where the first inequality holds because $\theta^*\in \mathcal{C}_k=\widehat{\mathcal{C}}_{t_k}$, the second inequality is by the induction hypothesis, and the last equality is by the Bellman optimality condition~\eqref{bellman-discounted}.

Let us also consider ${V}^{(n+1)}$. Note that
\begin{align*}
    \widetilde {V}^{(n+1)}(s)-V^*(s) &= \max_{a\in\mathcal{A}}{Q}^{(n+1)}(s,a) - \max_{a\in\mathcal{A}} Q^*(s,a) \\
    &\geq \max_{a\in\mathcal{A}} Q^*(s,a)-\max_{a\in\mathcal{A}} Q^*(s,a)\\
    &= 0,
\end{align*}
where the inequality holds because $Q^{(n+1)}(s,a)\geq Q^*(s,a)$ for any $(s,a)\in\mathcal{S}\times\mathcal{A}$. This in turn implies that $\widetilde{V}^{(n+1)}(s)\geq V^*(s)$ for any $s\in\mathcal{S}$. This further implies that
\begin{align*}
    V^{(n+1)}(s)&=\min\left\{\widetilde{V}^{(n+1)}(s), \min_{s'\in\mathcal{S}}\widetilde{V}^{(n+1)}(s')+H \right\} \\
    & \geq \min\left\{V^*(s), \min_{s'\in\mathcal{S}}V^*(s')+H\right\} \\
    & = V^*(s),
\end{align*}
where the first inequality comes from our observation that $\widetilde{V}^{(n+1)}(s)\geq V^*(s)$ for any $s\in\mathcal{S}$ while the second equality holds because $\mathrm{sp}(V^*) \leq 2\cdot \mathrm{sp}(v^*) \leq H$, as supported by \cref{lemma:span-bd}.

Since $k$ was chosen arbitrarily, we conclude that in every episode, for all $n \in [N]$ and for all $(s, a) \in \mathcal{S} \times \mathcal{A}$, 
$$V^{(n)}(s) \geq V^*(s), \quad {Q}^{(n)}(s,a) \geq Q^*(s,a),$$
as required.

\subsection{Proof of \Cref{lem:num-episodes}: Upper Bound on the Number of Episodes}

Note that $\det({\widehat\Sigma}_{1})=\lambda^d$ because $\widehat\Sigma_1=\lambda I_d$. To upper bound
$\det(\widehat\Sigma_{T+1})$, we apply the following lemma. 
\begin{lemma} \label{lem:determinant-trace}
{\em \citep[Lemma 10,][]{abbasi2011}} For any $x_1,\ldots, x_T\in\mathbb{R}^d$ such that $\|{x}_t\|_2\leq L$, let ${A}_1=\lambda I_d$ and ${A}_{t+1} = \lambda I_d + \sum_{i=1}^{t}{x}_i{x}_i^\top$ for $t\geq 1$. Then
$$\det(\widehat\Sigma_{T+1})\leq \left(\lambda + \frac{TL^2}{d}\right)^d.$$
\end{lemma}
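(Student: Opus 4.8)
The plan is to prove this via the arithmetic-geometric mean (AM-GM) inequality applied to the eigenvalues of the Gram matrix, combined with a direct trace computation. Identifying $\widehat\Sigma_{T+1}$ with $A_{T+1}=\lambda I_d + \sum_{i=1}^T x_ix_i^\top$ as in the statement, the starting observation is that $A_{T+1}$ is symmetric and positive semidefinite: it is the sum of $\lambda I_d$ (positive definite) and a sum of rank-one positive semidefinite outer products $x_ix_i^\top$. Hence its eigenvalues $\lambda_1,\ldots,\lambda_d$ are all real and nonnegative (in fact at least $\lambda$), which is exactly what is needed for AM-GM to apply.

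First I would write the determinant and trace in terms of the eigenvalues, namely $\det(A_{T+1})=\prod_{i=1}^d\lambda_i$ and $\operatorname{tr}(A_{T+1})=\sum_{i=1}^d\lambda_i$. Since the $\lambda_i$ are nonnegative, the AM-GM inequality gives
\begin{equation*}
\det(A_{T+1})=\prod_{i=1}^d\lambda_i\leq\left(\frac{1}{d}\sum_{i=1}^d\lambda_i\right)^d=\left(\frac{\operatorname{tr}(A_{T+1})}{d}\right)^d.
\end{equation*}
This reduces the problem to bounding the trace.

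Next I would compute the trace explicitly using linearity of the trace and the identity $\operatorname{tr}(x_ix_i^\top)=\|x_i\|_2^2$. This yields
\begin{equation*}
\operatorname{tr}(A_{T+1})=\operatorname{tr}(\lambda I_d)+\sum_{i=1}^T\operatorname{tr}(x_ix_i^\top)=\lambda d+\sum_{i=1}^T\|x_i\|_2^2\leq\lambda d+TL^2,
\end{equation*}
where the final inequality uses the hypothesis $\|x_i\|_2\leq L$ for every $i$. Substituting this bound into the AM-GM estimate above gives $\det(A_{T+1})\leq\left(\frac{\lambda d+TL^2}{d}\right)^d=\left(\lambda+\frac{TL^2}{d}\right)^d$, which is the claim.

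I do not expect any genuine obstacle here, as the argument is a short chain of standard facts; the only points requiring mild care are verifying positive semidefiniteness so that the eigenvalues are nonnegative and AM-GM is legitimately applicable, and correctly computing the trace of the rank-one outer products. Everything else is routine algebra.
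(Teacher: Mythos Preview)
Your proposal is correct and is precisely the standard argument: the paper does not supply its own proof of this lemma but cites it directly from \citet[Lemma 10]{abbasi2011}, whose proof is exactly the AM-GM-on-eigenvalues plus trace-bound computation you outline. There is nothing to add.
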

Recall that
$$\widehat\Sigma_{T+1}=\lambda I_d + \sum_{k=1}^{K_T}\sum_{t=t_{k}}^{t_{k+1}-1}\phi_{W_{k}}(s_{t},a_{t})\phi_{W_{k}}(s_{t},a_{t})^\top.$$
Here, we have $\|\phi_{W_{k}}(s_{t},a_{t})\|_2\leq H$ as $W_{k}(s)\in[0,H]$ for any $s\in\mathcal{S}$. By applying \Cref{lem:determinant-trace}, we get
$$\det(\widehat\Sigma_{T+1})\leq \left(\lambda+ \frac{TH^2}{d}\right)^d.$$
As $\lambda = 1/B_\theta^2$, it follows that
$$\frac{\det(\widehat\Sigma_{T+1})}{\det(\widehat\Sigma_1)}\leq \left(1+ \frac{TH^2}{d\lambda }\right)^d= \left(1+ \frac{TH^2B_\theta^2}{d}\right)^d.$$
Moreover, note that
$$\det(\widehat\Sigma_{T+1}) \geq \det(\widehat\Sigma_T)\geq \det(\widehat\Sigma_{t_{K_T}})\geq \cdots \geq 2^{K_T-1}\det(\widehat\Sigma_{t_{1}}),$$
implying in turn that
$$K_T\leq 1+ \log_2(\det(\widehat\Sigma_{T+1})/\det(\widehat\Sigma_{t_{1}}))\leq 1+ d\log_2(1+TH^2B_\theta^2/d),$$
as required.

\subsection{Proof of \Cref{lem:regret-term3}: Martingale Difference Sequence}

The term $I_3$ is a sum of martingale difference sequence $\{\eta_t\}_{t=1}^{\infty}$ with regard to a filtration $\{\mathcal{G}_t\}_{t=0}^{\infty}$, where $$\eta_t=\langle \phi_{V_k}(s_t,a_t),\theta^*\rangle - V_k(s_{t+1})$$
and $\mathcal{G}_t=\sigma(s_1,\ldots,s_t)$ is the $\sigma$-algebra generated by $s_1,\ldots, s_t$.
for $t\in[t_k:t_{k+1}-1]$. This is because  $\eta_t$ is $\mathcal{G}_{t+1}$-measurable, $\mathbb{E}[|\eta_t|] < \infty$, and $\mathbb{E}[\eta_t|\mathcal{G}_{t}]=0$, which we will show in the following paragraphs. In fact, we have
$$\eta_t=\langle \phi_{V_k}(s_t,a_t),\theta^*\rangle - V_k(s_{t+1})=\langle \phi_{W_k}(s_t,a_t),\theta^*\rangle - W_k(s_{t+1}),$$
which implies that
$$|\eta_t|\leq \mathrm{sp}(W_k)\leq H.$$
\begin{lemma}{\em (Azuma-Hoeffding inequality)}\label{lemma:azuma-hoeffding}
     Let $\{X_k\}_{k=0}^{\infty}$ be a discrete-parameter real-valued martingale sequence such that for every $k\in\mathbb{N}$, the condition $|X_k-X_{k-1}|\leq \mu$ holds for some non-negative constant $\mu$. Then with probability at least $1-\delta$, we have
\[X_n-X_0 \leq \mu\sqrt{2n\log(1/\delta)}.\] 
\end{lemma}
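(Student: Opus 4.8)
The plan is to prove this by the classical exponential-moment (Chernoff) method, with Hoeffding's lemma controlling the conditional moment generating function of each martingale increment. Let $\{\mathcal{F}_k\}$ be a filtration with respect to which $\{X_k\}$ is a martingale, and write $Y_k = X_k - X_{k-1}$ for the increments, so that $X_n - X_0 = \sum_{k=1}^n Y_k$. The martingale property gives $\mathbb{E}[Y_k \mid \mathcal{F}_{k-1}] = 0$, and the hypothesis gives $|Y_k| \le \mu$ almost surely; in particular every $Y_k$ is bounded, so all the expectations below are finite. Thus, conditionally on $\mathcal{F}_{k-1}$, the increment $Y_k$ is a zero-mean random variable supported in $[-\mu,\mu]$.

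First I would record Hoeffding's lemma: if $Z$ has $\mathbb{E}[Z]=0$ and $Z \in [a,b]$ almost surely, then $\mathbb{E}[e^{\lambda Z}] \le \exp(\lambda^2 (b-a)^2/8)$ for all $\lambda \in \mathbb{R}$. The cleanest proof is via the cumulant generating function $\psi(\lambda) = \log \mathbb{E}[e^{\lambda Z}]$: one checks $\psi(0) = \psi'(0) = 0$ and $\psi''(\lambda) \le (b-a)^2/4$, the latter because $\psi''(\lambda)$ is the variance of $Z$ under the exponentially tilted distribution, which is still supported on $[a,b]$ and hence has variance at most $(b-a)^2/4$ by Popoviciu's inequality; integrating twice gives the claim. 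Applying this conditionally on $\mathcal{F}_{k-1}$ with $Z = Y_k$, $a = -\mu$, $b = \mu$ yields $\mathbb{E}[e^{\lambda Y_k} \mid \mathcal{F}_{k-1}] \le \exp(\lambda^2 \mu^2/2)$.

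Next I would peel off the increments one at a time using the tower property. Since $X_{k-1}-X_0$ is $\mathcal{F}_{k-1}$-measurable, for any $\lambda > 0$,
\[
\mathbb{E}\!\left[e^{\lambda(X_n - X_0)}\right] = \mathbb{E}\!\left[e^{\lambda(X_{n-1}-X_0)}\,\mathbb{E}\!\left[e^{\lambda Y_n}\mid \mathcal{F}_{n-1}\right]\right] \le e^{\lambda^2\mu^2/2}\,\mathbb{E}\!\left[e^{\lambda(X_{n-1}-X_0)}\right],
\]
and iterating $n$ times gives $\mathbb{E}[e^{\lambda(X_n - X_0)}] \le \exp(n\lambda^2\mu^2/2)$. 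Markov's inequality applied to the nonnegative random variable $e^{\lambda(X_n-X_0)}$ then yields, for every $t \ge 0$,
\[
\mathbb{P}(X_n - X_0 \ge t) \le \exp\!\left(\tfrac{1}{2}n\lambda^2\mu^2 - \lambda t\right).
\]
Optimizing the exponent over $\lambda > 0$ with the choice $\lambda = t/(n\mu^2)$ gives $\mathbb{P}(X_n - X_0 \ge t) \le \exp(-t^2/(2n\mu^2))$. Setting the right-hand side equal to $\delta$ and solving for $t$ produces $t = \mu\sqrt{2n\log(1/\delta)}$, which is exactly the claimed one-sided bound.

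The argument is entirely standard and presents no genuine obstacle; the only ingredient that needs any care is Hoeffding's lemma, and even there the variance-of-tilted-measure route above avoids casework. If one prefers, Hoeffding's lemma can simply be cited as a black box, in which case the proof collapses to the two-line iterated conditioning computation followed by the Chernoff optimization.
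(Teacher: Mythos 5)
Your proof is correct: it is the standard Chernoff-method derivation of the one-sided Azuma--Hoeffding bound, and the constants work out exactly (Hoeffding's lemma on $[-\mu,\mu]$ gives the factor $\lambda^2\mu^2/2$, the iterated conditioning gives $\exp(n\lambda^2\mu^2/2)$, and optimizing at $\lambda = t/(n\mu^2)$ yields $\exp(-t^2/(2n\mu^2))$, hence $t=\mu\sqrt{2n\log(1/\delta)}$). The paper states this lemma as a classical result and does not prove it, so there is nothing to compare against; your argument is the canonical one and is sound.
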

Since $X_t=\sum_{n=1}^t \eta_t$ for $t\geq 1$ and $X_0$ give rise to a martingale sequence with $|\eta_t|\leq H$, it follows from~\Cref{lemma:azuma-hoeffding} that
\begin{align*}
    I_3 \leq H\sqrt{2T\log(1/\delta)}
\end{align*}
holds with probability at least $1-\delta$.

\subsection{Proof of \Cref{lem:I4-variance}: Variance Term}

Note that
\begin{align*}
&\sum_{k=1}^{K_T}\sum_{t=t_k}^{t_{k+1}-1}[\mathbb{V}W_k](s_t,a_t)\\
&=\sum_{k=1}^{K_T}\sum_{t=t_k}^{t_{k+1}-1}\left([\mathbb{P}W_k^2](s_t,a_t)-([\mathbb{P}W_k](s_t,a_t))^2\right)\\
&=\sum_{k=1}^{K_T}\sum_{t=t_k}^{t_{k+1}-1}\left([\mathbb{P}W_k^2](s_t,a_t)-W_k^2(s_{t+1})\right) + \sum_{k=1}^{K_T}\sum_{t=t_k}^{t_{k+1}-1}\left(W_k^2(s_{t+1})-W_k^2(s_{t})\right)\\
&\quad +\sum_{k=1}^{K_T}\sum_{t=t_k}^{t_{k+1}-1}\left(W_k^2(s_t)-([\mathbb{P}W_k](s_t,a_t))^2\right)\\
&=\sum_{k=1}^{K_T}\sum_{t=t_k}^{t_{k+1}-1}\left([\mathbb{P}W_k^2](s_t,a_t)-W_k^2(s_{t+1})\right) + \sum_{k=1}^{K_T}\sum_{t=t_k}^{t_{k+1}-1}\left(W_k^2(s_t)-([\mathbb{P}W_k](s_t,a_t))^2\right)\\
&\quad + \sum_{k=1}^{K_T}\left(W_k^2(s_{t_{k+1}}) - W_k^2(s_{t_k})\right)\\
&\leq \underbrace{\sum_{k=1}^{K_T}\sum_{t=t_k}^{t_{k+1}-1}\left([\mathbb{P}W_k^2](s_t,a_t)-W_k^2(s_{t+1})\right)}_{(i)} + \underbrace{\sum_{k=1}^{K_T}\sum_{t=t_k}^{t_{k+1}-1}\left(W_k^2(s_t)-([\mathbb{P}W_k](s_t,a_t))^2\right)}_{(ii)} + H^2 K_T.
\end{align*}
By the condition of the lemma, we have
$$(i)\leq H^2\sqrt{2T\log(1/\delta)}.$$
Let us consider the term $(ii)$. Note that
\begin{align*}
(ii)&=\sum_{k=1}^{K_T}\sum_{t=t_k}^{t_{k+1}-1}\left(W_k(s_t)-[\mathbb{P}W_k](s_t,a_t)\right)\left(W_k(s_t)+[\mathbb{P}W_k](s_t,a_t)\right).
\end{align*}
Here, we have
\begin{align*}
W_k(s_t) &= V_k(s_t) - \min_{s'\in\mathcal{S}}V_k(s')\\
&\leq \widetilde V_k(s_t) - \min_{s'\in\mathcal{S}}V_k(s')\\
&=Q_k(s_t,a_t)- \min_{s'\in\mathcal{S}}V_k(s')\\
&\leq 1+ \gamma^N + \max_{\theta\in\mathcal{C}_{k}}\langle \phi_{V_k}(s_t,a_t),\theta\rangle - \min_{s'\in\mathcal{S}}V_k(s')\\
&\leq 2 + \max_{\theta\in\mathcal{C}_{k}}\langle \phi_{W_k}(s_t,a_t),\theta\rangle
\end{align*}
where the second inequality holds because $\gamma\leq 1$ and $r(s_t,a_t)\leq 1$ and the third inequality holds because $\gamma\leq 1$ and any $\theta\in\mathcal{C}_k$ induces a probability distribution. Then it follows that
\begin{align*}
(ii)&\leq 2\sum_{k=1}^{K_T}\sum_{t=t_k}^{t_{k+1}-1}\left(W_k(s_t)+[\mathbb{P}W_k](s_t,a_t)\right) +\sum_{k=1}^{K_T}\sum_{t=t_k}^{t_{k+1}-1}\left(\max_{\theta\in\mathcal{C}_{k}}\langle \phi_{W_k}(s_t,a_t),\theta-\theta^*\rangle\right)\left(W_k(s_t)+[\mathbb{P}W_k](s_t,a_t)\right)\\
&\leq 4HT + 2H\sum_{k=1}^{K_T}\sum_{t=t_k}^{t_{k+1}-1}\left|\max_{\theta\in\mathcal{C}_{k}}\langle \phi_{W_k}(s_t,a_t),\theta-\theta^*\rangle\right|.
\end{align*}
Note that
\begin{align*}
\left|\max_{\theta\in\mathcal{C}_{k}}\langle \phi_{W_k}(s_t,a_t),\theta-\theta^*\rangle\right|&\leq \max_{\theta\in\mathcal{C}_{k}}\left|\langle \phi_{W_k}(s_t,a_t),\theta-\theta^*\rangle\right|\leq \max_{\theta\in\mathcal{C}_{k}}\left\|\phi_{W_k}(s_t,a_t)\right\|_{\widehat \Sigma_{t}^{-1}}\|\theta-\theta^*\|_{\widehat\Sigma_{t}}.
\end{align*}
To bound the right-most side, we need the following lemma.
\begin{lemma}\label{lem:determinant1}{\em \citep[Lemma 12]{abbasi2011}}
Let $A,B\in\mathbb{R}^{d\times d}$ be positive semidefinite matrices such that $A\succeq B$. Then for any $x\in\mathbb{R}^d$, we have 
$\|x\|_A\leq \|x\|_B\sqrt{\det(A)/\det(B)}$.
\end{lemma}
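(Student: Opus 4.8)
The plan is to reduce this matrix inequality to a statement about the eigenvalues of a single matrix via a congruence transformation. Since in every place the lemma is invoked the matrix $B$ is a regularized Gram matrix satisfying $B \succeq \lambda I_d \succ 0$, I would assume throughout that $B$ is positive definite (otherwise $\det(B)=0$ and the bound is vacuous), so that $B^{1/2}$ and $B^{-1/2}$ are well defined and invertible.

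First I would substitute $y = B^{1/2}x$, so that $\|x\|_B^2 = x^\top B x = \|y\|_2^2$, and, writing $M := B^{-1/2} A B^{-1/2}$,
\[
\|x\|_A^2 \;=\; x^\top A x \;=\; \bigl(B^{-1/2}y\bigr)^\top A \bigl(B^{-1/2}y\bigr) \;=\; y^\top M y.
\]
The hypothesis $A \succeq B$ gives, after conjugating by $B^{-1/2}$, that $M \succeq B^{-1/2} B B^{-1/2} = I_d$; hence $M$ is positive definite and all of its eigenvalues $\lambda_1 \ge \cdots \ge \lambda_d$ satisfy $\lambda_i \ge 1$.

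Next I would apply the Rayleigh-quotient bound $y^\top M y \le \lambda_{\max}(M)\,\|y\|_2^2 = \lambda_1\|x\|_B^2$, and then observe that, because every $\lambda_i \ge 1$, we have $\lambda_1 \le \prod_{i=1}^d \lambda_i = \det(M) = \det\bigl(B^{-1/2}\bigr)\det(A)\det\bigl(B^{-1/2}\bigr) = \det(A)/\det(B)$. Combining these,
\[
\|x\|_A^2 \;\le\; \lambda_{\max}(M)\,\|x\|_B^2 \;\le\; \frac{\det(A)}{\det(B)}\,\|x\|_B^2,
\]
and taking square roots yields the claimed inequality. There is no genuine obstacle here; the only points requiring a little care are ensuring $B \succ 0$ so that the quantities make sense, and using multiplicativity of the determinant to identify $\det(M)$ with $\det(A)/\det(B)$. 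An equivalent route that avoids $B^{1/2}$ altogether is to work directly with the generalized eigenproblem $Av = \lambda Bv$: its eigenvalues coincide with those of $M$, they are all at least $1$ since $A \succeq B$, the generalized Rayleigh quotient $(x^\top A x)/(x^\top B x)$ is bounded by the largest of them, and their product equals $\det(A)/\det(B)$.
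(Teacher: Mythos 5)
Your proof is correct. The paper does not prove this lemma itself---it is imported directly as Lemma 12 of Abbasi-Yadkori et al.~(2011)---and your argument (conjugate by $B^{-1/2}$, note all eigenvalues of $B^{-1/2}AB^{-1/2}$ are at least $1$, bound the largest eigenvalue by the product of the eigenvalues, i.e., by $\det(A)/\det(B)$) is essentially the standard proof given in that reference, with the positive-definiteness of $B$ correctly handled since the bound is vacuous when $\det(B)=0$.
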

By \Cref{lem:determinant1}, we have $\|\theta-\theta^*\|_{\widehat\Sigma_{t}}\leq 2 \|\theta-\theta^*\|_{\widehat\Sigma_{t_k}}$. Therefore, 
\begin{align*}
\left|\max_{\theta\in\mathcal{C}_{k}}\langle \phi_{W_k}(s_t,a_t),\theta-\theta^*\rangle\right|&\leq \sqrt{2}\max_{\theta\in\mathcal{C}_{k}}\left\|\phi_{W_k}(s_t,a_t)\right\|_{\widehat \Sigma_{t}^{-1}}\left(\|\theta-\widehat\theta_{t_k}\|_{\widehat\Sigma_{t_k}}+\|\widehat\theta_{t_k}-\theta^*\|_{\widehat\Sigma_{t_k}}\right)\leq 2\sqrt{2}\widehat \beta_T\left\|\phi_{W_k}(s_t,a_t)\right\|_{\widehat \Sigma_{t}^{-1}}
\end{align*}
where the second inequality holds because $\theta,\theta^*\in\mathcal{C}_k$.
Meanwhile, we already know that $\left|\langle \phi_{W_k}(s_t,a_t),\theta-\theta^*\rangle\right|\leq H$ for any $\theta$ that induces a probability distribution. Then it follows that
\begin{align*}
\sum_{k=1}^{K_T}\sum_{t=t_k}^{t_{k+1}-1}\left|\max_{\theta\in\mathcal{C}_{k}}\langle \phi_{W_k}(s_t,a_t),\theta-\theta^*\rangle\right|&\leq \sum_{k=1}^{K_T}\sum_{t=t_k}^{t_{k+1}-1}\min\left\{H,2\sqrt{2}\widehat \beta_T\left\|\phi_{W_k}(s_t,a_t)\right\|_{\widehat \Sigma_{t}^{-1}}\right\}\\
&\leq \sum_{k=1}^{K_T}\sum_{t=t_k}^{t_{k+1}-1}2\sqrt{2}\widehat \beta_T\bar\sigma_t\min\left\{1,\left\|\bar\sigma_t^{-1}\phi_{W_k}(s_t,a_t)\right\|_{\widehat \Sigma_{t}^{-1}}\right\}\\
&\leq 4\sqrt{2}H\widehat \beta_T\sum_{k=1}^{K_T}\sum_{t=t_k}^{t_{k+1}-1}\min\left\{1,\left\|\bar\sigma_t^{-1}\phi_{W_k}(s_t,a_t)\right\|_{\widehat \Sigma_{t}^{-1}}\right\}\\
&\leq 4\sqrt{2}H\widehat \beta_T\sqrt{T}\sqrt{\sum_{k=1}^{K_T}\sum_{t=t_k}^{t_{k+1}-1}\min\left\{1,\left\|\bar\sigma_t^{-1}\phi_{W_k}(s_t,a_t)\right\|_{\widehat \Sigma_{t}^{-1}}^2\right\}}
\end{align*}
where the second inequality is from $H\leq 4\widehat \beta_T\bar\sigma_t$, the third inequality holds because we have $[\bar{\mathbb{V}}_tW_k](s_t,a_t)\leq H^2$ and $E_t\leq 2H^2$, which implies that $\bar\sigma_t\leq \sqrt{\max\{H^2/d, 3H^2\}}\leq 2H$, and the last inequality is implied by the Cauchy-Schwarz inequality. To bound the right-most side, we need the following lemma.
\begin{lemma}\label{lem:abbasi lemma 11}{\em \citep[Lemma 11,][]{abbasi2011}.}
Suppose $x_1,\ldots, x_t\in \mathbb{R}^d$ and $\|x_s\|_2\leq L$ for any $1\leq s\leq t$. 
Let ${V}_t = \lambda {I}_d + \sum_{i=1}^{t} x_i x_i^\top$ for some $\lambda > 0$.
Then 
    \begin{align*}
    \sum_{i=1}^t\min\left\{1,\|x_i\|_{V_{i-1}^{-1}}^2\right\} \leq 2d\log\left( 1 + \frac{t  L^2}{d\lambda} \right).
    \end{align*}
\end{lemma}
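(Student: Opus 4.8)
The plan is to run the standard elliptical potential (log-determinant) argument, reducing the sum to a ratio of determinants and then controlling that ratio via a trace bound. First I would write $V_i = V_{i-1} + x_i x_i^\top$ with $V_0 = \lambda I_d$, and apply the matrix determinant lemma $\det(A + uv^\top) = \det(A)(1 + v^\top A^{-1}u)$ with $A = V_{i-1}$ and $u = v = x_i$, which gives $\det(V_i) = \det(V_{i-1})\bigl(1 + \|x_i\|_{V_{i-1}^{-1}}^2\bigr)$ for each $i$. Telescoping this identity over $i = 1,\dots,t$ yields $\det(V_t) = \lambda^d \prod_{i=1}^t \bigl(1 + \|x_i\|_{V_{i-1}^{-1}}^2\bigr)$, so that $\sum_{i=1}^t \log\bigl(1 + \|x_i\|_{V_{i-1}^{-1}}^2\bigr) = \log\bigl(\det(V_t)/\lambda^d\bigr)$.

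Next I would record the elementary scalar inequality $\min\{1,z\} \le 2\log(1+z)$, valid for every $z \ge 0$: for $z \in [0,1]$, concavity of $\log(1+\cdot)$ gives $\log(1+z) \ge z\log 2 \ge z/2$, while for $z \ge 1$ we have $2\log(1+z) \ge 2\log 2 \ge 1$. Applying this termwise with $z = \|x_i\|_{V_{i-1}^{-1}}^2$ and invoking the telescoped identity, $\sum_{i=1}^t \min\bigl\{1, \|x_i\|_{V_{i-1}^{-1}}^2\bigr\} \le 2\log\bigl(\det(V_t)/\lambda^d\bigr)$. Finally, since $\|x_s\|_2 \le L$ for all $s$, one has $\operatorname{tr}(V_t) \le d\lambda + tL^2$, and AM--GM applied to the eigenvalues of $V_t$ gives $\det(V_t) \le (\lambda + tL^2/d)^d$; this is exactly \Cref{lem:determinant-trace}. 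Substituting this bound, $2\log\bigl((\lambda + tL^2/d)^d/\lambda^d\bigr) = 2d\log\bigl(1 + tL^2/(d\lambda)\bigr)$, which is the claim.

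I do not expect a genuine obstacle here: both the determinant recursion and the inequality $\min\{1,z\} \le 2\log(1+z)$ are routine, and the trace bound is already available as \Cref{lem:determinant-trace}. The only point requiring a little care is the index bookkeeping --- the $i$-th summand involves $V_{i-1}^{-1}$, the Gram matrix \emph{before} the $i$-th update, which is precisely what makes the product in the determinant identity telescope correctly; using $V_i^{-1}$ in the summand instead would misalign the factors and break the argument.
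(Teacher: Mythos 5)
Your proof is correct and is the standard elliptical-potential argument; the paper itself does not prove this lemma but imports it verbatim from \citet{abbasi2011}, whose proof proceeds exactly as you describe (determinant recursion via the matrix determinant lemma, the bound $\min\{1,z\}\leq 2\log(1+z)$, and the trace/AM--GM determinant bound of \Cref{lem:determinant-trace}). No gaps.
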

Since $\left\|\bar\sigma_t^{-1}\phi_{W_k}(s_t,a_t)\right\|_{2}\leq \sqrt{d}$, \Cref{lem:abbasi lemma 11} implies that
\begin{equation}\label{eq:vectorsum}
\sum_{k=1}^{K_T}\sum_{t=t_k}^{t_{k+1}-1}\min\left\{1,\left\|\bar\sigma_t^{-1}\phi_{W_k}(s_t,a_t)\right\|_{\widehat \Sigma_{t}^{-1}}^2\right\}\leq 2d\log (1+T/\lambda),
\end{equation}
implying in turn that
$$(ii)\leq 4HT + 16H^2 \widehat \beta_T\sqrt{dT\log(1+T/\lambda)}.$$
Consequently, 
$$\sum_{k=1}^{K_T}\sum_{t=t_k}^{t_{k+1}-1}[\mathbb{V}W_k](s_t,a_t)\leq (i)+(ii)+ H^2K_T\leq 4HT +16H^2 \widehat \beta_T\sqrt{dT\log(1+T/\lambda)}+H^2K_T.$$
Then it follows from our choice of $\widehat \beta_T$ and \Cref{lem:num-episodes} that
$$\sum_{k=1}^{K_T}\sum_{t=t_k}^{t_{k+1}-1}[\mathbb{V}W_k](s_t,a_t)=\widetilde{\mathcal{O}}\left(HT+H^2d\sqrt{T}\right)$$
where $\widetilde{\mathcal{O}}(\cdot)$ hides logarithmic factors in $T/(\delta\lambda)$.

\subsection{Proof of \Cref{lem:I4-errors}: Cumulative Error in Estimating the Variance}

Note that
\begin{equation*}
\begin{aligned}
\sum_{t=1}^TE_t=\underbrace{\sum_{k=1}^{K_T}\sum_{t=t_k}^{t_{k+1}-1}  \min\left\{H^2, 2H\check{\beta}_{t}\|\phi_{W_k}(s_t,a_t)\|_{\widehat{\Sigma}_{t}^{-1}}\right\}}_{(I)} + \underbrace{\sum_{k=1}^{K_T}\sum_{t=t_k}^{t_{k+1}-1}\min\left\{H^2, \widetilde{\beta}_{t} \|\phi_{W_k^2}(s_t,a_t)\|_{\widetilde{\Sigma}_t^{-1}} \right\}}_{(II)}.
\end{aligned}
\end{equation*}
Term $(I)$ can be bounded as follows:
\begin{align*}
(I)&\leq\sum_{k=1}^{K_T}\sum_{t=t_k}^{t_{k+1}-1}  2H\check\beta_t\bar\sigma_t\min\left\{1, \|\bar\sigma_t^{-1}\phi_{W_k}(s_t,a_t)\|_{\widehat{\Sigma}_{t}^{-1}}\right\}\\
&\leq4H^2\check\beta_T \sum_{k=1}^{K_T}\sum_{t=t_k}^{t_{k+1}-1}  \min\left\{1, \|\bar\sigma_t^{-1}\phi_{W_k}(s_t,a_t)\|_{\widehat{\Sigma}_{t}^{-1}}\right\}\\
&\leq 4H^2\check\beta_T\sqrt{T}\sqrt{\sum_{k=1}^{K_T}\sum_{t=t_k}^{t_{k+1}-1}\min\left\{1,\left\|\bar\sigma_t^{-1}\phi_{W_k}(s_t,a_t)\right\|_{\widehat \Sigma_{t}^{-1}}^2\right\}}\\
&\leq 8H^2\check\beta_T\sqrt{dT\log(1+T/\lambda)}
\end{align*}
where the first inequality is due to $H\leq 2\check\beta_t \bar\sigma_t$, the second inequality holds because  we have $[\bar{\mathbb{V}}_tW_k](s_t,a_t)\leq H^2$ and $E_t\leq 2H^2$, which implies that $\bar\sigma_t\leq \sqrt{\max\{H^2/d, 3H^2\}}\leq 2H$, the third inequality is due to the Cauchy-Schwarz inequality, and the last one follows from~\eqref{eq:vectorsum}.

Term $(II)$ can be bounded as follows:
\begin{align*}
(II)&\leq\widetilde\beta_T\sum_{k=1}^{K_T}\sum_{t=t_k}^{t_{k+1}-1}  \min\left\{1, \|\phi_{W_k^2}(s_t,a_t)\|_{\widetilde{\Sigma}_{t}^{-1}}\right\}\\
&\leq \widetilde\beta_T\sqrt{T}\sqrt{\sum_{k=1}^{K_T}\sum_{t=t_k}^{t_{k+1}-1}  \min\left\{1, \|\phi_{W_k^2}(s_t,a_t)\|_{\widetilde{\Sigma}_{t}^{-1}}^2\right\}}\\
&\leq 2\widetilde\beta_T\sqrt{dT\log(1+TH^2/\lambda)}
\end{align*}
where the first inequality holds because $H^2\leq \widetilde\beta_t\leq \widetilde\beta_T$, the second inequality is by the Cauchy-Schwarz inequality, and the third one follows from \Cref{lem:abbasi lemma 11}.

Therefore, it holds that
$$\sum_{t=1}^T E_t \leq 8H^2\check\beta_T\sqrt{dT\log(1+T/\lambda)}+2\widetilde\beta_T\sqrt{dT\log(1+TH^2/\lambda)}.$$
Due to our choice of $\check\beta_T$ and $\widetilde\beta_T$, we have
$$\sum_{t=1}^T E_t=\widetilde{\mathcal{O}}\left(d^{3/2}H^2\sqrt{T}\right)$$
where $\widetilde{\mathcal{O}}(\cdot)$ hides logarithmic factors in $TH/\lambda$, as required.

\subsection{Completing the Proof of \Cref{thm:ub}: Regret Upper Bound of \texttt{UCLK-C}}\label{sec:complete-ub}

We first provide an upper bound on $I_4$.
Let $\theta\in\mathcal{C}_k$. Since $\theta$ induces a probability distribution $\mathbb{P}_\theta$, we have
\begin{align*}
\langle \phi_{V_k}(s_t,a_t),\theta-\theta^*\rangle&=\mathbb{E}_{s'\sim\mathbb{P}_\theta(\cdot\mid s_t,a_t)}[V_k(s')]- \mathbb{E}_{s'\sim\mathbb{P}(\cdot\mid s_t,a_t)}[V_k(s')]\\
&=\mathbb{E}_{s'\sim\mathbb{P}_\theta(\cdot\mid s_t,a_t)}[V_k(s')]-\min_{s'\in\mathcal{S}}V_k(s')- \mathbb{E}_{s'\sim\mathbb{P}(\cdot\mid s_t,a_t)}[V_k(s')]+\min_{s'\in\mathcal{S}}V_k(s')\\
&=\mathbb{E}_{s'\sim\mathbb{P}_\theta(\cdot\mid s_t,a_t)}[W_k(s')]- \mathbb{E}_{s'\sim\mathbb{P}(\cdot\mid s_t,a_t)}[W_k(s')]\\
&=\langle \phi_{W_k}(s_t,a_t),\theta-\theta^*\rangle.
\end{align*}
Moreover, assuming that $\theta^*\in \mathcal{C}_k$ based on \Cref{lem:confidence-ellipsoid}, we have
\begin{align*}
    \langle \phi_{W_k}(s_t,a_t),\theta-\theta^*\rangle
    &\leq \left\|\phi_{W_k}(s_t,a_t)\right\|_{\widehat \Sigma_{t}^{-1}}\left(\|\theta-\widehat \theta_{t_k}\|_{\widehat\Sigma_{t}}+\|\widehat \theta_{t_k}-\theta^*\|_{\widehat\Sigma_{t}}\right)\\
    &\leq 2\left\|\phi_{W_k}(s_t,a_t)\right\|_{\widehat \Sigma_{t}^{-1}}\left(\|\theta-\widehat \theta_{t_k}\|_{\widehat\Sigma_{t_k}}+\|\widehat \theta_{t_k}-\theta^*\|_{\widehat\Sigma_{t_k}}\right)\\
    &\leq4\widehat \beta_T\bar\sigma_t\left\|\bar\sigma_t^{-1}\phi_{W_k}(s_t,a_t)\right\|_{\widehat \Sigma_{t}^{-1}}
\end{align*}
where the first inequality is from the Cauchy-Schwarz inequality, the second one holds by \Cref{lem:determinant1}, and the third one follows from $\theta^*,\theta\in\mathcal{C}_k$ and $\widehat\beta_{t_k}\leq \widehat \beta_T$. We also know that 
$$\langle \phi_{W_k}(s_t,a_t),\theta-\theta^*\rangle=\mathbb{E}_{s'\sim\mathbb{P}_\theta(\cdot\mid s_t,a_t)}[W_k(s')]- \mathbb{E}_{s'\sim\mathbb{P}(\cdot\mid s_t,a_t)}[W_k(s')]\in[-H,H]$$
because $W_k(s)\in[0,H]$ for any $s\in\mathcal{S}$. Then it follows that
\begin{align*}
I_4&\leq \sum_{k=1}^{K_T}\sum_{t=t_k}^{t_{k+1}-1}\min\left\{H, 4\widehat \beta_T\bar\sigma_t\left\|\bar\sigma_t^{-1}\phi_{W_k}(s_t,a_t)\right\|_{\widehat \Sigma_{t}^{-1}}\right\}\\
&\leq 4\widehat\beta_T\sum_{k=1}^{K_T}\sum_{t=t_k}^{t_{k+1}-1}\bar\sigma_t\min\left\{1, \left\|\bar\sigma_t^{-1}\phi_{W_k}(s_t,a_t)\right\|_{\widehat \Sigma_{t}^{-1}}\right\}\\
&\leq 4\widehat \beta_T\underbrace{\sqrt{\sum_{t=1}^T\bar\sigma_t^2}}_{J_1}\underbrace{\sqrt{\sum_{k=1}^{K_T}\sum_{t=t_k}^{t_{k+1}-1}\min\left\{1, \left\|\bar\sigma_t^{-1}\phi_{W_k}(s_t,a_t)\right\|_{\widehat \Sigma_{t}^{-1}}^2\right\}}}_{J_2}
\end{align*}
where the second inequality holds because $H\leq 4\beta_T\bar\sigma_t$ and the third one is by the Cauchy-Schwarz inequality.

Note that by the Azuma-Hoeffding inequality (\Cref{lemma:azuma-hoeffding}), 
\begin{equation}\label{martingale2}\sum_{k=1}^{K_T}\sum_{t=t_k}^{t_{k+1}-1}\left([\mathbb{P}W_k^2](s_t,a_t)-W_k^2(s_{t+1})\right)\leq H^2\sqrt{2T\log(1/\delta)}
\end{equation}
holds with probability at least $1-\delta$.
By taking the union bound, all of the statement of \Cref{lem:confidence-ellipsoid}, the statement of \Cref{lem:regret-term3}, and~\eqref{martingale2} hold with probability at least $1-5\delta$. 

Now we suppose that all of the statement of \Cref{lem:confidence-ellipsoid}, the statement of \Cref{lem:regret-term3}, and~\eqref{martingale2} hold. For the term $J_1$, note that
\begin{align*}
  \sum_{t=1}^T\bar\sigma_t^2&= \sum_{k=1}^{K_T}  \sum_{t=t_k}^{t_{k+1}-1}\max\left\{H^2/d, [\bar{\mathbb{V}}_tW_k](s_t, a_t) + E_t\right\}\\
  &\leq \sum_{k=1}^{K_T}  \sum_{t=t_k}^{t_{k+1}-1}\max\left\{H^2/d, [{\mathbb{V}} W_k](s_t, a_t) + 2E_t\right\}\\
  &\leq TH^2/d + \sum_{k=1}^{K_T}  \sum_{t=t_k}^{t_{k+1}-1}[{\mathbb{V}} W_k](s_t, a_t)  + 2\sum_{t=1}^T E_t
\end{align*}
where the first inequality is from \Cref{lem:confidence-ellipsoid} and the second inequality holds because both $H^2/d$ and $[{\mathbb{V}} W_k](s_t, a_t) + 2E_t$ are nonnegative. Then we obtain from \Cref{lem:I4-variance,lem:I4-errors} that 
$$ \sum_{t=1}^T\bar\sigma_t^2 =\widetilde{\mathcal{O}}\left(H^2T/d + HT+H^2d\sqrt{T}+ d^{3/2}H^2\sqrt{T}\right)$$
where $\widetilde{\mathcal{O}}(\cdot)$ hides logarithmic factors in $TH/(\delta\lambda)$. Moreover, we know that $J_2\leq \sqrt{2d\log(1+T/\lambda)}$ by~\eqref{eq:vectorsum}.
Therefore, we finally deduce that
$$I_4 =\widetilde{\mathcal{O}}\left(\sqrt{d}\cdot \sqrt{H^2T/d + HT+H^2d\sqrt{T}+ d^{3/2}H^2\sqrt{T}}\cdot \sqrt{d}\right)= \widetilde{\mathcal{O}}\left(H\sqrt{dT} + d\sqrt{HT} + d^{7/4}HT^{1/4}\right)$$
where $\widetilde{\mathcal{O}}(\cdot)$ hides logarithmic factors in $TH/(\delta\lambda)$. 

Recall that when $\lambda = 1/B_\theta^2$ we have
\begin{align*}
    I_1&\leq d\sqrt{\mathrm{sp}(v^*)T} = d\sqrt{HT},\\
    I_2&= \widetilde{\mathcal{O}}\left(d\sqrt{HT}\right),\\
    I_3&\leq H\sqrt{2d\log(1/\delta)}.
\end{align*}
Lastly, setting $\gamma$ and $N$ as
$$\gamma = 1- \sqrt{\frac{d}{HT}}\quad\text{and}\quad N = \frac{1}{1-\gamma}\log\left(\frac{\sqrt{T}}{d\sqrt{H}}\right)=\sqrt{\frac{HT}{d}}\log\left(\frac{\sqrt{T}}{d\sqrt{H}}\right),$$ 
we have $$N\geq \frac{\log\left({\sqrt{T}}/{d\sqrt{H}}\right)}{\log(1/\gamma)},$$
in which case we get $T \gamma^N\leq d\sqrt{HT}$.

Therefore, we conclude that with probability at least $1-5\delta$
$$\regret(T) = \widetilde{\mathcal{O}}\left(H\sqrt{dT} + d\sqrt{HT} + d^{7/4}HT^{1/4}\right)$$
where $\widetilde{\mathcal{O}}(\cdot)$ hides logarithmic factors in $THB_\theta/\delta$.

\section{Experiments}

In this section, we empirically compare the performance of \texttt{UCLK-C} and \texttt{UCRL2-VTR}, which uses Bernstein-type exploration \citep{yuewu2022}. We run simulations on the MDP instance introduced in \Cref{sec:lb}. Given that both algorithms achieve minimax optimality for the MDP instance as it is communicating, our comparison provides an intuitive understanding of how controlling the span of the value function can improve regret performance. The next paragraph details the experimental setup and results.

\begin{figure}[h]
    \centering
    \includegraphics[width=0.5\linewidth]{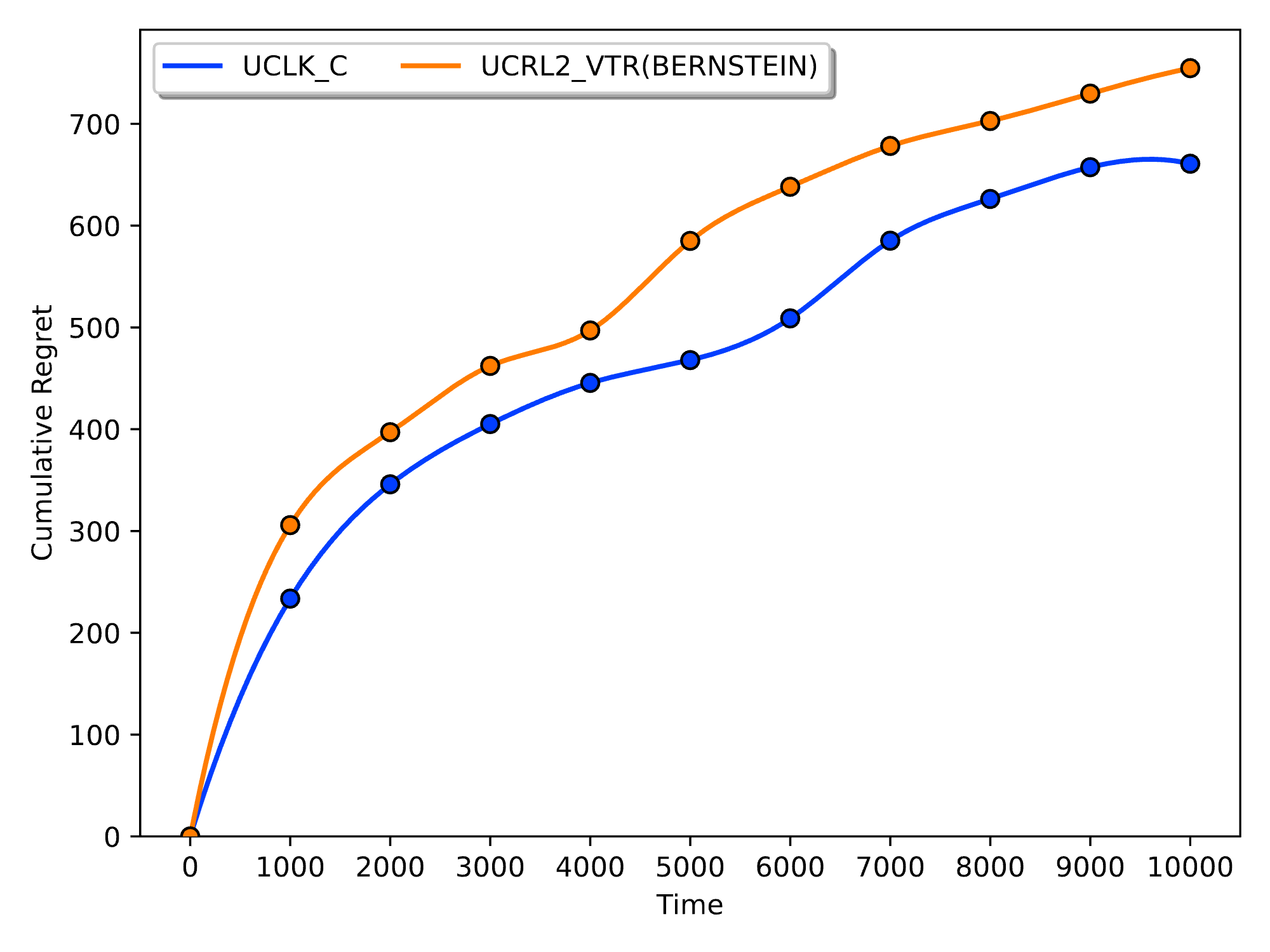}
    \caption{Regret comparison of \texttt{UCLK-C} and \texttt{UCRL2-VTR} (Bernstein-type), $\delta=1/120$ $(\Leftrightarrow D=120)$}
    \label{fig:regret_uclk-c_ucrl2-vtr_d_120}
\end{figure}

We choose $d=8$, thereby the resulting MDP consists of two states ($|\mathcal{S}|=2$) and 128 actions ($|\mathcal{A}|=2^{d-1}=128$). Furthermore, we set $\Delta = (d-1)/(15\sqrt{(2T\log2)/(5\delta)})$, scaling the original quantity introduced in \Cref{sec:lb} by a factor of $3$ in our implementation to ensure effective learning procedure. Specifically, using the original formula $(\Delta = (d-1)/(45\sqrt{(2T\log2)/(5\delta)}))$ results in a very small value, making the transition probabilities for the best and worst actions nearly identical, which hinders effective learning. One possible solution would be then to increase the value of $d$, but due to computing resource constraints, we instead scaled the constant factor. This adjustment does not affect the theoretical guarantee of the regret lower bound (see the proof of \citep[Theorem 5.5,][]{yuewu2022}) but allows for more practical implementation. %

Under this setting, we compared the two algorithms based on the average regret with respect to the best action, calculated over 10 realizations of the experiment conducted on a time horizon in units of thousands for each algorithm. The hyperparameters for each algorithm were properly tuned, and the averaged regret for the instance with $\delta = 1/120$ is plotted in \Cref{fig:regret_uclk-c_ucrl2-vtr_d_120}. Our proposed algorithm, \texttt{UCLK-C}, outperforms \texttt{UCRL2-VTR}, demonstrating the effectiveness of controlling the span for improving regret performance.

\end{document}